\newcommand{\indep}{\perp \!\!\! \perp}
\theoremstyle{plain}
\newtheorem{theorem}{Theorem}[section]
\newtheorem{proposition}[theorem]{Proposition}
\newtheorem{lemma}[theorem]{Lemma}
\theoremstyle{definition}
\newtheorem{assumption}[theorem]{Assumption}
\theoremstyle{remark}
\title{Causal Contrastive Learning \\ 
            for Counterfactual Regression Over Time}
\author[1,2]{Mouad El Bouchattaoui
\thanks{\texttt{mouad.el-bouchattaoui@centralesupelec.fr}} % universite-paris-saclay.fr
}%\thanks{mouad.el-bouchattaoui@centralesupelec.fr} \texttt{mouad.el-bouchattaoui@centralesupelec.fr} \texttt{mouad.el-bouchattaoui@universite-paris-saclay.fr}
\author[1]{Myriam Tami}
\author[2]{Benoit Lepetit}
\author[1]{Paul-Henry Cournède}
\affil[1]{Paris-Saclay University, CentraleSupélec, MICS Lab, Gif-sur-Yvette, France}
\affil[2]{Saint-Gobain, Paris, France}
\begin{document}
%\addbibresource{refs.bib}

\maketitle

\begin{abstract}
Estimating treatment effects over time holds significance in various domains, including precision medicine, epidemiology, economy, and marketing. This paper introduces a unique approach to counterfactual regression over time, emphasizing long-term predictions. Distinguishing itself from existing models like Causal Transformer, our approach highlights the efficacy of employing RNNs for long-term forecasting, complemented by Contrastive Predictive Coding (CPC) and Information Maximization (InfoMax). Emphasizing efficiency, we avoid the need for computationally expensive transformers. Leveraging CPC, our method captures long-term dependencies in the presence of time-varying confounders. Notably, recent models have disregarded the importance of invertible representation, compromising identification assumptions. To remedy this, we employ the InfoMax principle, maximizing a lower bound of mutual information between sequence data and its representation. Our method achieves state-of-the-art counterfactual estimation results using both synthetic and real-world data, marking the pioneering incorporation of Contrastive Predictive Encoding in causal inference.
% TLDR: We study the problem of counterfactual regression over large time horizons leveraging contrastive learning.
\end{abstract}
\section{Introduction}
\label{sect: intro}
It's vital in real-world applications to estimate potential responses, i.e., responses under hypothetical treatment strategies. Individuals show diverse responses to the same treatment, emphasizing the need to quantify individual response trajectories. This enables personalized interventions, enhancing decision-making efficacy. In medical contexts, precise response estimation enables tailored treatments for patients \citep{atan2018Deep-Treat, shalit2020ITPolicyClinical, mueller2023personalized}. This paper focuses on \emph{counterfactual regression over time}, estimating responses under hypothetical treatment plans based on individual records, including past covariates, responses, and treatment sequences up to the current prediction time \citep{robins2000MSM, robins-time-varying-exposures}. In addressing the challenges of this time-varying setting, we tackle:
\begin{inparaenum}[(1)]
\item \textbf{Time-dependent confounding }\citep{platt2009time}: confounders influenced by past treatment, impacting subsequent treatments and responses.
\item \textbf{Selection bias:} imbalanced covariate distributions across treatment regimes in observational data, requiring time-aware handling beyond methods in static settings \citep{robins2000MSM, schisterman2009overadjustment, lim2018RMSM}.
\item \textbf{Long-term dependencies}: enduring interdependencies among covariates, treatments, and responses, enabling long-range interactions \citep{choi2016retain, pham2017predicting}. 
\end{inparaenum}

% On long-term dependencies and our point of view
Recent advancements in neural networks, such as Recurrent Marginal Structural Networks (RMSNs) \citep{lim2018forecasting}, Counterfactual Recurrent Networks (CRN) \citep{bica2020estimatingCRN}, and G-Net \cite{Li2021GNetAR}, have tackled these causal inference challenges. However, their reliance on RNNs limits their ability to capture long-term dependencies. Recent studies \cite{Melnychuk2022CausalTF} propose integrating transformers to better represent temporal dynamics. Rather than viewing this as a limitation of RNNs, we see it as an opportunity to emphasize their strengths. We design specific architectures for counterfactual regression over large horizons, avoiding complex, hard-to-interpret models like transformers. Our approach leverages the computational efficiency of RNNs, incorporating Contrastive Predictive Coding (CPC) \citep{oord2018representation, henaff2020data} for learning data history representations. This enhances model performance while maintaining efficiency, offering a compelling alternative to transformer-based approaches. Furthermore, we usually formulate identification assumptions of counterfactual responses over the original process history space (Appendix \ref{subsect: identif_ass_CCPC}). However, these assumptions may not hold in the representation space for arbitrary functions. Since identification often involves conditional independence, it applies when using an invertible representation function. Current models for time-varying settings \citep{lim2018forecasting, bica2020estimatingCRN, Melnychuk2022CausalTF} do not enforce representation invertibility. To address this, instead of adding complexity with a decoder, we \emph{implicitly} push the history process to be "reconstructable" from the encoded representation by maximizing Mutual Information (MI) between representation and input, following the InfoMax principle \cite{linsker1988selforga}, akin to Deep InfoMax \citep{DeepInfoMax2019}.

\section{Contributions}
% inspiration from SSL, key contributions
Our approach is inspired by self-supervised learning using MI objectives \citep{DeepInfoMax2019}. We aim to maximize MI between different views of the same input, introducing counterfactual regression over time through CPC to capture long-term dependencies. Additionally, we propose a tractable lower bound to the original InfoMax objectives for more efficient representations. This is challenging due to the sequential nature and high dimensionality, marking a novelty. We demonstrate the importance of regularization terms via an ablation study. Previous work leveraging contrastive learning for causal inference applies only to the static setting with no theoretical grounding \citep{infomax_static2022}. To our knowledge, we frame for the first time the representation balancing problem from an information-theoretic perspective and show that the suggested adversarial game (Theorem \ref{thm: blancing_iclub}) yields theoretically balanced representations using the Contrastive Log-ratio Upper Bound (CLUB) of MI, computed efficiently. Key innovations of our Causal CPC model include:
\begin{inparaenum}[(1)]
    \item We showcase the capability of leveraging CPC to capture long-term dependencies in the process history using InfoNCE \citep{gutmann2010nce, gutmann2012nce, oord2018representation}, an unexplored area in counterfactual regression over time where its integration into process history modeling is not straightforward in causality.
    \item We enforce input reconstruction from representation by contrasting the representation with its input. Such quality is generally overlooked in baselines, yet it ensures that confounding information is retained, preventing biased counterfactual estimation.
    \item Applying InfoMax to process history while respecting its dynamic nature is challenging. We provide a tractable lower bound to the original InfoMax problem, also bringing theoretical insights on the bound's tightness.    
    \item We suggest minimizing an upper bound on MI between representation and treatment to make the representation non-predictive of the treatment, using the CLUB of MI \citep{cheng2020club}. This novel information-theoretic perspective results in a theoretically balanced representation across all treatment regimes.    
    \item By using a simple Gated Recurrent Unit (GRU) layer \citep{cho2014GRU} as the model backbone, we demonstrate that well-designed regularizations can outperform more complex models like transformers.
\end{inparaenum}
Finally, our experiments on synthetic data (cancer simulation \citep{PK-PD}) and semi-synthetic data based on real-world datasets (MIMIC-III \citep{johnson2016mimic}) show the superiority of Causal CPC at accurately estimating counterfactual responses.
% Key innovations of our Causal CPC model include: \begin{inparaenum}[(1)] \item Leveraging CPC to capture long-term dependencies in process history using InfoNCE \citep{gutmann2010nce, gutmann2012nce, oord2018representation}, an unexplored area in counterfactual regression over time. \item Enforcing input reconstruction from the representation by contrasting it with the input, ensuring confounding information is retained and preventing biased counterfactual estimation. \item Applying InfoMax to dynamic process history with a tractable lower bound, offering theoretical insights on bound tightness. \item Minimizing an upper bound over MI between representation and treatment using the CLUB of MI \citep{cheng2020club}, ensuring theoretically balanced representation across treatment regimes. \item Using a simple GRU layer \citep{cho2014GRU} as the backbone, we show that well-designed regularizations can outperform complex models like transformers. \end{inparaenum} Our experiments on synthetic data (cancer simulation \cite{PK-PD}) and semi-synthetic data (MIMIC-III \cite{johnson2016mimic}) demonstrate the superiority of Causal CPC in accurately estimating counterfactual responses.

\section{Related Work}
\label{sect: related_work}
\paragraph{Models for Counterfactual Regression Through Time} Traditionally, causal inference addresses time-varying confounders using Marginal Structural Models (MSMs) \citep{robins2000MSM}, which rely on inverse probability of treatment weighting \citep{robins-time-varying-exposures}. However, MSMs can yield high variance estimates, especially with extreme values, and are limited to pooled logistic regression, impractical for high-dimensional, dynamic data. RMSNs \citep{lim2018RMSM} enhance MSMs by integrating RNNs for propensity and outcome modeling. CRN \citep{bica2020estimatingCRN} employs adversarial domain training with a gradient reversal layer \citep{ganin2015unsupervised} to establish a treatment-invariant representation space, reducing bias induced by time-varying confounders. Similarly, G-Net \citep{Li2021GNetAR} combines g-computation and RNNs to predict counterfactuals in dynamic treatment regimes. Causal Transformer (CT) \citep{Melnychuk2022CausalTF} uses transformers to estimate counterfactuals over time and handles selection bias by learning a treatment-invariant representation via Counterfactual Domain Confusion loss (CDC) \citep{tzeng2015simultaneous}. These models, like ours, assume \emph{sequential ignorability} \cite{robins-time-varying-exposures}, in contrast to a body of work which does not fully verify our assumptions \citep{lopezmultipletreatments, Soleimani2017TreatmentResponseMF, schulam2017reliable, ranganath2018multiple, wang2019blessings,veitch2020adapting, Bica2020TimeSDecounf, hatt2021sequential, LuChengCMAproxynConf20121, kuzmanovic2021deconfounding,qian2021synctwin, seedat2022continuous, de2022predicting, cao2023estimating, hizli2023causal, JiangCF-GODE2023, berrevoets2021disentangled, chen2023multi, wu2023counterfactual, frauen2023estimating}, which we discuss in detail in Appendix \ref{appendix: Methods_viol_assp}. In contrast, we introduce a contrastive learning approach to capture long-term dependencies while maintaining a simple model and ensuring high computational efficiency in both training and prediction. This demonstrates that simple models with well-designed regularization terms can still achieve high prediction quality. Additionally, previous works \citep{robins-time-varying-exposures, robins2000MSM, lim2018RMSM, Li2021GNetAR, Melnychuk2022CausalTF} did not consider the role of invertible representation in improving counterfactual regression. Here, we introduce an InfoMax regularization term to make our encoder easier to invert. Appendix \ref{subsect: extended_work_cf} provides a detailed overview of counterfactual regression models.

\paragraph{InfoMax Principle} The InfoMax principle aims to learn a representation that maximizes MI with its input \citep{linsker1988selforga, bell1995information}. Estimating MI for high-dimensional data is challenging, often addressed by maximizing a simple and tractable lower bound on MI \citep{DeepInfoMax2019, VarBoundsMI2019, liang2023factorizedCL}. Another approach involves maximizing MI between two lower-dimensional representations of different views of the same input \citep{bachman2019learning, henaff2020data, tian2020contrastive, Tschannen2020OnMIMAX}, offering a more practical solution. We adopt this strategy by dividing our process history into two views, past and future, and maximizing a tractable lower bound on MI between them. This encourages a "reconstructable" representation of the process history. To our knowledge, the only work applying an InfoMax approach to counterfactual regression, albeit in static settings, is \cite{infomax_static2022}. They propose maximizing MI between an individual's representation and a global representation, aggregating information from all individuals into a single vector. However, the global representation lacks clarity and interpretability, raising uncertainties about its theoretical underpinnings in capturing confounders. Furthermore, there's a lack of theoretical analysis on how minimizing MI between individual and treatment-predictive representations could yield a treatment-invariant representation. As a novelty, we extend the InfoMax principle to longitudinal data, providing a theoretical guarantee of learning balanced representations. Appendix \ref{subsect: extended_work_mi_ss} discusses self-supervision and MI, with all proofs in Appendix \ref{sect: proofs}.

\section{Problem Formulation}
\label{sect: prob_formulation} 
\begin{wrapfigure}[12]{r}{0.35\textwidth}
     \centering
    \includegraphics[width = 0.33\textwidth, height = 0.13\textheight]{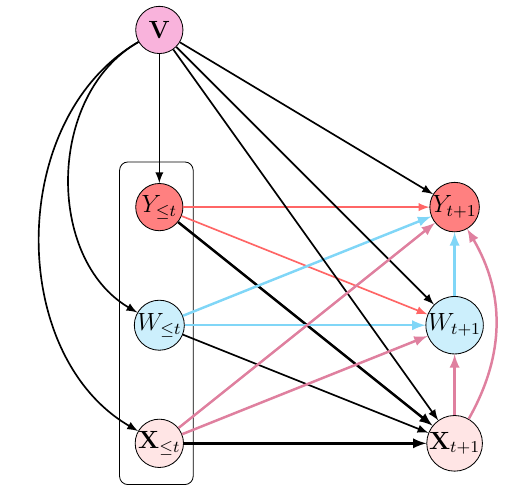}
    \caption{Causal graph over $\mathbf{H}_{t+1}$}
  \label{fig: causal_graph}
\end{wrapfigure}
\paragraph{Setup} In the framework of Potential Outcomes (PO) \citep{rubin2005causal}, and following \cite{robins2009estimation}, we track a cohort of individuals (units) $i \in \{1,2,\dots, N\}$ over $t_{max}$ time steps. At each time $t \in \{1,2,\dots,t_{max}\}$, we observe the following:
\begin{inparaenum}[(1)]
    \item \textbf{Discrete treatment} $W_{it} \in \mathcal{W} = \{0,1, \dots, K-1\}$, e.g., in medical contexts, $W_{it}$ may represent treatments like radiotherapy or chemotherapy.
    \item \textbf{Outcome of interest} $Y_{it} \in \mathcal{Y} \subset \mathbb{R}$, such as tumor volume.
    \item \textbf{Time-varying context} $\mathbf{X}_{it} \in \mathcal{X} \subset \mathbb{R}^{d_x}$, containing information about the individual that may influence treatment decisions and outcomes. $\mathbf{X}_{it}$ is a $d_x$-dimensional vector of confounders, such as health records or clinical measurements.
    \item \textbf{Static confounders} $\mathbf{V} \in \mathcal{V} \subset \mathbb{R}^{d_v}$, such as gender, which remain constant over time.
    \item \textbf{Partially observed potential outcomes} $Y_{it}(\omega_{i, \leq t})$, representing the outcomes that \emph{would have been} observed for individual $i$ at time $t$ under treatment sequence $\omega_{i, \leq t} = (\omega_{i,1}, \dots, \omega_{i,t}) \in \mathcal{W}^t$.
\end{inparaenum}
We define the history process up to time $t+1$ as $\mathbf{H}_{t+1} = [\mathbf{V}, \mathbf{X}_{\leq t+1}, W_{\leq t}, Y_{\leq t}]$, capturing all information prior to the assignment of treatment $W_{t+1}$. This history is illustrated in the causal graph shown in Figure \ref{fig: causal_graph}.

\paragraph{Goal} Given a training dataset $\{\mathbf{H}_{i, t+1}, i=1,\dots, N \}$ sampled from the empirical distribution $\mathbb{P}_{\mathbf{H}_{t+1}}$, we address the following causal inference problem: \emph{Given a history process $\mathbf{H}_{t+1}$, how can we efficiently estimate counterfactual responses up to time $t + \tau$ (where $\tau \geq 1$ is the prediction horizon) for a potential treatment sequence $\omega_{t+1:t+\tau} = (\omega_{t + 1}, \dots, \omega_{t + \tau})$?} The goal is to estimate the causal quantity: \(\mathbb{E}(Y_{t+\tau}(\omega_{t+1:t+\tau})\mid \mathbf{H}_{t+1}),\) i.e., the expected outcome at time $t + \tau$, given the history $\mathbf{H}_{t+1}$ and a sequence of treatments $\omega_{t+1:t+\tau}$. We identify this causal quantity from observational data using the assumption of sequential ignorability \citep{robins-time-varying-exposures, lim2018forecasting, Li2021GNetAR, Melnychuk2022CausalTF}, which is implicitly assumed in Figure \ref{fig: causal_graph} and explicitly discussed in Appendix \ref{subsect: identif_ass_CCPC}. This allows us to express the counterfactual as:
\[
\mathbb{E}(Y_{t+\tau}(\omega_{t+1:t+\tau})\mid \mathbf{H}_{t+1}) = \mathbb{E}\left(Y_{t+\tau}\mid \mathbf{H}_{t+1}, W_{t+1:t+\tau} = \omega_{t+1:t+\tau}\right).
\]
\section{Causal CPC}
\label{sect: causal_cpc_def}
\subsection{Representation Learning}
\paragraph{Contrastive Predictive Coding} We employ contrastive learning to efficiently represent the process history $\mathbf{H}_t$. Causal forecasting over multiple time horizons requires representations that capture variability in $\mathbf{H}_t$. For short-term predictions, local features and smooth signal variations are critical, while long-term predictions rely on capturing global structures and long-term dependencies, as shared information between history and future points diminishes.

To achieve this, we learn a representation of $\mathbf{H}_t$ that predicts future components over multiple time steps. For each horizon $j = 1, \dots, \tau$, future components are defined as $\mathbf{U}_{t+j} = [\mathbf{V}, \mathbf{X}_{t+j}, W_{t+j-1}, Y_{t+j-1}] \in \mathcal{U} \subset \mathbb{R}^{(d_v+d_x+K+1)}$. First, local features are extracted by encoding $[\mathbf{V}, \mathbf{X}_{t}, W_{t-1}, Y_{t-1}]$ into $\mathbf{Z}_t = \Phi_{\theta_1}([\mathbf{V}, \mathbf{X}_{t}, W_{t-1}, Y_{t-1}])$. Then, the full process history $\mathbf{H}_t$ is summarized into a \emph{context representation} $\mathbf{C}_t$, given by an autoregressive model: $\Phi_{\theta_2}^{ar}(\mathbf{Z}_{\leq t}) = \mathbf{C}_t$, where $\Phi_{\theta_2}^{ar}$ is implemented with a GRU \citep{cho2014GRU}. This results in the representation function: $\Phi_{\theta_1, \theta_2}(\mathbf{H}_t) = \mathbf{C}_t$.

To train the model, we use a contrastive loss that encourages the context $\mathbf{C}_t$ to predict future local features $\mathbf{Z}_{t+1}, \dots, \mathbf{Z}_{t+\tau}$ while distinguishing them from the features of other individuals. This is done by minimizing the InfoNCE loss $\mathcal{L}^{(InfoNCE)}_j$ for each horizon $j$:
\begin{equation}
\label{eq: infonce_loss_cpc}
\resizebox{0.6\textwidth}{!}{$
\mathcal{L}^{(InfoNCE)}_j(\theta_1, \theta_2, \Gamma_j) \coloneqq -\mathbb{E}_{\mathcal{B}} \left[ \log \frac{\exp(T_j(\mathbf{U}_{t+j}, \mathbf{C}_t))}{\sum_{l=1}^{|\mathcal{B}|} \exp(T_j(\mathbf{U}_{l,t+j},\mathbf{C}_t))} \right],
$}
\end{equation}
where $\mathcal{B}$ is a batch containing individual histories, and $\Gamma_j$ is a weight matrix. The discriminator $T_j(.,.)$ classifies the correct future feature among negative samples from other individuals:
\begin{equation}
\label{eq: discrim_cpc}
\resizebox{0.5\textwidth}{!}{$
T_j(\mathbf{U}_{t+j}, \Phi_{\theta_1, \theta_2}(\mathbf{H}_t)) = \Phi_{\theta_1}(\mathbf{U}_{t+j})^T \Gamma_j \mathbf{C}_t = \mathbf{Z}_{t+j}^T \Gamma_j \mathbf{C}_t.
$}
\end{equation}
In practice, $\mathbf{C}_t$ predicts $\mathbf{\hat{Z}}_{t+j}$, and prediction quality is measured by the dot product $\mathbf{Z}_{t+j}^\top \mathbf{\hat{Z}}_{t+j}$. Minimizing the InfoNCE loss $\mathcal{L}^{(InfoNCE)}_j$ provides a lower bound on the MI between the context and future features $I(\mathbf{U}_{t+j}, \mathbf{C}_{t})$ \citep{oord2018representation}:
\begin{equation}
 \label{eq: LB_MI_InfoNCE}
    I(\mathbf{U}_{t+j}, \mathbf{C}_{t}) \geq \log(|\mathcal{B}|) - \mathcal{L}^{(InfoNCE)}_j.
\end{equation}
For multiple forecasting horizons $j = 1, 2, \dots, \tau$, we learn long-term dependencies by minimizing the InfoNCE loss across all horizons:
\begin{equation}
    \label{eq: cpc_loss}
    \resizebox{0.6\textwidth}{!}{$\mathcal{L}^{CPC}(\theta_1, \theta_2, \{\Gamma_j\}_{j=1}^{\tau}) \coloneqq \frac{1}{\tau} \sum_{j=1}^{\tau} \mathcal{L}^{(InfoNCE)}_j(\theta_1, \theta_2, \Gamma_j).$}
\end{equation}

Thus, minimizing $\mathcal{L}^{CPC}$ maximizes the \emph{shared information} between the context and future components as shown in Eq. (\ref{eq: MI_lb_avg}), pushing the model to capture the global structure of the process over large horizons—crucial for counterfactual regression across multiple time steps:
\begin{equation}
\label{eq: MI_lb_avg}
\resizebox{0.4\textwidth}{!}{$
\frac{1}{\tau} \sum_{j=1}^{\tau} I(\mathbf{U}_{t+j}, \mathbf{C}_{t}) \geq \log(|\mathcal{B}|) - \mathcal{L}^{CPC}.
$}
\end{equation}

\paragraph{InfoMax Principle} We introduce a regularization term to make the context representation of the process history $\mathbf{H}_{t}$ "reconstructable." We leverage the InfoMax principle to maximize the MI between $\mathbf{H}_{t}$ and the context $\mathbf{C}_t$. However, we avoid computing the contrastive loss between $\mathbf{C}_t$ and $\mathbf{H}_t$ for two main reasons. First, $\mathbf{H}_{t}$ is a high-dimensional sequence, making the loss computation very demanding. Secondly, we are still interested in incorporating inductive bias toward capturing global dependencies, this time by pushing any subsequence to be predictive of any future subsequence within $\mathbf{H}_t$. Hence, we divide the process history into two non-overlapping views, $\mathbf{H}_t^h\coloneqq\mathbf{U}_{1:t_0}$, $\mathbf{H}_t^f\coloneqq\mathbf{U}_{t_0+1:t}$ representing a \emph{historical subsequence} and a \emph{future subsequence} within the process history $\mathbf{H}_{t}$, with $t_0$ randomly chosen per batch. We then maximize the MI between the representations of these views, $\mathbf{C}_t^h$ and $\mathbf{C}_t^f$, resulting in a lower bound to the InfoMax objective as formulated below:
\begin{proposition}    
    $ I(\mathbf{C}_t^h,\mathbf{C}_t^f)\leq I(\mathbf{H}_{t}, (\mathbf{C}_t^h, \mathbf{C}_t^f))$.
    \label{prop: lb_infomax}
\end{proposition}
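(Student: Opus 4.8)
The plan is to exploit the fact that both view-representations are deterministic functions of the shared source $\mathbf{H}_t$. By construction, $\mathbf{H}_t^h = \mathbf{U}_{1:t_0}$ and $\mathbf{H}_t^f = \mathbf{U}_{t_0+1:t}$ are disjoint subsequences of $\mathbf{H}_t$, and the representations are obtained by applying the encoder $\Phi_{\theta_1,\theta_2}$ to each view, so that $\mathbf{C}_t^h = \Phi_{\theta_1,\theta_2}(\mathbf{H}_t^h)$ and $\mathbf{C}_t^f = \Phi_{\theta_1,\theta_2}(\mathbf{H}_t^f)$ are each measurable functions of $\mathbf{H}_t$. This is the only structural input the proof needs.

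First I would note that because conditioning on $\mathbf{H}_t$ renders $\mathbf{C}_t^h$ and $\mathbf{C}_t^f$ deterministic, they are conditionally independent given $\mathbf{H}_t$; equivalently, $\mathbf{C}_t^h \to \mathbf{H}_t \to \mathbf{C}_t^f$ forms a Markov chain. The data processing inequality then yields $I(\mathbf{C}_t^h, \mathbf{C}_t^f) \le I(\mathbf{H}_t, \mathbf{C}_t^f)$, since passing $\mathbf{H}_t$ through the map producing $\mathbf{C}_t^h$ cannot increase the information it shares with $\mathbf{C}_t^f$. Second, I would invoke monotonicity of mutual information under enlarging one argument, which is just the chain rule $I(\mathbf{H}_t, (\mathbf{C}_t^h,\mathbf{C}_t^f)) = I(\mathbf{H}_t, \mathbf{C}_t^f) + I(\mathbf{H}_t, \mathbf{C}_t^h \mid \mathbf{C}_t^f) \ge I(\mathbf{H}_t, \mathbf{C}_t^f)$ together with nonnegativity of conditional mutual information. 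Chaining the two gives $I(\mathbf{C}_t^h, \mathbf{C}_t^f) \le I(\mathbf{H}_t, \mathbf{C}_t^f) \le I(\mathbf{H}_t, (\mathbf{C}_t^h, \mathbf{C}_t^f))$, which is exactly the claim.

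There is essentially no hard computational step; the only point requiring care is the measure-theoretic justification of the two inequalities when the variables are continuous (the encoder outputs live in $\mathbb{R}^d$). The data processing inequality and the chain rule both hold for general random variables under the definition of mutual information as a supremum over finite partitions, so I would cite these facts rather than re-derive them. A purely discrete shortcut is worth recording as a sanity check: writing $\mathbf{C} = (\mathbf{C}_t^h,\mathbf{C}_t^f)$, since $\mathbf{C}$ is a deterministic function of $\mathbf{H}_t$ one has $I(\mathbf{H}_t,\mathbf{C}) = H(\mathbf{C})$, while $I(\mathbf{C}_t^h,\mathbf{C}_t^f) \le H(\mathbf{C}_t^h) \le H(\mathbf{C})$, and the continuous case follows from the partition argument above. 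The anticipated subtlety is more conceptual than technical: one must confirm that the random split point $t_0$ does not interfere. Since the inequality holds for each fixed $t_0$, it survives the batch-level randomization over $t_0$ as well.
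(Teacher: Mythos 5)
Your proof is correct and follows essentially the same route as the paper's: both rest on the Markov chain $\mathbf{C}_t^h \to \mathbf{H}_t \to \mathbf{C}_t^f$ (induced by the two views being deterministic functions of $\mathbf{H}_t$) plus the data processing inequality, followed by monotonicity of MI under enlarging one argument. The only cosmetic differences are that you keep $I(\mathbf{H}_t,\mathbf{C}_t^f)$ rather than $I(\mathbf{H}_t,\mathbf{C}_t^h)$ as the intermediate quantity (immaterial by symmetry) and justify the second step via the chain rule instead of a second data processing inequality, which is the same fact.
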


We provide an intuitive discussion of the inequality by providing an exact writing of the gap in \ref{prop: lb_infomax}: 
\begin{theorem}
\label{thm: tightness_lb_info}
 \begin{equation}
 \resizebox{0.8\textwidth}{!}{$
     I(\mathbf{H}_{t}; (\mathbf{C}_t^f, \mathbf{C}_t^h)) - I(\mathbf{C}_t^f, \mathbf{C}_t^h) = I(\mathbf{H}_t; \mathbf{C}_t^f \mid \mathbf{C}_t^h) + \mathbb{E}_{\mathbf{h}_t \sim \mathbb{P}_{\mathbf{H}_t}} \mathbb{E}_{\mathbf{c}_{t}^f \sim \mathbb{P}_{\mathbf{C}_{t}^f \mid \mathbf{h}_t}} \left[ D_{KL}[\mathbb{P}_{\mathbf{C}_{t}^h \mid \mathbf{h}_t} || \mathbb{P}_{\mathbf{C}_{t}^h \mid \mathbf{c}_{t}^f }] \right].
     \label{eq: tightness_lb_infomax}
     $}
 \end{equation}
\end{theorem}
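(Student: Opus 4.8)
The plan is to reduce the identity to the standard chain rule for mutual information together with the single conditional-independence property that the view construction guarantees. First I would apply the chain rule, conditioning on $\mathbf{C}_t^h$ first, to write $I(\mathbf{H}_t; (\mathbf{C}_t^f, \mathbf{C}_t^h)) = I(\mathbf{H}_t; \mathbf{C}_t^h) + I(\mathbf{H}_t; \mathbf{C}_t^f \mid \mathbf{C}_t^h)$; the ordering is chosen so that the conditional term already coincides with the first summand on the right-hand side of \eqref{eq: tightness_lb_infomax}. Substituting this into the left-hand side of the claim cancels $I(\mathbf{H}_t; \mathbf{C}_t^f \mid \mathbf{C}_t^h)$ against its copy, so it remains to establish the single identity
\[
I(\mathbf{H}_t; \mathbf{C}_t^h) - I(\mathbf{C}_t^f, \mathbf{C}_t^h) = \mathbb{E}_{\mathbf{h}_t \sim \mathbb{P}_{\mathbf{H}_t}} \mathbb{E}_{\mathbf{c}_{t}^f \sim \mathbb{P}_{\mathbf{C}_{t}^f \mid \mathbf{h}_t}} \left[ D_{KL}[\mathbb{P}_{\mathbf{C}_{t}^h \mid \mathbf{h}_t} \| \mathbb{P}_{\mathbf{C}_{t}^h \mid \mathbf{c}_{t}^f }] \right].
\]

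Next I would write each mutual information on the left as the expectation of a log-density-ratio, integrate both against the full joint law of $(\mathbf{H}_t, \mathbf{C}_t^f, \mathbf{C}_t^h)$ (legitimate since each integrand depends on only two of the three variables), and combine the logarithms. The common marginal factor $p(\mathbf{c}_t^h)$ cancels, leaving
\[
\mathbb{E}_{(\mathbf{h}_t, \mathbf{c}_t^f, \mathbf{c}_t^h)}\left[\log\frac{p(\mathbf{c}_t^h \mid \mathbf{h}_t)}{p(\mathbf{c}_t^h \mid \mathbf{c}_t^f)}\right],
\]
which I would refactor as the nested expectation $\mathbb{E}_{\mathbf{h}_t}\,\mathbb{E}_{\mathbf{c}_t^f \mid \mathbf{h}_t}\,\mathbb{E}_{\mathbf{c}_t^h \mid \mathbf{h}_t, \mathbf{c}_t^f}[\,\cdot\,]$ so that the outer two draws already match those of the target KL term.

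The key step, and the only place where structure beyond generic identities enters, is the conditional independence $\mathbf{C}_t^f \indep \mathbf{C}_t^h \mid \mathbf{H}_t$. This holds because $\mathbf{C}_t^h$ and $\mathbf{C}_t^f$ are produced by applying the encoder to the two disjoint views $\mathbf{H}_t^h = \mathbf{U}_{1:t_0}$ and $\mathbf{H}_t^f = \mathbf{U}_{t_0+1:t}$ of $\mathbf{H}_t$, which yields the Markov chain $\mathbf{C}_t^f \leftrightarrow \mathbf{H}_t \leftrightarrow \mathbf{C}_t^h$. Consequently $p(\mathbf{c}_t^h \mid \mathbf{h}_t, \mathbf{c}_t^f) = p(\mathbf{c}_t^h \mid \mathbf{h}_t)$, so the innermost expectation is in fact taken with respect to $\mathbb{P}_{\mathbf{C}_t^h \mid \mathbf{h}_t}$. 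The inner expectation is then exactly $D_{KL}[\mathbb{P}_{\mathbf{C}_t^h \mid \mathbf{h}_t} \| \mathbb{P}_{\mathbf{C}_t^h \mid \mathbf{c}_t^f}]$ by definition of the KL divergence, and averaging over the outer draws of $\mathbf{h}_t$ and $\mathbf{c}_t^f \mid \mathbf{h}_t$ reproduces the claimed term, completing the proof.

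I expect the main obstacle to be stating this conditional independence cleanly enough to license the replacement of $p(\mathbf{c}_t^h \mid \mathbf{h}_t, \mathbf{c}_t^f)$ by $p(\mathbf{c}_t^h \mid \mathbf{h}_t)$; everything else is bookkeeping with the chain rule and the definition of KL. A secondary care point is that the encoder is in practice deterministic, so the conditional laws $\mathbb{P}_{\mathbf{C}_t^h \mid \mathbf{h}_t}$ degenerate to point masses; I would phrase the argument for a general, possibly stochastic, encoder (or interpret the densities in the appropriate discrete/limiting sense) so that the KL terms remain well defined and the identity holds verbatim.
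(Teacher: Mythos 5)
Your proof is correct and follows essentially the same route as the paper: the chain rule $I(\mathbf{H}_t;(\mathbf{C}_t^f,\mathbf{C}_t^h)) = I(\mathbf{H}_t;\mathbf{C}_t^h) + I(\mathbf{H}_t;\mathbf{C}_t^f\mid\mathbf{C}_t^h)$ combined with the identity $I(\mathbf{H}_t;\mathbf{C}_t^h) - I(\mathbf{C}_t^f;\mathbf{C}_t^h) = \mathbb{E}_{\mathbf{h}_t}\mathbb{E}_{\mathbf{c}_t^f\mid\mathbf{h}_t}\bigl[D_{KL}[\mathbb{P}_{\mathbf{C}_t^h\mid\mathbf{h}_t}\,\|\,\mathbb{P}_{\mathbf{C}_t^h\mid\mathbf{c}_t^f}]\bigr]$, both resting on the Markov chain $\mathbf{C}_t^h \leftarrow \mathbf{H}_t \rightarrow \mathbf{C}_t^f$. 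The only difference is that the paper imports the latter identity by citation (to a compression/information-bottleneck reference) while you derive it directly from the log-density-ratio expansion and the conditional independence $\mathbf{C}_t^f \indep \mathbf{C}_t^h \mid \mathbf{H}_t$, which makes your argument self-contained but not substantively different.
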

Both terms on the RHS of Eq. (\ref{eq: tightness_lb_infomax}) are positive, providing an alternative proof to Proposition \ref{prop: lb_infomax}. When equality holds, it implies \( I(\mathbf{H}_t; \mathbf{C}_t^f \mid \mathbf{C}_t^h) = 0 \), indicating that \(\mathbf{H}_t\) is independent of \(\mathbf{C}_t^f\) given \(\mathbf{C}_t^h\). This suggests that \(\mathbf{C}_t^h\) retains sufficient information from \(\mathbf{H}_t\) that is predictive of \(\mathbf{C}_t^f\). The symmetry of MI also leads to the occurrence of the second term on the RHS when conditioning on \(\mathbf{C}_t^f\). The equality in Proposition \ref{prop: lb_infomax} implies \( \mathbb{P}_{\mathbf{C}_{t}^h \mid \mathbf{h}_t} = \mathbb{P}_{\mathbf{C}_{t}^h \mid \mathbf{c}_{t}^f } \), suggesting that \(\mathbf{C}_t^f\) efficiently encodes its subsequence while sharing maximum information with \(\mathbf{C}_t^h\).

By considering the proposed variant of the InfoMax principle, we can compute a contrastive bound to $I(\mathbf{C}_t^h,\mathbf{C}_t^f)$ more efficiently, as the random vectors reside in a low-dimensional space thanks to the encoding. We define a contrastive loss using InfoNCE similar to Eq. (\ref{eq: infonce_loss_cpc}): 
\begin{equation}
\label{eq: infomax_loss_cpc}
\resizebox{0.6\textwidth}{!}{$
\mathcal{L}^{(InfoMax)}(\theta_{1},\theta_{2}, \eta)  \coloneqq -\mathbb{E}_{\mathcal{B}} \left[ \log \frac{\exp(T_{\eta}(\mathbf{C}_t^f, \mathbf{C}_t^h))}{\sum_{l = 1}^{|\mathcal{B}|} \exp(T_{\eta}(\mathbf{C}_{l,t}^f, \mathbf{C}_t^h))} \right].
$}
\end{equation}
% To avoid defining $t_{max}$ discriminators, 
We use a non-linear discriminator $T_{\eta}$ parametrized by $\eta$ (detailed in Appendix \ref{sect: archi_ccpc}). The representation of the past subsequence $\mathbf{C}_t^{h}$ is mapped to a prediction of the future subsequence $\mathbf{\hat{C}}_t^{f}\coloneqq F_{\eta}(\mathbf{C}_t^h)$ and $T_{\eta} = \mathbf{C}_t^{f^{\top}} \mathbf{\hat{C}}_t^f$.

Theorem \ref{thm: tightness_lb_info} and Proposition \ref{prop: lb_infomax} justify using the loss in Eq. (\ref{eq: infomax_loss_cpc}) by showing that our InfoMax simplification provides a valid lower bound. Thus, the contrastive loss in Eq. (\ref{eq: infomax_loss_cpc}) is valid, as:
\begin{equation}
\log(|\mathcal{B}|) - \mathcal{L}^{(InfoMax)} \leq I(\mathbf{C}_t^h, \mathbf{C}_t^f) \leq I(\mathbf{H}_t, (\mathbf{C}_t^h, \mathbf{C}_t^f)).
\end{equation}
The "mental model" behind our regularization term comes from the MI, \( I(\mathbf{H}_t, (\mathbf{C}_t^h, \mathbf{C}_t^f)) = H(\mathbf{H}_t) - H(\mathbf{H}_t \mid (\mathbf{C}_t^h, \mathbf{C}_t^f)) \), which can be written using entropy.
Since the entropy term is constant and parameter-free, minimizing the conditional entropy \( H(\mathbf{H}_t \mid (\mathbf{C}_t^h, \mathbf{C}_t^f)) \geq 0 \) ensures that \( \mathbf{H}_t \) is almost surely a function of \( (\mathbf{C}_t^h, \mathbf{C}_t^f) \) (Appendix \ref{appendix: zero_cond_H}, Proposition \ref{prop: zero_cond_H}). When MI is maximized, the theoretical existence of such a function suggests that the learned context \( \mathbf{C}_t \) can decode and reconstruct \( \mathbf{H}_t \).
  
Beyond the idea of reconstruction, it was shown that the InfoNCE objective implicitly learns to invert the data's generative model under mild assumptions \citep{zimmermann2021contrastive}. Recent works \citep{daunhawer2023identifiability, liu2024revealing} extend this insight to multi-modal settings, which can reframe our InfoMax problem: $\mathbf{H}_t^h$ and $\mathbf{H}_t^f$ can be seen as two coupled modalities, allowing us to identify latent generative factors up to some mild indeterminacies (e.g rotations, affine mappings). We plan to extend multi-modal causal representation learning to the longitudinal setting, where we anticipate minimizing our InfoMax objective, in the limit of infinite data, will effectively invert the data generation process up to a class of indeterminacies that we conjecture to be broader and under weaker assumptions than those in current causal representation learning literature, given our focus is on causal inference rather than the identification of causal latent variables. We initiate a formal basis for this claim in Appendix \ref{appendix: infomax_inverting_dgp}.

\subsection{Balanced representation learning}
\paragraph{Motivation} Our goal is counterfactual regression, specifically estimating \( \mathbb{E}(Y_{t+\tau}(\omega_{t+1:t+\tau}) \mid \mathbf{H}_{t+1}) \). For simplicity, with \( \tau = 1 \), we estimate the potential outcome for a given treatment \( W_{t+1} = \omega_{t+1} \), where \( W_{t+1} \in \{0,1, \dots, K-1\} \), expressed as \( \mathbb{E}(Y_{t+1}(\omega_{t+1}) \mid \mathbf{H}_{t+1}) \), which under standard assumptions is identified as:
\[
\mathbb{E}(Y_{t+1}(\omega_{t+1}) \mid \mathbf{H}_{t+1}) = \mathbb{E}(Y_{t+1} \mid \mathbf{H}_{t+1}, W_{t+1} = \omega_{t+1}).
\]
The RHS can be estimated from data as \( \mathbb{E}(Y_{t+1} \mid \mathbf{H}_{t+1}, W_{t+1}) = f(\mathbf{H}_{t+1}, W_{t+1}) \). Since only one treatment is observed per individual at each time step, \( W_{i,t+1} = \omega_{i,t+1} \), our model \( \hat{f} \) generates counterfactual responses by switching treatments \( \hat{f}(\mathbf{h}_{i,t+1}, \omega_{t+1}') \), where \( \omega_{t+1}' \neq \omega_{t+1} \) (e.g., chemotherapy vs. radiotherapy). The challenge is that \( \mathbf{H}_{t+1} \) and \( W_{t+1} \) are not independent, introducing potential bias in counterfactual estimation \citep{robins1999association}, leading to covariate shift or selection bias. To address this, we learn a representation \( \Phi(\mathbf{H}_{t+1}) \) that enforces distributional balance during decoding.

\paragraph{Setup} To mitigate selection bias, we leverage the context representation \( \mathbf{C}_t \) of \( \mathbf{H}_t \) and introduce two sub-networks: one for response prediction and one for treatment prediction, both using a mapping of the context representation:
\[
    \mathbf{\Phi}_t = \mathrm{SELU}(\mathrm{Linear}(\mathbf{C}_t)) = \Phi_{\theta_R}(\mathbf{H}_t),
\]
where SELU represents the Scaled Exponential Linear Unit \citep{klambauer2017self}, and \( \theta_R \) denotes all parameters of the representation learner, i.e., \( \theta_R = [\theta_1, \theta_2] \). Following \citep{bica2020estimatingCRN, Melnychuk2022CausalTF}, our objective is to learn a representation that accurately predicts outcomes while remaining \emph{distributionally balanced} across all possible treatment choices \( W_t = 0,1,\dots, K-1 \). To achieve this, we frame the problem as an adversarial game: one network learns to predict the next treatment from the representation, while a regularization term ensures that the representation is non-predictive of the treatment.
\setlength{\textfloatsep}{5pt} % Adjust this value
\begin{figure*}%[t]
\begin{center}
\includegraphics[width=\linewidth]{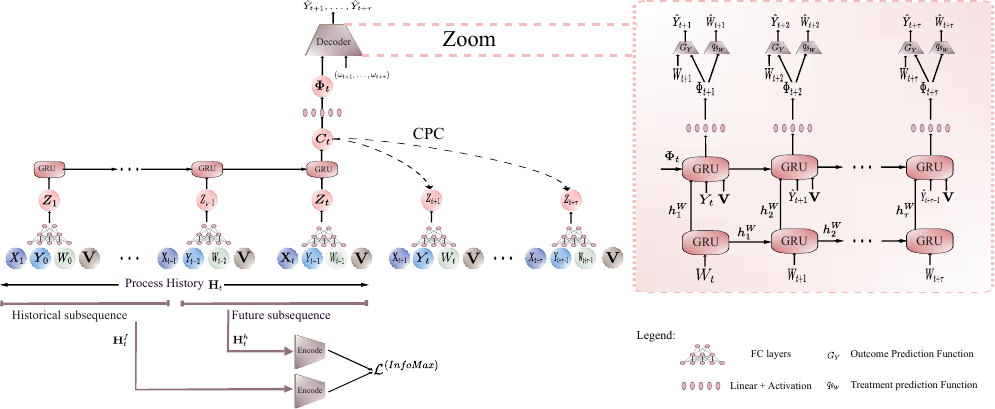}% height=5.5cm, width=16cm
\caption{Causal CPC architecture: The left shows the encoder, which learns context $\mathbf{C}_t$ from process history $\mathbf{H}_t$, with CPC and InfoMax objectives used for pretraining. The right shows the decoder, which autoregressively predicts the future outcome sequence from $\mathbf{C}_t$.}
\label{fig: ccpc_archi}
\end{center}
\end{figure*}

\paragraph{Factual response prediction} Since we intend to predict counterfactual responses for \(\tau\) steps ahead in time, we train a decoder to predict the factual responses \( Y_{t+1}, \dots, Y_{t+\tau} \) given the sequence of treatments \( (W_{t+1}, \dots, W_{t+\tau}) \). We minimize the negative conditional likelihood
\begin{align*}
 \mathcal{L}_Y(\theta_R, \theta_Y)  &= -\log p_{\theta_Y}(y_{t+1:t+\tau} \mid  \mathbf{\Phi}_t, \omega_{t+1:t+\tau}) \\
 &=  - \sum_{j=1}^{\tau}\log p_{\theta_Y}(y_{t+j} \mid y_{t+1:t+j-1}, \mathbf{\Phi}_t, \omega_{t+1:t+j}). 
\end{align*}

We denote $\mathcal{I}_t^j \coloneqq [Y_{t+1:t+j-1}, \mathbf{\Phi}_t, W_{t+1:t+j}]$ and assume a Gaussian distribution for the conditional responses $Y_{t+j} \mid \mathcal{I}_t^j \sim \mathcal{N}(G_Y(\mathcal{I}_t^j), \sigma^2)$, where $G_Y$ models the mean of the conditional response (see right side of Figure \ref{fig: ccpc_archi}). We set $\sigma = 0.05$ throughout our experiments. The response sequence is estimated autoregressively using a GRU-based decoder without teacher forcing \citep{williams1989learning} to ensure model training's consistency with testing in real-world scenarios (Figure \ref{fig: ccpc_archi} and Algorithm \ref{alg: dec_training} in Appendix \ref{sect: alg_ccpc}).

\paragraph{Treatment prediction} We learn a treatment prediction sub-network parameterized by $\theta_W$ that takes as input the representation $\mathbf{\Phi}_{t+1}$ and predicts a distribution \( q_{\theta_W}(\omega_{t+1} \mid \mathbf{\Phi}_{t+1}) \) over the treatment \( W_{t+1} \) by minimizing the negative log-likelihood, \( \mathcal{L}_{W} = -\log q_{\theta_W}(\omega_{t+1} \mid \mathbf{\Phi}_{t+1}) \). To assess the quality of the representation in predicting the treatment, the gradient from \( \mathcal{L}_{W} \) only updates the treatment network parameters \( \theta_W \) and is not backpropagated through the response of the parameters for the representation \( \mathbf{\Phi}_{t+1} \) (Algorithm \ref{alg: dec_training}, Appendix \ref{sect: alg_ccpc}).

\paragraph{Adversarial learning} To create an adversarial game, we update the representation learning parameters, and in the next step, the treatment network \( q_{\theta_W}(\cdot \mid \mathbf{\Phi}_{t+1}) \) with adverse losses such that the representation \( \mathbf{\Phi}_{t+1} \) becomes invariant with respect to the assignment of \( W_{t+1} \). Different from SOTA models (as highlighted in related work) and in line with our information guidelines principles, learning a balanced representation \( \mathbf{\Phi}_{t+1} \) amounts to ensuring \( \mathbf{\Phi}_{t+1} \indep W_{t+1} \), which is equivalent to \( I(\mathbf{\Phi}_{t+1}, W_{t+1}) = 0 \). Hence, we minimize the MI as a way to confuse the treatment classifier. Specifically, we minimize an upper bound on \( I(\mathbf{\Phi}_{t+1}, W_{t+1}) \), namely the CLUB of MI \citep{cheng2020club}.
\begin{equation}
\begin{aligned}
I_{\text{CLUB}}(\Phi(\mathbf{H}_t), W_{t+1}; q_{\theta_W}) &\coloneqq  \mathbb{E}_{\mathbb{P}_{(\Phi(\mathbf{H}_t), W_{t+1})}} 
\left[ \log q_{\theta_W}(W_{t+1} \mid \Phi(\mathbf{H}_{t+1})) \right] \\
&\quad - \mathbb{E}_{\mathbb{P}_{\Phi(\mathbf{H}_t)}}\mathbb{E}_{\mathbb{P}_{W_{t+1}}} 
\left[ \log q_{\theta_W}(W_{t+1} \mid \Phi(\mathbf{H}_{t+1})) \right].
\end{aligned}
\label{eq: ICLUB}
\end{equation}
We use the objective in Eq. (\ref{eq: ICLUB}) to update the representation learner $\Phi(.)$ \citep{brakel2017learning, DeepInfoMax2019}. This update aims to minimize the discrepancy between the conditional likelihood of treatments for units sampled from $\mathbb{P}_{(\mathbf{H}_t, W_{t+1})}$ and the conditional likelihood of treatments under the assumption of independent sampling from the product of marginals $\mathbb{P}_{\mathbf{H}_{t+1}} \otimes \mathbb{P}_{W_{t+1}}$. In practice, we generate samples from the product of marginals by shuffling the treatment $W_{t+1}$ across the batch dimension similar to \cite{brakel2017learning, DeepInfoMax2019}.

When minimizing $\mathcal{L}_W$, $q_{\theta_W}(\omega_{t+1}\mid \mathbf{\Phi}_{t+1})$ gets closer to the true conditional distribution $p(\omega_{t+1}\mid \mathbf{\Phi}_{t+1})$, and, in this case, the objective in Eq. (\ref{eq: ICLUB}) provides an upper bound of the MI between representation and treatment. We formalize the intuition by adapting the result of \cite{cheng2020club}:
\begin{theorem}\cite{cheng2020club} Let $q_{\theta_W}(\mathbf{\Phi}_{t+1}, \omega_{t+1})  \coloneqq q_{\theta_W}(\omega_{t+1}|\mathbf{\Phi}_{t+1})p(\mathbf{\Phi}_{t+1})$ be the joint distribution induced by $q_{\theta_W}(\omega_{t+1}|\mathbf{\Phi}_{t+1})$ over the representation space of $\mathbf{\Phi}_{t+1}$. If:
\begin{equation*}
\scalebox{0.9}{$
D_{KL}(p(\mathbf{\Phi}_{t+1}, \omega_{t+1})||q_{\theta_W}(\mathbf{\Phi}_{t+1}, \omega_{t+1})) 
\leq D_{KL}(p(\mathbf{\Phi}_{t+1}) p(\omega_{t+1})||q_{\theta_W}(\mathbf{\Phi}_{t+1}, \omega_{t+1})),
$}
\end{equation*}  
then $I(\mathbf{\Phi}_{t+1}, W_{t+1}) \leq I_{CLUB}(\mathbf{\Phi}_{t+1}, W_{t+1}; q).$
\label{thm: ICLUB_bound}
\end{theorem}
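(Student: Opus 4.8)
The plan is to prove the claim by establishing the exact algebraic identity
\[
I_{CLUB}(\mathbf{\Phi}_{t+1}, W_{t+1}; q) - I(\mathbf{\Phi}_{t+1}, W_{t+1}) = D_{KL}\!\left(p(\mathbf{\Phi}_{t+1})p(\omega_{t+1}) \,\|\, q_{\theta_W}\right) - D_{KL}\!\left(p(\mathbf{\Phi}_{t+1},\omega_{t+1}) \,\|\, q_{\theta_W}\right),
\]
after which the claimed inequality is immediate: the hypothesis states precisely that the right-hand side is nonnegative, so $I(\mathbf{\Phi}_{t+1}, W_{t+1}) \leq I_{CLUB}(\mathbf{\Phi}_{t+1}, W_{t+1}; q)$.

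First I would expand both Kullback--Leibler divergences using the defining factorization $q_{\theta_W}(\mathbf{\Phi}_{t+1},\omega_{t+1}) = q_{\theta_W}(\omega_{t+1}\mid\mathbf{\Phi}_{t+1})\,p(\mathbf{\Phi}_{t+1})$. The key simplification is that $q_{\theta_W}$ shares the \emph{true} marginal $p(\mathbf{\Phi}_{t+1})$ as its first-coordinate marginal, so the factor $p(\mathbf{\Phi}_{t+1})$ cancels inside each logarithm. This yields
\[
D_{KL}\!\left(p(\mathbf{\Phi}_{t+1},\omega_{t+1})\,\|\,q_{\theta_W}\right) = \mathbb{E}_{p(\mathbf{\Phi}_{t+1},\omega_{t+1})}\!\left[\log p(\omega_{t+1}\mid\mathbf{\Phi}_{t+1})\right] - \mathbb{E}_{p(\mathbf{\Phi}_{t+1},\omega_{t+1})}\!\left[\log q_{\theta_W}(\omega_{t+1}\mid\mathbf{\Phi}_{t+1})\right],
\]
and, analogously,
\[
D_{KL}\!\left(p(\mathbf{\Phi}_{t+1})p(\omega_{t+1})\,\|\,q_{\theta_W}\right) = \mathbb{E}_{p(\omega_{t+1})}\!\left[\log p(\omega_{t+1})\right] - \mathbb{E}_{p(\mathbf{\Phi}_{t+1})p(\omega_{t+1})}\!\left[\log q_{\theta_W}(\omega_{t+1}\mid\mathbf{\Phi}_{t+1})\right].
\]

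Next I would subtract these two expressions and regroup the four resulting terms. The two expectations of $\log q_{\theta_W}$ reassemble exactly into the definition of $I_{CLUB}$ given in Eq. (\ref{eq: ICLUB}), namely $\mathbb{E}_{p(\mathbf{\Phi}_{t+1},\omega_{t+1})}[\log q_{\theta_W}] - \mathbb{E}_{p(\mathbf{\Phi}_{t+1})p(\omega_{t+1})}[\log q_{\theta_W}]$. The remaining two terms, $\mathbb{E}_{p(\omega_{t+1})}[\log p(\omega_{t+1})]$ and $\mathbb{E}_{p(\mathbf{\Phi}_{t+1},\omega_{t+1})}[\log p(\omega_{t+1}\mid\mathbf{\Phi}_{t+1})]$, combine into $-\big(H(\omega_{t+1}\mid\mathbf{\Phi}_{t+1}) - H(\omega_{t+1})\big)\cdot(-1) = -I(\mathbf{\Phi}_{t+1},W_{t+1})$, giving the displayed identity. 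There is no genuine obstacle in this argument; it is a routine computation. The only steps requiring care are the marginalization $\mathbb{E}_{p(\mathbf{\Phi}_{t+1},\omega_{t+1})}[\log p(\omega_{t+1})] = \mathbb{E}_{p(\omega_{t+1})}[\log p(\omega_{t+1})]$, which holds because $\log p(\omega_{t+1})$ does not depend on $\mathbf{\Phi}_{t+1}$, and the symmetric cancellation of the shared marginal $p(\mathbf{\Phi}_{t+1})$ in both divergences, which is exactly what makes the whole difference collapse onto $I_{CLUB}$ and $I$. Since the adaptation from \cite{cheng2020club} only replaces their latent/conditional pair by $(\mathbf{\Phi}_{t+1}, W_{t+1})$ with a discrete $W_{t+1}$, none of these steps is affected by the discreteness of the treatment, so the conclusion follows verbatim.
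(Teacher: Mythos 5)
Your proof is correct: the identity $I_{CLUB}-I = D_{KL}(p(\mathbf{\Phi}_{t+1})p(\omega_{t+1})\,\|\,q_{\theta_W}) - D_{KL}(p(\mathbf{\Phi}_{t+1},\omega_{t+1})\,\|\,q_{\theta_W})$ follows exactly as you compute, using the cancellation of the shared marginal $p(\mathbf{\Phi}_{t+1})$, and the hypothesis then makes the right-hand side nonnegative. The paper does not reprove this theorem (it defers entirely to the citation of Cheng et al.), and your argument is precisely the one given in that reference, so there is nothing to compare beyond noting that you have supplied the proof the paper omits.
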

Based on Theorem \ref{thm: ICLUB_bound}, our adversarial training is interpretable and can be explained as follows: the treatment classifier seeks to minimize $\mathbb{E}_{\mathbb{P}_{(\mathbf{H}_t, W_{t+1})}}\left[\mathcal{L}_W\right]$, which is equivalent to minimizing Kullback-Leibler divergence $D_{KL}(p(\mathbf{\Phi}_{t+1}, \omega_{t+1})||q_{\theta_W}(\mathbf{\Phi}_{t+1}, \omega_{t+1}))$. Therefore, $q_{\theta_W}(\mathbf{\Phi}_{t+1}, \omega_{t+1})$ could get closer to $p(\mathbf{\Phi}_{t+1}, \omega_{t+1})$ than, ultimately, to $p(\mathbf{\Phi}_{t+1}) p(\omega_{t+1})$, as we train the network to predict $W_{t+1}$ from $\mathbf{\Phi}_{t+1}$. In such a case and by Theorem \ref{thm: ICLUB_bound}, $I_{CLUB}$ provides an upper bound on the MI. Hence, in a subsequent step, we minimize $I_{CLUB}$ w.r.t the representation parameters, minimizing the MI $I(\mathbf{\Phi}_{t+1}, W_{t+1})$ and achieving balance. We theoretically formulate such behavior by proving in the following theorem that, at the Nash equilibrium of this adversarial game, the representation is exactly balanced across the different treatment regimes provided by $W_{t+1}$.
\begin{theorem}
Let $t\in \{1, 2, \dots, t_{max}\}$, $\Phi=\Phi_{\theta_R}$ and $q = q_{\theta_W}$ are, respectively, any representation and treatment network. Let $\mathbb{P}_{\Phi(\mathbf{H}_t)}$ be the probability distribution over the representation space and $\mathbb{P}_{\Phi(\mathbf{H}_t)\mid W_{t+1}}$ its conditional counterpart. Then, there exist $\Phi^*$ and $q^*$ such that:
\begin{equation}
\begin{aligned}
 \Phi^* &= \arg\min_{\Phi} I_{\text{CLUB}}(\Phi(\mathbf{H}_t), W_{t+1}; q^*)\\
 q^* &= \arg\max_{q} \mathbb{E}_{\mathbb{P}_{\Phi^*(\mathbf{H}_t)}} \left[ \log q(W_{t+1} \mid \Phi^*(\mathbf{H}_t)) \right].                  
\end{aligned}
\end{equation}
Such an equilibrium holds if and only if $\mathbb{P}_{\Phi(\mathbf{H}_t)\mid W_{t+1}=0} \!=\! \mathbb{P}_{\Phi(\mathbf{H}_t)\mid W_{t+1}=1} \!=\! \dots \!=\! \mathbb{P}_{\Phi(\mathbf{H}_t)\mid W_{t+1}=k-1}$, almost surely.
\label{thm: blancing_iclub}
\end{theorem}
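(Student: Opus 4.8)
The plan is to solve the game one player at a time and then combine the two best-response conditions. Writing $Z=\Phi(\mathbf{H}_t)$ and $W=W_{t+1}$ for brevity, I would first treat the inner maximization defining $q^*$. The objective $\mathbb{E}[\log q(W\mid Z)]$ (expectation over the joint law of $(Z,W)$ induced by $\Phi$) decomposes, up to the $q$-free term $-\mathbb{E}_{\mathbb{P}_Z}[H(\mathbb{P}_{W\mid Z})]$, as $-\mathbb{E}_{\mathbb{P}_Z}\big[D_{KL}(\mathbb{P}_{W\mid Z}\,\|\,q(\cdot\mid Z))\big]$. Since each KL term is nonnegative and vanishes exactly when $q(\cdot\mid z)$ equals the posterior, the best response of the treatment network to a fixed $\Phi$ is the true posterior $q^*(\cdot\mid z)=\mathbb{P}_{W\mid Z=z}$. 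This is the standard cross-entropy/MLE fact, and I would record it as the first structural ingredient of any equilibrium.

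The central computation is a closed form for the CLUB evaluated at this best response. Substituting $q^*(w\mid z)=p(w\mid z)=p(z\mid w)p(w)/p(z)$ into the definition in Eq. (\ref{eq: ICLUB}) and expanding the logarithms, the two expectations collapse into a mutual-information piece and a weighted-KL piece, giving
\begin{equation}
I_{\text{CLUB}}(Z,W; q^*) = I(Z,W) + \sum_{w} p(w)\, D_{KL}\!\left(\mathbb{P}_{Z}\,\big\|\,\mathbb{P}_{Z\mid W=w}\right).
\label{eq:club-identity}
\end{equation}
Both summands are nonnegative, so $I_{\text{CLUB}}(Z,W;q^*)\ge 0$, and—under the positivity/overlap condition $p(w)>0$ for every $w\in\{0,\dots,K-1\}$—equality holds if and only if $I(Z,W)=0$, equivalently $\mathbb{P}_{Z\mid W=w}=\mathbb{P}_Z$ for all $w$. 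That is precisely the balance condition, since the common value of equal conditionals must be the marginal $\mathbb{P}_Z$. Deriving Eq. (\ref{eq:club-identity}) is routine algebra but it is the quantitative heart of the argument, so I would carry it out carefully.

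To close the \emph{only if} direction I would run a sandwich that decouples the two players. On one side, any balanced representation $\Phi^{\mathrm{bal}}$ has $p(z\mid w)-p(z)\equiv 0$, so reading $I_{\text{CLUB}}$ off its definition gives $I_{\text{CLUB}}(\Phi^{\mathrm{bal}};q)=0$ for \emph{every} fixed $q$; the minimality condition defining $\Phi^*$ then forces $I_{\text{CLUB}}(\Phi^*;q^*)\le 0$. On the other side, at the equilibrium $q^*$ is the true posterior of $\Phi^*$, so Eq. (\ref{eq:club-identity}) applies at $\Phi^*$ and yields $I_{\text{CLUB}}(\Phi^*;q^*)\ge 0$. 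The two bounds pin the value to $0$, and the equality case of Eq. (\ref{eq:club-identity}) delivers balance. The \emph{if} direction is the easy converse: if $\Phi^*$ is balanced then $Z\indep W$, so $q^*(\cdot\mid z)=p(w)$ is constant in $z$, which makes both expectations in Eq. (\ref{eq: ICLUB}) equal to $\sum_w p(w)\log p(w)$ and hence $I_{\text{CLUB}}(\Phi;q^*)\equiv 0$ for all $\Phi$; thus $\Phi^*$ trivially minimizes and the pair is an equilibrium. Taking $\Phi^*$ constant (hence balanced) simultaneously settles the existence claim.

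The step I expect to be the main obstacle is exactly this decoupling. The clean decomposition in Eq. (\ref{eq:club-identity}) is valid \emph{only} at the self-consistent point where $q^*$ equals the posterior induced by $\Phi^*$; for a competing $\Phi\neq\Phi^*$ the frozen $q^*$ is generically misspecified and $I_{\text{CLUB}}(\Phi;q^*)$ can be negative, so one cannot naively minimize the right-hand side of Eq. (\ref{eq:club-identity}) over $\Phi$. The sandwich circumvents this by evaluating the identity only at $\Phi^*$ and comparing against the universal benchmark value $0$ achieved by balanced maps for arbitrary $q$. I would also flag the positivity assumption $p(w)>0$ as the hypothesis that upgrades the vanishing of the weighted KL sum (or of the mutual information) into pointwise equality of all $K$ conditional laws.
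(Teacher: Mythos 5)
Your proof is correct and follows essentially the same route as the paper: the best response $q^*$ is the true posterior (the paper's Lemma \ref{lem: optimal_q}), and your closed form $I_{\text{CLUB}}(Z,W;q^*) = I(Z,W) + \sum_{w} p(w)\,D_{KL}\bigl(\mathbb{P}_{Z}\,\|\,\mathbb{P}_{Z\mid W=w}\bigr)$ is exactly the paper's Proposition \ref{prop: iclub_form} rewritten via Bayes' rule and Fubini, with the conclusion drawn from nonnegativity of both divergence terms. The one place you go beyond the paper is the sandwich step: the paper simply asserts that ``$I_{\text{CLUB}}$ is minimal when $I_{\text{CLUB}}=0$,'' whereas you explicitly exhibit balanced (e.g.\ constant) representations achieving the value $0$ against the \emph{frozen} $q^*$ (for which the decomposition no longer applies and $I_{\text{CLUB}}$ could be negative), which both pins the equilibrium value to $0$ and settles the existence claim --- a genuine tightening of the paper's argument rather than a different proof.
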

Since we target multi-timestep forecasting, covariate balancing in the representation space extends beyond \( t+1 \). For simplicity, we presented it for \( t+1 \), but in practice, the adversarial game applies the balancing across all forecasting horizons (Algorithm \ref{alg: dec_training}). The theorem also holds for other horizons by replacing \( \Phi(\mathbf{H}_t) \) with \( \Phi(\mathbf{H}_{t + j -1}) \) and \( W_{t+1} \) with \( W_{t+j} \), for $2 \leq j \leq \tau$.

\paragraph{Causal CPC Training} The Causal CPC model is trained in two stages:
\begin{inparaenum}[(1)]
    \item \textbf{Encoder pretraining:} We first learn an efficient representation of the process history by minimizing loss:
    \begin{equation*}
    \resizebox{0.5\textwidth}{!}{
    $\mathcal{L}_{enc} = \mathcal{L}^{CPC}(\theta_{1}, \theta_{2}, \{\Gamma_j\}_{j=1}^{\tau}) + \mathcal{L}^{(InfoMax)}(\theta_{1}, \theta_{2}, \gamma)$}.
    \end{equation*}
    \item \textbf{Decoder training:} After pretraining, we fine-tune the encoder by optimizing the factual outcome and treatment networks in the adversarial game from Theorem \ref{thm: blancing_iclub}. Formally:
    \begin{equation*}
    \resizebox{0.6\textwidth}{!}{
    $\begin{aligned}
    & \min_{\theta_{R}, \theta_{Y}} \mathcal{L}_{dec}(\theta_{R}, \theta_{Y}, \theta_{W}) = \mathcal{L}_Y(\theta_{R}, \theta_{Y}) + I_{\text{CLUB}}(\Phi_{\theta_{R}}(\mathbf{H}_t), W_{t+1}; q_{\theta_{W}}), \\
    & \min_{\theta_{W}} \mathcal{L}_{W}(\theta_{W}, \theta_{R}) = -\mathbb{E}_{\Phi_{\theta_{R}}(\mathbf{H}_t)} \left[ \log q_{\theta_{W}}(W_{t+1} \mid \Phi_{\theta_{R}}(\mathbf{H}_t)) \right].
    \end{aligned}
    $}
    \end{equation*}
\end{inparaenum}

\section{Experiments}
\label{sect: experiments}
We compare Causal CPC with SOTA baselines: MSMs \cite{robins2000MSM}, RMSN \cite{lim2018RMSM}, CRN \cite{bica2020estimatingCRN}, G-Net \cite{Li2021GNetAR}, and CT \cite{Melnychuk2022CausalTF}. All models are fine-tuned via a grid search over hyperparameters, including architecture and optimizers. Model selection is based on mean squared error (MSE) on factual outcomes from a validation set, and the same criterion is used for early stopping. Further details on hyperparameters and training procedures are provided in Appendices \ref{sect: huperparams_details} and \ref{sect: exp_protocol}.

\subsection{Experiments with Synthetic Data}
\paragraph{Tumor Growth} 
We use the PharmacoKinetic-PharmacoDynamic (PK-PD) model \cite{PK-PD} to simulate responses of non-small cell lung cancer patients, following previous works \citep{lim2018RMSM, bica2020estimatingCRN, Melnychuk2022CausalTF}. We evaluate our approach on simulated counterfactual trajectories, varying the confounding level via the parameter \( \gamma \) (Appendix \ref{subsect: cancer_sim_descrip}). Unlike \cite{Melnychuk2022CausalTF}, who used larger datasets (10,000 for training, 1,000 for testing), we use a smaller, more challenging dataset (1,000 for training, 500 for testing) to reflect real-world data limitations. For long-horizon forecasting, we set the prediction horizon to 10 and evaluate two training sequence lengths: 60 and 40, with covariates of dimension 4.
\setlength{\textfloatsep}{5pt} % Adjust this value
\begin{figure}[ht]
  \centering
  % First row of images
  \begin{subfigure}[b]{0.3\textwidth}
    \centering
    \includegraphics[width=\linewidth]{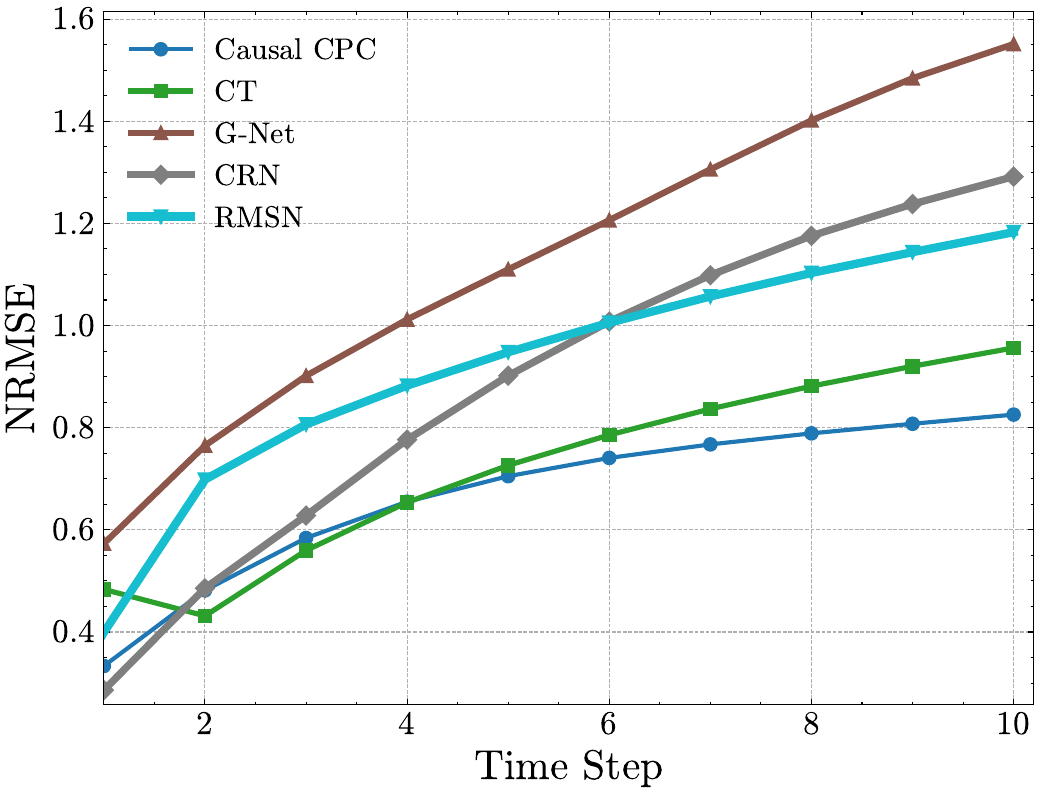}
    \caption{$\gamma =1$, sequence length 60}
    \label{fig:gamma1_seq60}
  \end{subfigure}%
  \hfill
  \begin{subfigure}[b]{0.3\textwidth}
    \centering
    \includegraphics[width=\linewidth]{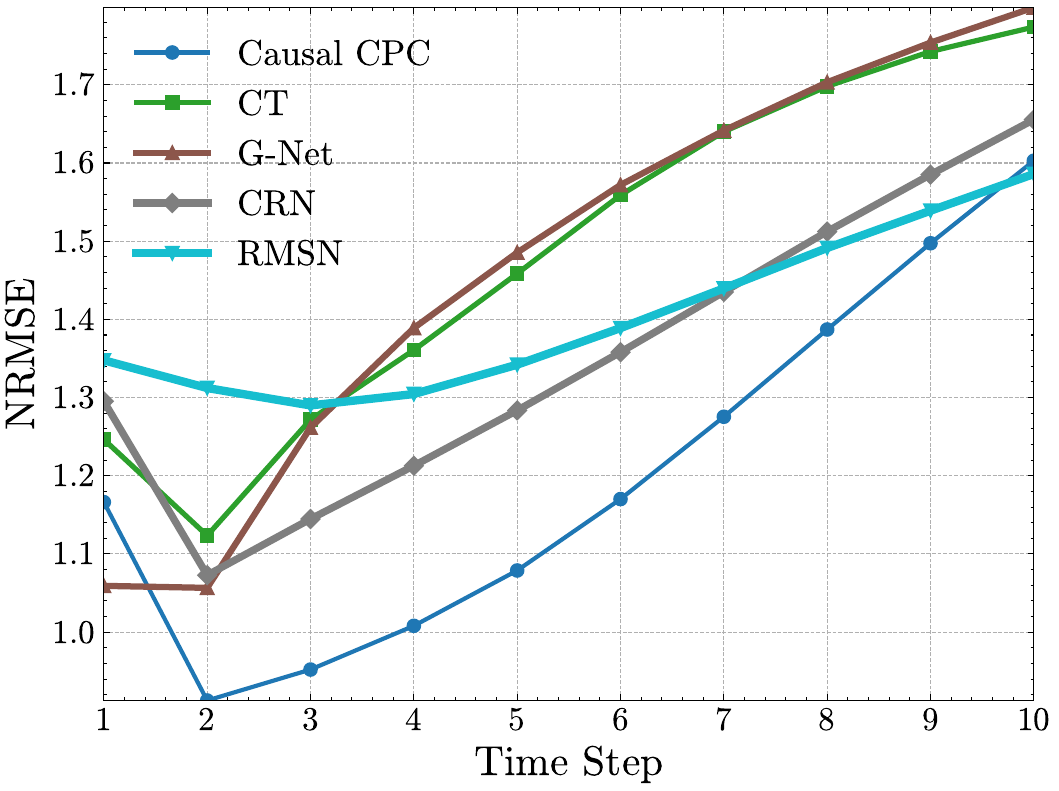}
    \caption{$\gamma =2$, sequence length 60}
    \label{fig:gamma2_seq60}
  \end{subfigure}%
  \hfill
  \begin{subfigure}[b]{0.3\textwidth}
    \centering
    \includegraphics[width=\linewidth]{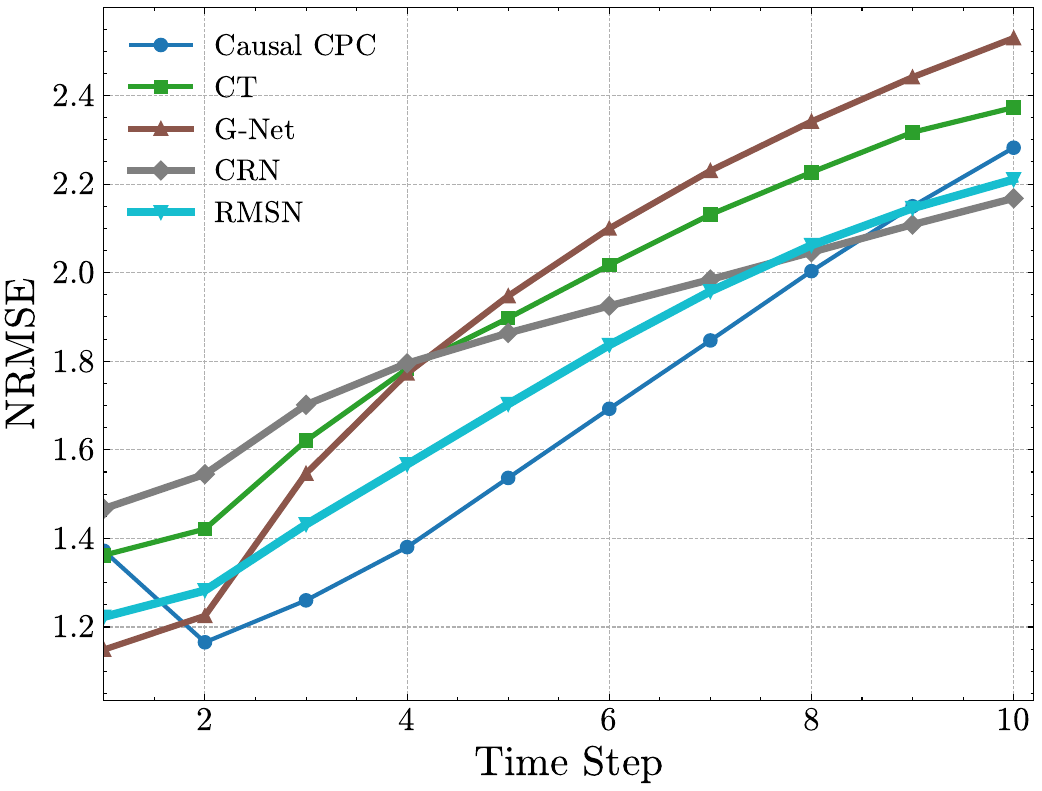}
    \caption{$\gamma =3$, sequence length 60}
    \label{fig:gamma3_seq60}
  \end{subfigure}
  
  \vspace{0.5cm} % Add space between rows

  % Second row of images
  \begin{subfigure}[b]{0.3\textwidth}
    \centering
    \includegraphics[width=\linewidth]{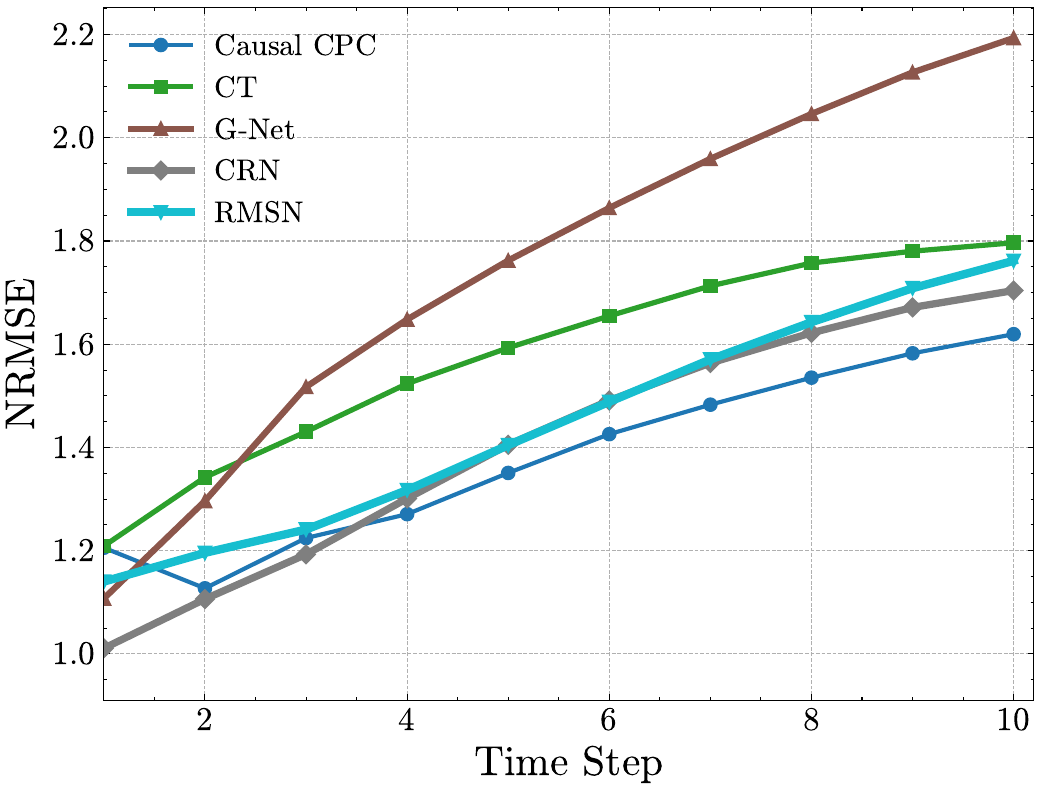}
    \caption{$\gamma =1$, sequence length 40}
    \label{fig:gamma1_seq40}
  \end{subfigure}%
  \hfill
  \begin{subfigure}[b]{0.3\textwidth}
    \centering
    \includegraphics[width=\linewidth]{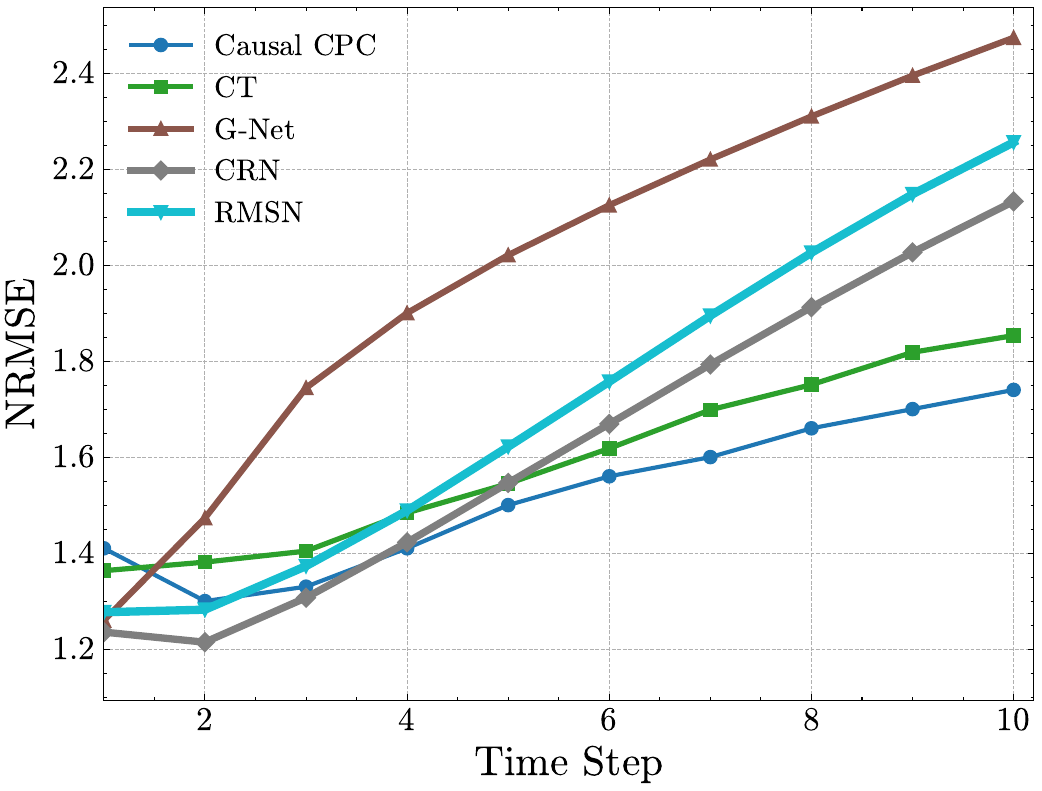}
    \caption{$\gamma =2$, sequence length 40}
    \label{fig:gamma2_seq40}
  \end{subfigure}%
  \hfill
  \begin{subfigure}[b]{0.3\textwidth}
    \centering
    \includegraphics[width=\linewidth]{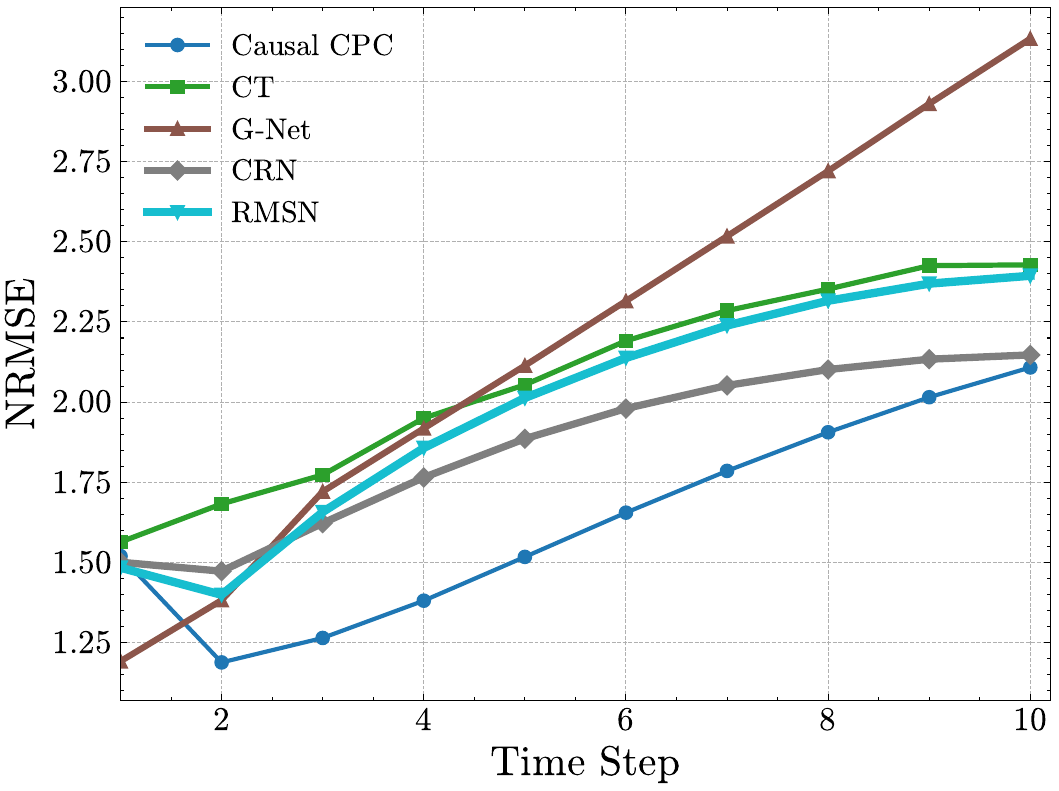}
    \caption{$\gamma =3$, sequence length 40}
    \label{fig:gamma3_seq40}
  \end{subfigure}
  % Global caption for the entire figure
\caption{Evolution of error (NRMSE) in estimating counterfactual responses for cancer simulation data. Top: training sequence length 60. Bottom: training sequence length 40. In both cases, $\tau=10$. MSM is excluded due to high prediction errors.}
  \label{fig:combined_cancer_sim}
\end{figure}
\paragraph{Results} We tested all models on the cancer simulation data across three confounding levels, \( \gamma = 1, 2, 3 \). Figures \ref{fig:gamma1_seq60}, \ref{fig:gamma2_seq60} and \ref{fig:gamma3_seq60} show the evolution of Normalized Root Mean Squared Error (NRMSE) over counterfactual tumor volume as the prediction horizon increases. Causal CPC consistently outperforms all baselines at larger horizons, demonstrating its effectiveness in long-term predictions. This confirms the quality of \( \mathbf{C}_{t} \) in predicting future components across multiple time steps, capturing the global structure of the process as discussed in Eq. (\ref{eq: MI_lb_avg}). Extended results are provided in Appendix \ref{subsect: benchmark_few_data_cancer}. 
\begin{wrapfigure}[12]{r}{0.40\textwidth}
     \centering
    \includegraphics[width = 0.37\textwidth, height = 0.13\textheight]{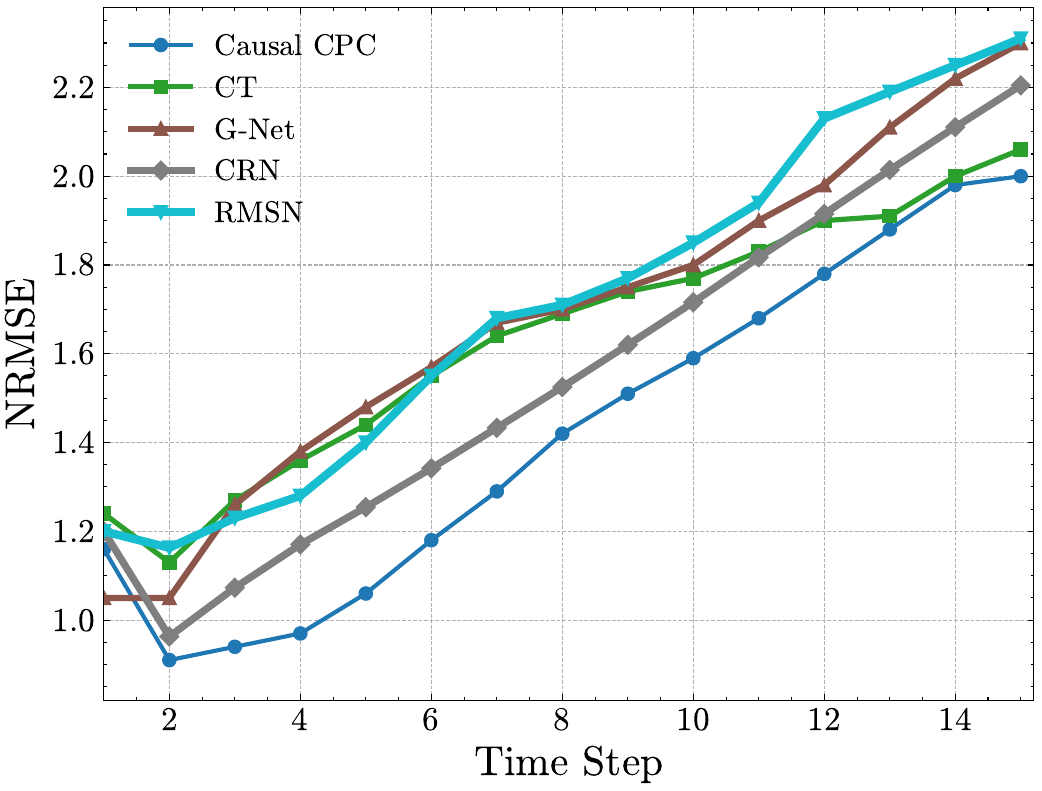}
    \caption{Models' performance for cancer simulation, $\gamma=2$, $\tau=15$.}
  \label{fig: gamma2_seq60_tau15}
\end{wrapfigure}
In the more challenging case where the maximum sequence length is 40 (Figures \ref{fig:gamma1_seq40}, \ref{fig:gamma2_seq40} and \ref{fig:gamma3_seq40}), the error evolution remains similar to Figure \ref{fig:gamma1_seq60}, \ref{fig:gamma2_seq60} and \ref{fig:gamma3_seq60}. Our model maintains its advantage, outperforming most baselines in long-term forecasting. However, Causal CPC does not outperform other models in short-term forecasting, a consistent limitation across experiments. Still, the model's superior long-term performance highlights its potential in applications requiring long-term accuracy. At higher levels of confounding, the model does not always outperform SOTA models at certain time steps. This may be due to the low dimensionality of the time-varying components and static covariates, \( \mathbf{U}_{t} = [\mathbf{V}, \mathbf{X}_{t}, W_{t-1}, Y_{t-1}] \), which have only four dimensions. Our model leverages contrastive learning-based regularization to excel on datasets with higher confounding dimensions, as demonstrated on MIMIC-III where \( \mathbf{U}_{t} \) has 72 dimensions. In this setting, our model consistently outperforms baselines at longer prediction horizons. The occasional underperformance of Causal CPC at the final horizon is due to $\tau=10$ being the last contrasted horizon, not an issue specific to $\tau=10$. To support this, we reran all models with a sequence length of 60, $\tau=15$, and $\gamma=2$. As shown in Figure \ref{fig: gamma2_seq60_tau15}, Causal CPC still outperforms SOTA for horizons beyond $\tau=10$ due to the encoder's retraining, where the InfoNCE loss is computed over all 15 time steps. The last prediction error remains close to SOTA, suggesting that training over larger horizons than initially intended may be beneficial.
%Our model is designed to excel on datasets with more confounding dimensions by leveraging contrastive learning-based regularization. This aligns with our findings on MIMIC-III, where \( \mathbf{U}_{t} \) has a much higher dimensionality of 72. In this setting, our model consistently outperforms baselines at longer prediction horizons. To further inspect why Caual CPC sometimes underperforms at the very last forecasting horizons, we show this is only because $\tau=10$ is the last forecasting horizon at which the process history was contrasted and not specifically related to $\tau=10$. To further our claim, we reran all models with the sequence length 60, $\tau=15$ and $\gamma=2$. Figure \ref{fig: gamma2_seq60_tau15} shows that Causal CPC still outperforms SOTA at horizons larger than $\tau=10$ since the encoder is retrained such that the InfoNCE loss is computed across all 15 time steps. The last prediction error is again close to that of SOTA, which suggests it may be desirable to train the model over larger forecasting horizons than initially intended.

\subsection{Experiments with semi-synthetic and real data}
\label{subsect: core_exp_mimic}
\paragraph{Semi-synthetic MIMIC-III} We used a semi-synthetic dataset constructed by \cite{Melnychuk2022CausalTF} based on the MIMIC-III dataset \citep{johnson2016mimic}, incorporating both endogenous temporal dependencies and exogenous dependencies from observational patient trajectories, as detailed in Appendix \ref{subsect: details_sim_mimic}. The patient trajectories are high-dimensional and exhibit long-range dependencies. Similar to the cancer simulation, the training data consisted of relatively few sequences (500 for training, 100 for validation, and 400 for testing). Table \ref{tab: perf_mimic_few_data} presents the mean and standard deviation of counterfactual predictions across multiple horizons ($\tau = 10$). We test two maximum sequence lengths, 100 and 60, to assess the models' robustness for long-horizon forecasting.
\begin{table*}[t]
 \caption{Evolution of RMSEs for the semi-synthetic MIMIC III, sequence length 100.}
 \centering
\resizebox{\textwidth}{!}{%
\begin{tabular}{|c |c |c |c |c |c |c |c |c |c |c|} 
\hline
Model       & $\tau = 1$   &   $\tau = 2$& $\tau = 3$&   $\tau = 4$&  $\tau = 5$&   $\tau = 6$& $\tau = 7$&  $\tau = 8$&   $\tau = 9$ & $\tau = 10$\\ \hline  
\textbf{Causal CPC (ours)}  &  0.32$\pm 0.04$&   0.45$ \pm 0.08$& 0.54$\pm 0.06$& 0.61 $\pm 0.10$&   \textbf{0.66$ \pm$ 0.10}&  \textbf{0.69$\pm$0.11}&   \textbf{0.71$ \pm$ 0.11}& \textbf{0.73$\pm $ 0.06}& \textbf{0.75 $\pm$ 0.05}& \textbf{0.77$ \pm$ 0.10} \\ 
\hline  
\textbf{CT} &  $0.42\pm0.38$&   \textbf{0.40$\pm$ 0.06}& \textbf{0.52$\pm$ 0.08}& \textbf{0.60$\pm$ 0.005}&  0.67$ \pm 0.10$  & 0.72 $\pm0.12$&   0.77$ \pm 0.13$ & 0.81$\pm 0.14$ &0.85 $\pm 0.16$&  0.88 $\pm $0.17   \\ 
\hline  
\textbf{G-Net} &  $0.54\pm0.13$&   0.72$ \pm 0.14$& 0.85 $\pm$0.16& 0.96 $\pm$ 0.17&  1.05 $\pm$ 0.18& 1.14 $\pm$0.18&   1.24$ \pm$ 0.17& 1.33$\pm 0.16$ & 1.41 $\pm$ 0.16&  1.49$ \pm $0.16\\ 
\hline  
\textbf{CRN} &  \textbf{0.27 $\pm$0.03}&   0.45$ \pm 0.08$& 0.58 $\pm$ 0.09& 0.72$\pm$ 0.11&   0.82$\pm$ 0.15& 0.92 $\pm$ 0.20&   1.00 $\pm$ 0.25& 1.06 $\pm$ 0.28& 1.12 $\pm$ 0.32&  1.17 $\pm$ 0.35\\ 
\hline  
\textbf{RMSN} &  0.40 $\pm$ 0.16&   0.70 $\pm$ 0.21& 0.80$\pm$ 0.19& 0.88 $\pm$ 0.17&   0.94 $\pm$ 0.16& 1.00 $\pm$ 0.15&  1.05 $\pm$ 0.14& 1.10 $\pm$ 0.14& 1.14 $\pm$ 0.13& 1.18 $\pm$ 0.13 \\ 
\hline  
 \end{tabular}
 }
\label{tab: perf_mimic_few_data}
\end{table*}
\vspace{-0.5cm} % Reduce space between the two tables
\begin{wrapfigure}[10]{r}{0.40\textwidth}
     \vspace{10pt}
     \centering
    \includegraphics[width = 0.37\textwidth, height = 0.10\textheight]{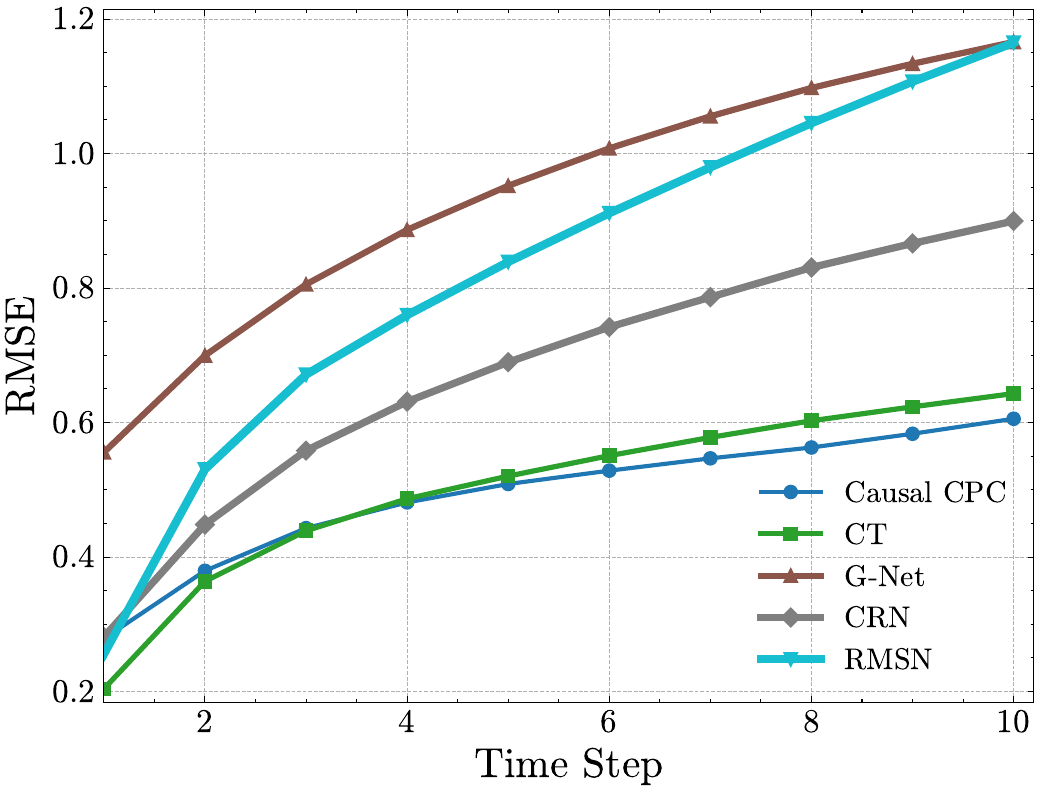}
    \caption{Performance for MIMIC III semi-synthetic, sequence length 60.}
  \label{fig: prf_mimic_seq_60}
\end{wrapfigure}
\paragraph{Results} Causal CPC consistently outperformed the baselines, especially at larger horizons, both with a sequence length of 100 and a reduced length of 60 (Figures \ref{tab: perf_mimic_few_data}, \ref{fig: prf_mimic_seq_60}). Its superior performance at longer horizons is likely due to the high number of covariates, making it well-suited to contrastive-based training. We also tested the models with 800/200/200 individuals for training/validation/testing, as in \cite{Melnychuk2022CausalTF} (Appendix \ref{subsect: results_orig_mimic}), where Causal CPC achieved state-of-the-art (SOTA) results comparable to CT but with much shorter training and prediction times.
\begin{table}[htbp]
\begin{minipage}{0.55\textwidth}
\centering
\caption{Models complexity and the running time averaged over five seeds. Results are reported for tumor growth simulation ($\gamma =1$). Hardware: GPU-1xNVIDIA Tesla M60.}
\label{tab: complexity_running_time_cancer}
\resizebox{\textwidth}{!}{%
\begin{tabular}{lccccr}
\toprule
Model & Trainable parameters (k) & Training time (min) & Prediction time (min) \\
\midrule
\textbf{Causal CPC (encoder + decoder)} & 8.2 & 16$\pm$ 3 &  4 $\pm$ 1 \\
\textbf{CT} & 11 & 12$\pm$ 2 & 30$\pm$ 3 \\
\textbf{G-Net} & 1.2 & 2 $\pm$ 0.5 & 35 $\pm$ 3 \\
\textbf{CRN} & 5.2 & 13$\pm$ 2 & 4$\pm$ 1 \\
\textbf{RMSN} & 1.6 & 22$\pm$ 2 & 4$\pm$ 1 \\
\textbf{MSM} & \textbf{$<$0.1} & \textbf{1$\pm$0.5} & \textbf{1$\pm$0.5} \\
\bottomrule
\end{tabular}
}
\end{minipage}
\hfill
\begin{minipage}{0.40\textwidth}
\centering
\caption{Ablation study with NRMSE averaged across ($1\leq \tau \leq 10$) for cancer simulation ($\gamma = 1$) and MIMIC III.}
\label{tab: ablation_study}
\resizebox{\textwidth}{!}{%
\begin{tabular}{lcccr}
\toprule
Model & Cancer\_Sim & MIMIC III \\
\midrule
Causal CPC (Full) & \textbf{1.05} & \textbf{0.62} \\ 
Causal CPC (w/o $\mathcal{L}^{(InfoNCE)}$) & 1.07 & 0.68 \\
Causal CPC (w/o $\mathcal{L}^{(InfoMax)}$) & 1.13 & 0.74 \\\hline
Causal CPC (w CDC loss) & 1.07 & 0.73 \\
Causal CPC (w/o balancing) & 1.08 & 0.69 \\
\bottomrule
\end{tabular}
}
\end{minipage}
\end{table}
\paragraph{Computational Efficiency and Model Complexity} Efficient execution is crucial for practical deployment, especially with periodic retraining. Beyond training, challenges arise in evaluating multiple counterfactual trajectories per individual, which grow exponentially with the forecasting horizon as $K^{\tau}$, where $K$ is the number of possible treatments. This is particularly relevant when generating multiple treatment plans, such as minimizing tumor volume. Table \ref{tab: complexity_running_time_cancer} shows the models' complexity (number of parameters) and running time, split between model fitting and prediction. Causal CPC is highly efficient during prediction due to its simple 1-layer GRU, similar to CRN (1-layer LSTM), while providing better ITEs estimation. In contrast, CT is less efficient due to its transformer architecture and teacher forcing, which requires recursive data loading during inference. G-Net also has longer prediction times due to Monte Carlo sampling. Overall, Causal CPC strikes a strong balance between accuracy and efficiency, making it well-suited under constrained resources.

\paragraph{Real MIMIC-III Data}
\begin{wrapfigure}[10]{r}{0.37\textwidth}
     \vspace{-10pt}  % Adjust this value to move the figure up
     \centering
     \includegraphics[width=0.34\textwidth, height=0.12\textheight]{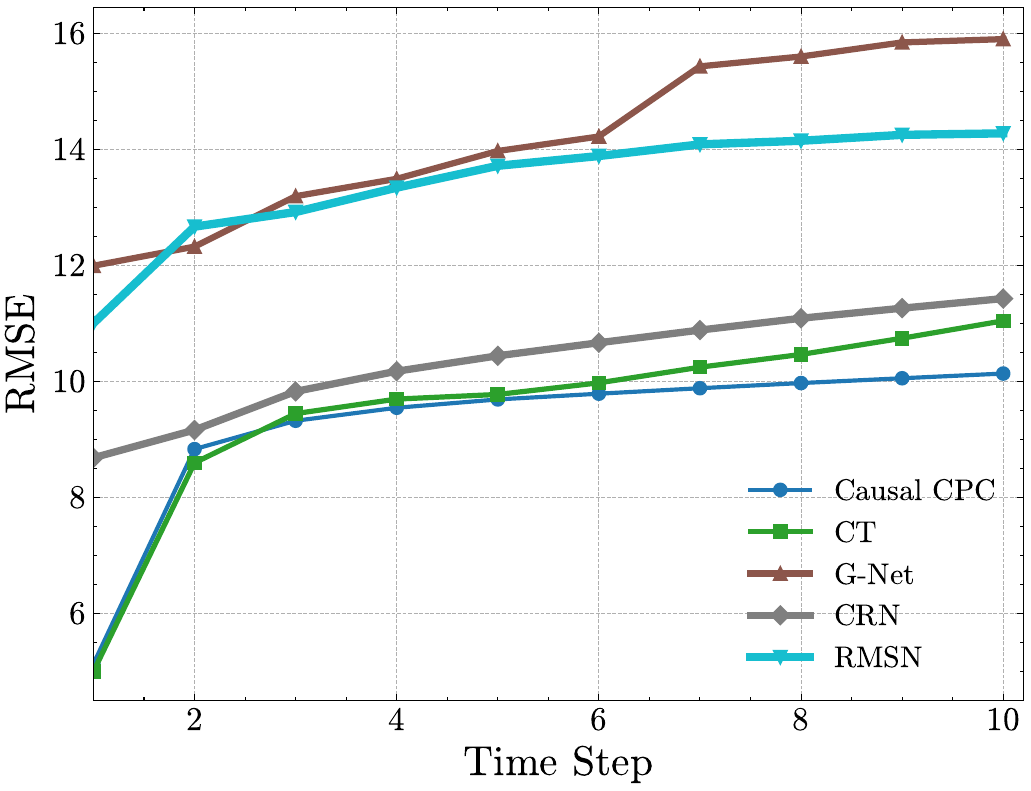}
     \caption{Evolution of RMSEs, Real MIMIC III, sequence length 100.}
     \label{fig: prf_mimic_real}
\end{wrapfigure}
We evaluated our model on real MIMIC-III data, where counterfactual trajectories cannot be assessed due to the absence of observed counterfactual responses. However, performance can still be measured by forecasting factual (observed) responses over time. Our model estimates responses for each individual based on their observed treatment trajectory. As shown in Figure \ref{fig: prf_mimic_real}, Causal CPC consistently outperforms all baselines, especially at larger horizons, demonstrating its robustness and effectiveness in real-world settings.
\section{Discussion}
\paragraph{Why does Causal CPC outperform SOTA at large horizons?}  
Our context \(\mathbf{C}_t\) is designed to capture shared information across future representations, particularly covariates, by minimizing the InfoNCE loss over multiple time steps (Eq. \ref{eq: infonce_loss_cpc}). As shown in Eq. (\ref{eq: MI_lb_avg}), minimizing \(\mathcal{L}^{CPC}\) maximizes shared information between the context and future components, helping capture the \emph{global structure} of the process. This is especially beneficial for counterfactual regression over long horizons, explaining the model's superior performance. However, it may not always outperform SOTA in shorter-term predictions due to its focus on long-term dependencies.

\paragraph{Short-term Counterfactual Regression}  
While our model is designed for long-term predictions, it may not consistently outperform SOTA for short-horizon tasks. However, the use of contrastive loss, particularly InfoNCE (Eq. \ref{eq: cpc_loss}), suggests potential adaptability to balance both short- and long-term predictions without retraining. A trade-off could be achieved by adjusting the contrastive term weights across time steps in Eq. (\ref{eq: cpc_loss}), which we leave for future work.
\begin{wrapfigure}[11]{r}{0.40\textwidth}
     \vspace{8pt}  % Adjust this value to move the figure up
     \centering
    \includegraphics[width = 0.37\textwidth, height = 0.12\textheight]{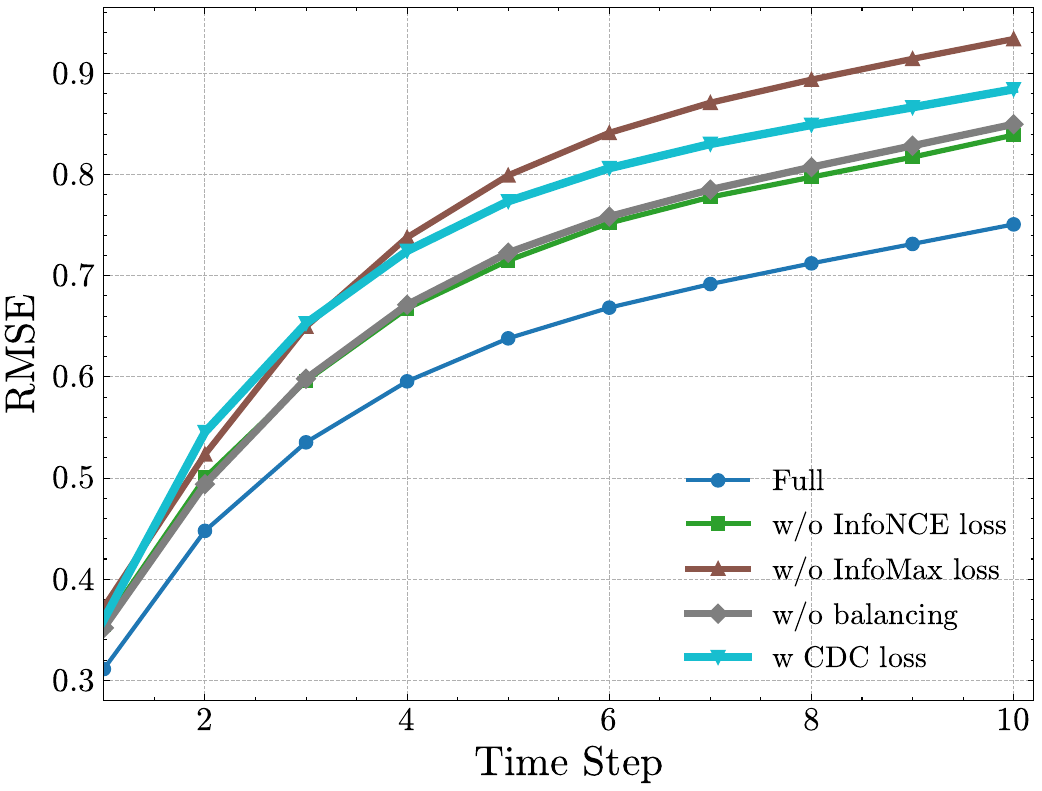}
    \caption{Ablation study of Causal CPC on MIMIC III.}
  \label{fig: prf_mimic_ablation_core}
\end{wrapfigure}

\paragraph{Ablation study} 
We examined the model's performance in various configurations—full model, without CPC, and without InfoMax. Table \ref{tab: ablation_study} and Figure \ref{fig: prf_mimic_ablation_core} show that removing either term reduces counterfactual accuracy across all horizons, underscoring their significance. Additionally, replacing our ICLUB objective with CDC loss \citep{Melnychuk2022CausalTF} or removing balancing increases errors. We also tested different MI lower bounds like NWJ \citep{nguyen2010estimating} and MINE \cite{belghazi2018mutual} for both CPC and InfoMax (Appendix \ref{subsect: extended_work_mi_ss}), finding that InfoNCE yielded the best results (Table \ref{tab: mimic_mi_estimators}). Full ablation results are in Appendices \ref{subsect: ablation_results_cancer} and \ref{subsect: ablation_results_mimic}.

\paragraph{Falsifiability Test}
This study assumes sequential ignorability, common in causal inference \cite{lim2018RMSM, bica2020estimatingCRN, Li2021GNetAR, Melnychuk2022CausalTF}. To assess robustness, we performed a falsifiability test by omitting certain confounders during training, while they remained in MIMIC-III data construction. As seen in Table \ref{tab: perf_mimic_hidden_conf}, violating sequential ignorability increased prediction errors for Causal CPC, CT, and CRN, though RMSN was less affected but underperformed at \(\tau \geq 2\). Despite this, Causal CPC maintained its lead at larger horizons, demonstrating strong encoding of long-term dependencies.
\vspace{-0.5cm}
\begin{table}[!htbp]
\centering
\caption{Results on the MIMIC III when sequential ignorability is violated reported by RMSEs}
\label{tab: perf_mimic_hidden_conf}
\resizebox{\textwidth}{!}{%
\begin{tabular}{|c |c |c |c |c |c |c |c |c |c |c|} 
\hline
Model  & $\tau = 1$   &   $\tau = 2$&  $\tau = 3$&   $\tau = 4$&  $\tau = 5$&   $\tau = 6$& $\tau = 7$&  $\tau = 8$&   $\tau = 9$ & $\tau = 10$\\ 
\hline  
Causal CPC  &  0.44$\pm $ 0.04&   0.56$ \pm$ 0.07& 0.66$\pm 0.07$& 0.73 $\pm$ 0.08&   0.78$\pm$ 0.08&  0.83$\pm$0.06&   \textbf{0.86$ \pm$ 0.10}& \textbf{0.88$\pm $ 0.08}& \textbf{0.91 $\pm$ 0.08}& \textbf{0.95$ \pm$ 0.07}\\ 
\hline
Causal Transformer &  \textbf{0.34$\pm$0.07}  &\textbf{0.48$\pm$0.07} & \textbf{0.60$\pm$0.07} & \textbf{0.68$\pm$0.06} &  \textbf{0.75$ \pm$ 0.06} & \textbf{0.80$\pm$0.07}&  \textbf{0.86$\pm$ 0.09}& 0.91$\pm 0.11$&0.95 $\pm 0.13$&  1.00 $\pm $0.15\\ 
\hline 
CRN& 0.40$\pm $ 0.07& 0.54$ \pm$ 0.09& 0.70$ \pm$ 0.09& 0.84$ \pm$ 0.09& 0.97$ \pm$ 0.09& 1.08$ \pm$ 0.13& 1.18$ \pm$ 0.16& 1.26$ \pm$ 0.19& 1.33$ \pm$ 0.21&1.39$ \pm$ 0.23\\
 \hline
RMSN & 0.38$ \pm$ 0.08& 0.67$ \pm$ 0.21& 0.78$ \pm$ 0.16& 0.84$ \pm$ 0.14& 0.91$ \pm$ 0.14& 0.98$ \pm$ 0.15& 1.04$ \pm$ 0.16& 1.09$ \pm$ 0.18& 1.15$ \pm$ 0.19&1.20$ \pm$ 0.23\\
  \hline
 \end{tabular}
 }
\end{table}
\vspace{-0.5cm}
\paragraph{Tightness of MI Upper Bounds}  
Estimating MI bounds for high-dimensional variables is challenging and expensive \citep{rainforth2018tighter, poole2019VBMI}, often limited to low-dimensional inputs or Gaussian assumptions. In MI-constrained models, batch size is crucial. As shown in Eq. (\ref{eq: LB_MI_InfoNCE}), increasing the batch size \(\mathcal{B}\) tightens the lower bound via \(\log(|\mathcal{B}|)\), and to balance memory and performance, we chose batch sizes of 256 for the encoder and 128 for the decoder. While these bounds may not be perfectly tight, mutual information and self-supervision biases significantly enhance performance (Appendix \ref{subsect: extended_work_mi_ss}), as confirmed by ablation studies. Other MI estimators like NWJ and MINE \citep{nguyen2010estimating, belghazi2018mutual} did not improve performance; our initial setup consistently performed better (Appendix \ref{subsect: detailed_results_mimic_ablation}).

\paragraph{Extending Causal CPC to Continuous Treatment}  
Our approach could be extended to continuous treatments by replacing the treatment classifier with a regressor. Since the method maximizes likelihood, the equilibrium in Theorem \ref{thm: blancing_iclub} remains valid. However, in practice, continuous treatments will be represented by a single dimension, unlike discrete treatments with \(K\)-dimensional one-hot encoding. This risks losing important treatment information in counterfactual predictions. A simpler adaptation to our model could involve discretizing continuous treatments. % The key difference is handling continuous rather than discrete treatments, turning it into a \emph{functional optimization problem} (Appendix \ref{})

\paragraph{Conclusion} We introduced a novel approach to long-term counterfactual regression, combining RNNs with CPC to achieve SOTA results without relying on complex transformer models. Prioritizing computational efficiency, we incorporated contrastive loss-based regularization guided by mutual information (MI). Our method consistently outperforms existing models on both synthetic and real-world datasets, marking the first application of CPC in causal inference. Future work could focus on improving interpretability by integrating Shapley values into the causal framework \citep{heskes2020causalSahap}. Additionally, developing uncertainty-aware models tailored for longitudinal data is crucial for enhancing the reliability of predictions in our causal framework \citep{de2022predicting, jesson2020identifying, yin2024conformal}.

\newpage
% Acknowledgements should only appear in the accepted version.
% \section*{Acknowledgements}
\bibliography{refs}
% \bibliographystyle{apa}

%%%%%%%%%%%%%%%%%%%%%%%%%%%%%%%%%%%%%%%%%%%%%%%%%%%%%%%%%%%%
\newpage
\appendix
\section{Impact Statements} 
\label{sect: impact}
Our paper seeks to advance the field of Trustworthy Machine Learning by focusing on the accurate estimation of counterfactual trajectories. This capability holds significant potential to enhance decision-making processes across various domains, particularly in healthcare, where clinicians can leverage models designed to mitigate bias and promote fairness. Additionally, by focusing on efficiency, our contributions extend beyond traditional machine learning considerations to address environmental concerns associated with energy consumption. By advocating for the prudent use of computational resources, especially in training complex models deployed in real-world scenarios, we aim to promote sustainability in developing and applying machine learning solutions.

\section{Causal assumptions}
\subsection{Identifiability Assumptions in Causal CPC}
\label{subsect: identif_ass_CCPC}
In this section, we detail the assumptions used for the identifiability of the counterfactual responses $\mathbb{E}(Y_{t+\tau}(\omega_{t+1:t+\tau})\mid \mathbf{H}_{t+1})$. As briefly stated in Section \ref{sect: related_work},  we follows similar assumptions to \cite{robins-time-varying-exposures, robins2000MSM, bica2020estimatingCRN, Melnychuk2022CausalTF}, namely 
\begin{assumption}[Consistency] 
For every time step $t$ and given any manner by which a unit $i$ receives the sequence of treatment $\omega_{i,\leq t}$, we always observe the potential outcome $Y_{it}(\omega_{i,\leq t})$. Formally:
\begin{equation*}
    W_{i,\leq t} = w_{i,\leq t} \implies Y_{it} = Y_{it}(w_{i,\leq t}).
\end{equation*}
\label{assp: consist}
\end{assumption}

\begin{assumption}[Sequential Ignorability]
Given any time step t, we have the conditional independence:
\begin{equation*}
    Y_{it}(\omega_{it}) \indep  W_{it}| \mathbf{H}_{it} = \mathbf{h}_{it} \quad \forall (\omega_{it},\mathbf{h}_{it})
\end{equation*}
    \label{assp: ignorability}
\end{assumption}

\begin{assumption}[Overlap/positivity]
Given any time step $t$, and for any possible historical context $\mathbf{h}_t$, the probability of observing any of the possible treatment regimes is strictly positive but not deterministic:
    \begin{equation*}
        p(\mathbf{h}_t) \neq 0 \implies  0 < p(W_t =\omega_{t}|\mathbf{h}_t) < 1
    \end{equation*}
\label{assp: overlap}
\end{assumption}

The three assumptions are sufficient for the identification of the counterfactual responses from observational data, which we formulate in the following proposition. 

\begin{proposition}
    Assuming consistency, overlap, and ignorability  (assumptions \ref{assp: consist}, \ref{assp: ignorability}, \ref{assp: overlap}), the causal quantity $\mathbb{E}(Y_{t+\tau}(\omega_{t+1:t+\tau})\mid \mathbf{H}_{t+1})$ is identifiable from observational data following 
    \begin{equation*}
        \mathbb{E}(Y_{t+\tau}(\omega_{t+1:t+\tau})\mid \mathbf{H}_{t+1}) = \mathbb{E}\left(Y_{t+\tau}\mid \mathbf{H}_{t+1}, W_{t+1:t+\tau} = \omega_{t+1:t+\tau}\right)
    \end{equation*}
\end{proposition}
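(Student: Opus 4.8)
The plan is to prove the identity by induction on the horizon $\tau$, peeling off one treatment assignment at a time and alternating applications of sequential ignorability (Assumption~\ref{assp: ignorability}) and consistency (Assumption~\ref{assp: consist}), with overlap (Assumption~\ref{assp: overlap}) guaranteeing that every conditioning event has positive probability so that all the conditional expectations appearing along the way are well defined.

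For the base case $\tau = 1$, I would first invoke sequential ignorability at time $t+1$, which gives $Y_{t+1}(\omega_{t+1}) \indep W_{t+1} \mid \mathbf{H}_{t+1}$ and hence
\[
\mathbb{E}(Y_{t+1}(\omega_{t+1}) \mid \mathbf{H}_{t+1}) = \mathbb{E}(Y_{t+1}(\omega_{t+1}) \mid \mathbf{H}_{t+1}, W_{t+1} = \omega_{t+1}).
\]
On the event $\{W_{t+1} = \omega_{t+1}\}$, consistency lets me replace the potential outcome $Y_{t+1}(\omega_{t+1})$ by the observed outcome $Y_{t+1}$, which yields the claimed equality.

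For the inductive step, assume the identity holds for horizon $\tau-1$ starting from any baseline history. I would proceed in three moves. First, using sequential ignorability at $t+1$ applied to the whole vector of subsequent potential outcomes, I replace conditioning on $\mathbf{H}_{t+1}$ by conditioning on $\mathbf{H}_{t+1}$ together with $\{W_{t+1} = \omega_{t+1}\}$. Second, by the tower property I write the expectation as an outer expectation over the one-step-ahead observed history $\mathbf{H}_{t+2} = [\mathbf{V}, \mathbf{X}_{\leq t+2}, W_{\leq t+1}, Y_{\leq t+1}]$, nested around $\mathbb{E}(Y_{t+\tau}(\omega_{t+2:t+\tau}) \mid \mathbf{H}_{t+2})$; here consistency—now applied to the intermediate covariate, treatment, and outcome realizations—is what lets me pass from the history generated under the intervention to the observed history, and from the full nested potential outcome to the one indexed only by the remaining treatments $\omega_{t+2:t+\tau}$. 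Third, I apply the induction hypothesis to the inner conditional expectation, which is exactly the horizon-$(\tau-1)$ statement with baseline $\mathbf{H}_{t+2}$, and then collapse the tower to recover $\mathbb{E}(Y_{t+\tau} \mid \mathbf{H}_{t+1}, W_{t+1:t+\tau} = \omega_{t+1:t+\tau})$.

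The main obstacle is the careful bookkeeping of nested sequential potential outcomes: I must ensure that sequential ignorability is used in its joint form—independence of the entire future potential-outcome process from the current treatment given history, not merely of a single scalar—and that the recursive consistency relating $Y_{t+\tau}(\omega_{t+1:t+\tau})$ to $Y_{t+\tau}(\omega_{t+2:t+\tau})$ on the event $\{W_{t+1}=\omega_{t+1}\}$ is applied correctly at each peeling step. A secondary technical point is that the induction hypothesis is invoked at the \emph{random} baseline $\mathbf{H}_{t+2}$, which is legitimate precisely because Assumptions~\ref{assp: consist}--\ref{assp: overlap} are posited to hold at every time step; positivity is what keeps all intermediate conditional expectations well defined throughout the recursion.
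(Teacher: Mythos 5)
The paper does not give a self-contained argument for this proposition---its ``proof'' is a one-line deferral to Robins---so the only thing to assess is whether your derivation is sound. Your base case $\tau=1$ is correct, and your peeling strategy (sequential ignorability at each step, consistency on the event $\{W=\omega\}$, then the tower property over the next observed history) is exactly the standard route to Robins' identification result. The genuine gap is in your final move, ``collapse the tower.''

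After the tower step and the induction hypothesis, what you actually hold is
\[
\mathbb{E}\Bigl[\, \mathbb{E}\bigl(Y_{t+\tau} \mid \mathbf{H}_{t+2},\, W_{t+2:t+\tau}=\omega_{t+2:t+\tau}\bigr) \,\Bigm|\, \mathbf{H}_{t+1},\, W_{t+1}=\omega_{t+1}\Bigr],
\]
the iterated g-computation formula, in which the intermediate history $\mathbf{H}_{t+2}$ is averaged over its law given $(\mathbf{H}_{t+1}, W_{t+1}=\omega_{t+1})$ alone. To collapse this into $\mathbb{E}(Y_{t+\tau}\mid\mathbf{H}_{t+1}, W_{t+1:t+\tau}=\omega_{t+1:t+\tau})$ you would instead need to average $\mathbf{H}_{t+2}$ over its law given the additional event $\{W_{t+2:t+\tau}=\omega_{t+2:t+\tau}\}$; the two averaging measures coincide only if $\mathbf{H}_{t+2}\indep W_{t+2:t+\tau}\mid \mathbf{H}_{t+1},W_{t+1}$, which is precisely what time-dependent confounding (future treatments reacting to intermediate covariates and outcomes) rules out, and which the three stated assumptions do not deliver. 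So your induction correctly proves the g-formula---which is what the cited Robins reference establishes---but the last step asserting it equals the single conditional expectation on the right-hand side does not follow; that identity is immediate only for $\tau=1$ (your base case), and for $\tau\geq 2$ it requires an extra argument or assumption about how future treatments are assigned.
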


\begin{proof}
    See \cite{robins-time-varying-exposures}
\end{proof}
\subsection{On the Causal Graph}
\label{subsect: causal_graph}
We repeat the causal graph introduced in Figure \ref{fig: causal_graph} to explain the data generation process. here, all of the past observed data encompassed in $\mathbf{H}_{t+1}$ confounds future treatments and responses, $W_{t+1}, W_{t+2}, \dots, W_{t_{max}}$ and $Y_{t+1}, Y_{t+2}, \dots, Y_{t_{max}}$, which create long-term dependencies. The fact that post-covariates are affected by past treatments creates time-dependent confounding. The static covariates are assumed to be affecting all of the time-varying variables. Since we suppose sequential ignorability, there are no possible exogenous noises affecting both treatments and responses. However, such noise may possibly affect responses, time-varying covariates, and response variables. 

In the figure, for simplicity, we represent past treatments as $W_{ \leq t}$ such that each element in that sub-sequence confounds the next treatment and response $W_{t+1}$ and $Y_{t+1}$. Idem for $Y_{ \leq t}$ and $\mathbf{X}_{ \leq t}$. The static covariates $\mathbf{V}$ are assumed to be affecting all the time-varying variables. We omit the representation of exogenous noise for simplicity. Interactions between $W_{ \leq t}$ , $\mathbf{X}_{ \leq t}$, and $Y_{ \leq t}$ were also omitted for simplicity.

\begin{figure}[!htbp]
\begin{center}
\includegraphics[scale = 0.7]{figs/cg_compressed_ccpc.pdf}% height=5.5cm, width=16cm
%\captionsetup{labelformat=empty}
%\caption{}
\label{fig: appendix_causal_graph}
\end{center}
\end{figure}

\section{Extended related work} 
\label{sect: extended_work}
\subsection{Counterfactual regression over time: Methods overview }
\label{subsect: extended_work_cf}
\subsubsection{Methods included in experiments}
In this section, we give a brief overview of models included in our experiments: MSMs \cite{robins-time-varying-exposures}, RMSN \cite{lim2018RMSM}, CRN \cite{bica2020estimatingCRN}, G-Net \cite{Li2021GNetAR}, and CT \cite{Melnychuk2022CausalTF}. To delineate the differences between these models and Causal CPC, we detail in Table \ref{tab: overview_benchmark} the main design differences between all these models. 

\begin{table}[!htbp]
 \centering
  \caption{A summary of the methods included in our experiments}
\resizebox{\textwidth}{!}{%
\begin{tabular}{|p{3cm}|p{3cm}|p{3cm}|p{3cm}|p{3cm}|p{3cm}|p{3cm}|p{3cm}|}  
\hline  
Model &  Model Backbone  &  Tailored to long-term forecast? & Learning of long-term dependencies &  Use of contrastive learning &  Prediction of counterfactuals & handling selection bias &   Invertibility of representation  \\ \hline  

\textbf{Causal CPC (ours)}  & GRU &   yes  & CPC  & learn long-term relations &   Autoregressive &  Balanced representation  &  yes, contrast representation with input \\ \hline  
\textbf{CT} & 3 Transformers&	yes&	Transformer architecture&	N/A	& Autoregressive &	Balanced representation &N/A \\ 
\hline 
\textbf{G-Net} & LSTM& 	No	& N/A& 	N/A	& Autoregressive&  	G-Computation& 	Current covariates $\mathbf{X}_t$ \\ 
\hline  
\textbf{CRN} &  LSTM	&No	 &N/A	&N/A	&Autoregressive &	Balanced representation &	N/A \\ 
\hline 
\textbf{RMSN} &  LSTM& 	No& N/A& 	N/A& Autoregressive &  Weighting& 	N/A\\ 
\hline  
\textbf{MSM} &  Logistic+linear model& 	No& N/A& 	N/A& Autoregressive&  Weighting& 	N/A\\ 
\hline  
\end{tabular}
}
\label{tab: overview_benchmark}
% \begin{tablenotes}
% \item[1] N/A: Not Applied. 
% \item[2] Mirror movements in the affected hand, while the unaffected hand is moving at fast speed. 
% \end{tablenotes}
\end{table}

\subsubsection{Methods Violating Our Assumptions}
\label{appendix: Methods_viol_assp}
% unconf, proxies, ...
% NN are in seedat2022continuous
%\subsubsection{Methods verifying causal assumptions but do not fit experiments}
%\cite{de2022predicting} supposes continuous time and work only on binary treatment.
%\cite{wu2023counterfactual} introduces a counterfactual generative model for time-varying treatments but for \emph{high dimensional outcomes} and is not designed to causal forecast over multiple time steps

Our work relies on the assumption of sequential ignorability, yet several alternative models operate under different assumptions, often addressing the presence of unobserved confounders. Some of these models are rooted in deconfounding theory \citep{lopezmultipletreatments, ranganath2018multiple, wang2019blessings}, which has been extended to time-varying settings. Deconfounding involves imposing a factor model on treatment assignment, where each treatment becomes conditionally independent given latent variables that act as proxies for unobserved confounders. Examples of this approach include \cite{Bica2020TimeSDecounf, hatt2021sequential, cao2023estimating}. Other models assume the presence of proxy variables, inferring a representation of unobserved confounders through probabilistic models based on these proxies \citep{veitch2020adapting, LuChengCMAproxynConf20121, kuzmanovic2021deconfounding}.

In contrast to our setting, which is governed by the three causal assumptions in Appendix \ref{subsect: identif_ass_CCPC}, many models assume a data-generating process similar to \citep{Soleimani2017TreatmentResponseMF,soleimani2017scalable, qian2021synctwin}. These methods, often non- or semi-parametric, tend to either ignore static covariates or handle them linearly, leading to computational inefficiencies and scalability issues. Nevertheless, some non- or semi-parametric approaches—such as \cite{schulam2017reliable, seedat2022continuous,de2022predicting, hizli2023causal}—align with our causal assumptions but extend them to continuous time, treating sequential ignorability in a continuous setting.

Additionally, models like \cite{JiangCF-GODE2023} incorporate continuous-time and assume interactions between units, where an individual’s outcome depends on both their treatment and the treatments of others. \cite{berrevoets2021disentangled}, focusing on binary treatment sequences, requires a stronger version of sequential ignorability—conditional on current covariates—whereas our model assumes a weaker version, conditioning on the entire history of covariates to account for long-lasting confounding effects. 

Furthermore, \cite{chen2023multi} focuses solely on binary treatments and targets the estimation of the average treatment effect on the treated (ATT). The authors assume a specific treatment regime, where individuals enter a post-treatment state after a defined point in time. This assumption is restrictive compared to our framework, which allows for complex, individualized treatment assignment mechanisms and non-binary treatments, where treatment values can fluctuate over time. As a result, \cite{chen2023multi} is incompatible with our causal assumptions.

Other methods, like \cite{wu2023counterfactual}, address high-dimensional counterfactual generation based on time-varying treatment plans under the same sequential ignorability assumption. However, they are not designed for causal forecasting over multiple time steps, as required in our setting. Similarly, \cite{frauen2023estimating} focuses on estimating the average causal effect and is not suited for predicting individual treatment effects or conditional counterfactual responses, as it targets marginal counterfactual expectations via g-computation.

\subsection{Mutual Information and Self-Supervision}
\label{subsect: extended_work_mi_ss}
\textbf{Self-Supervised Learning and Mutual Information} In self-supervised learning, Deep InfoMax \cite{DeepInfoMax2019} uses MI computation between input images and their representations, focusing on maximizing MI to improve reconstruction quality. Local MI between representations and image patches captures detailed patterns across regions, enhancing encoding. By maximizing average MI with local regions, Deep InfoMax significantly boosts downstream task performance, while global MI plays a key role in reconstructing the entire input from the representation.

CPC aligns with the MI-based approach seen in Deep InfoMax, emphasizing the maximization of MI between global and local representation pairs. Distinct from Deep InfoMax, CPC processes local features sequentially, constructing partial "summary features" to predict specific local features in the future. While classical self-supervised paradigms often focus on tasks like classification or reconstruction-based objectives—favoring either local or global MI maximization—integrating both approaches becomes essential for downstream tasks such as counterfactual regression over time, which justifies why Causal CPC is designed to support both local and global MI maximization to improve temporal predictions.

Several other methods share similarities with CPC, such as Contrastive Multiview Coding \cite{tian2020contrastive}. This method emphasizes maximizing mutual information between representations of different views of the same observation. Augmented Multiscale Deep InfoMax \cite{bachman2019learning}, akin to CPC, makes predictions across space but differs by predicting representations across layers in the model. While Instance Discrimination \cite{zhao2020makes} encourages representations capable of discriminating between individual examples in the dataset, our preference for CPC arises from its adaptability in processing sequential features in an ordered and autoregressive manner, which aligns with the requirements of our specific context, especially when dealing with counterfactual regression over time.

\textbf{Mutual Information and Inductive Bias.} Mutual information (MI) estimation success relies not only on MI's properties but also on the inductive biases from feature representation choices and MI estimator parameterization \cite{Tschannen2020OnMIMAX}. Experimental evidence shows that, although MI remains invariant under homeomorphisms, maximization with an invertible encoder during random initialization enhances downstream performance. While higher-capacity critics yield tighter MI bounds, findings consistent with \cite{rainforth2018tighter} suggest that simpler critics provide better representations, even with looser MI bounds. Accordingly, we selected a simple bilinear critic function for contrastive losses. In vision tasks, augmentations and contrastive loss properties are crucial for representation efficiency \citep{arora2019theoretical, tosh2021contrastive, haochen2021provable}, and \cite{saunshi2022CL_InductiveBias} highlights that inductive bias, via function class representation and optimizers, significantly affects downstream performance, offering theoretical, non-vacuous guarantees on representation quality.
%Despite fixing the lower bound estimator to the same value for different architectures, downstream performance varies, highlighting the intricate relationship between MI estimators and learning dynamics. MI estimators often favor hard-to-invert encoders, leading to an increase in the condition number with training.
%If data augmentations do not overlap, we can still find architecture that minimizes the contrastive loss but only random guesses on downstream tasks. It's still not clear if sufficient overlap holds in practice.  

\textbf{Variational Approaches and MI Estimation Challenges} The estimation of MI faces inherent challenges, particularly within variational lower bounds. These bounds often degrade as MI increases, creating a delicate trade-off between high bias and high variance. To address this, methods that utilize upper bounds on MI have been developed, attempting to mitigate challenges associated with variational bounds. One strategy for MI maximization involves computing gradients of a lower MI bound concerning the parameters of a stochastic encoder. This computational approach potentially eliminates the need for direct MI estimation, providing a more tractable solution. However, estimating MI from samples remains challenging, and traditional approaches encounter scalability issues in modern machine-learning problems.

It's crucial to note that higher estimated MI between observations and learned representations does not consistently translate to improved predictive performance in downstream supervised learning tasks. CPC is an example, exhibiting less variance but more bias, with estimates capped at $\log|\mathcal{B}|$. Strategies to reduce bias, such as increasing the batch size, introduce higher computational complexity, requiring additional evaluations for estimating each batch with the encoding function.

In our empirical approach, we adopt a specific sampling strategy for sequences, considering a one-time step per batch. This facilitates computing the InfoNCE between local summary features at time t and the future prediction of local features, leading to a reduction in algorithmic complexity for contrastive loss computation. Empirical observations demonstrate non-decreased representation quality and improved prediction of factual and counterfactual outcomes.

\textbf{Other MI lower bounds.} The Mutual Information Neural Estimator (MINE) \cite{belghazi2018mutual} leverages the relationship between MI and the Kullback-Leibler (KL) divergence. MI can be expressed as the KL divergence between the joint distribution and the product of marginals:
\[
I(X; Z) := D_{KL}(\mathbb{P}_{(X,Z)} \,\|\, \mathbb{P}_X \otimes \mathbb{P}_Z)
\]

MINE employs the Donsker-Varadhan representation \cite{donsker1983asymptotic} of the KL divergence:

\begin{equation}
D_{KL}(\mathbb{P} \,\|\, \mathbb{Q}) = \sup_{T: \Omega \rightarrow \mathbb{R}} \mathbb{E}_{\mathbb{P}}[T] - \log\left(\mathbb{E}_{\mathbb{Q}}[e^T]\right)
\label{eq: kl_divergence}
\end{equation}

Here, the supremum is taken over all functions \( T \) where the expectations exist. For a specific class of functions \( \mathcal{F} \), potentially represented by a class of neural networks, we obtain the lower bound:

\begin{equation}
D_{KL}(\mathbb{P} \,\|\, \mathbb{Q}) \geq \sup_{T \in \mathcal{F}} \mathbb{E}_{\mathbb{P}}[T] - \log\left(\mathbb{E}_{\mathbb{Q}}[e^T]\right)
\label{eq: kl_divergence_inequality}
\end{equation}

In practice, we maximize
\[
\hat{I}^{\text{MINE}}_{\gamma}(\mathbb{P} \,\|\, \mathbb{Q}) = \mathbb{E}_{\mathbb{P}}[T_{\gamma}] - \log\left(\mathbb{E}_{\mathbb{Q}}[e^{T_{\gamma}}]\right),
\]
where \( T_{\gamma} \) is a discriminator parameterized by \( \gamma \), representing neural network parameters. The MINE estimator is a strongly consistent estimator of the true MI (Theorem 2, \cite{belghazi2018mutual}).

Alternatively, the \textbf{f-divergence} representation of \( D_{KL} \) \cite{nowozin2016f} allows us to derive another MI lower bound, known as the Nguyen, Wainwright, and Jordan (NWJ) estimator \cite{nguyen2010estimating}:

\begin{equation}
D_{KL}(\mathbb{P} \,\|\, \mathbb{Q}) \geq \sup_{T \in \mathcal{F}} \mathbb{E}_{\mathbb{P}}[T] - \log\left(\mathbb{E}_{\mathbb{Q}}[e^{T-1}]\right)
\end{equation}

This results in the estimator:
\[
\hat{I}^{\text{NWJ}}_{\gamma}(\mathbb{P}, \mathbb{Q}) = \mathbb{E}_{\mathbb{P}}[T_{\gamma}] - \log\left(\mathbb{E}_{\mathbb{Q}}[e^{T_{\gamma}-1}]\right).
\]

Unlike the InfoNCE estimator, which exhibits high bias and low variance, the NWJ estimator has a low bias but high variance \cite{poole2019VBMI}.

\section{Experimental protocol}
\label{sect: exp_protocol}

All models were implemented using PyTorch \cite{Paszke2019PyTorchAI} and PyTorch Lightning \citep{Falcon_PyTorch_Lightning_2019}. In contrast to the approach in \cite{Melnychuk2022CausalTF}, we employed early stopping for all models. The stopping criterion was defined as the Mean Squared Error over factual outcomes for a dedicated validation dataset. Specifically, for the Causal CPC encoder, the stopping criterion was determined by the validation loss of the encoder.

While all models in the benchmark were trained using the Adam optimizer \cite{kingma2014adam}, we opted for training Causal CPC (encoder plus decoder without the treatment subnetwork) with AdamW \cite{loshchilov2017AdamW} due to its observed stability during training. Similar to the common practice in training GAN discriminators, the treatment subnetwork was optimized using SGD with momentum \cite{sutskever2013importance}.

The CT employed the Exponential Moving Average (EMA) \cite{yaz2018} of parameters to enhance training stability. However, this technique was not applied to Causal CPC, as experimental evidence suggested only marginal improvements. Weight decay was set to zero for all models.

For each experiment, the models were trained over five different seeds, and the reported performance metrics include the mean and standard deviation of the results.

The counterfactual trajectories are generated following two strategies: 
\begin{itemize}
    \item \textbf{Single sliding treatment} \citep{bica2020estimatingCRN, Melnychuk2022CausalTF}: Trajectories are generated with a single treatment per trajectory while the treatment slides over the forecasting range to generate multiple trajectories.  Similar to \cite{bica2020estimatingCRN}, we apply such a generation scheme to cancer simulation data.
    \item  \textbf{Random trajectories}: Trajectories are generated such that at each time step, treatment is generated randomly. We apply random trajectories to semi-synthetic MIMIC data.
\end{itemize}

For the falsifiability test on MIMIC III datset, we mask two confounders from the inputs of the benchmark models, namely sodium and glucose measurements.
\section{Experiments on synthetic data: Details} 
\subsection{Description of the Simulation Model}
\label{subsect: cancer_sim_descrip}

We present a tumor growth simulation model, focusing on the PharmacoKinetic-PharmacoDynamic (PK-PD) model as discussed in \cite{PK-PD}, a recent approach to predicting treatment responses in non-small cell lung cancer patients. This simulation models the evolution of tumor volume, denoted by \( V(t) \), in discrete time, where \( t \) represents the number of days since diagnosis:

\begin{equation}
V(t) = \left(1 + \underbrace{\Lambda \log \left(\frac{K}{V(t-1)}\right)}_{\text{Tumor Growth}} - \underbrace{\kappa_c C(t)}_{\text{Chemotherapy}} - \underbrace{\left(\kappa_{rd} Rd(t) + \upsilon Rd(t)^2\right)}_{\text{Radiation}} + \underbrace{e_t}_{\text{Noise}}\right) V(t-1)
\end{equation}

Here, the model parameters \( \Lambda, K, \kappa_c, \kappa_{rd}, \upsilon \) are sampled for each patient based on prior distributions from \cite{PK-PD}. Additionally, \( Rd(t) \) represents the radiation dose applied at time \( t \), and \( C(t) \) denotes the drug concentration. 

We introduce confounding into the assignment of radiotherapy/chemotherapy treatment by making it dependent on past tumor volume evolution. Treatment is simulated using a Bernoulli distribution with probability \( \sigma(\pi_t) \), where:

\begin{equation}
\pi_t = \frac{\gamma}{D_{\max}}\left(\bar{D}(t) - \delta\right)
\end{equation}

In this context:
\( \bar{D}(t) \) represents the average tumor diameter over the last 15 days,
\( D_{\max} = 13~\text{cm} \) is the maximum tumor diameter,
\( \delta \) is set to \( \delta = D_{\max}/2 \).

The parameter \( \gamma \) controls the level of time-dependent confounding, with a higher \( \gamma \) value assigning more weight to the history of tumor diameter in treatment assignment.

\subsection{Additional results}
\label{subsect: detailed_results_cancer}
\subsubsection{Comparison to benchmark models}
\label{subsect: benchmark_few_data_cancer}

We report in this section detailed counterfactual errors for Causal CPC and baselines over the cancer simulation dataset, which are responsible for Figure \ref{fig:combined_cancer_sim}.
\begin{table}[!htbp]
\centering
\caption{Results on the synthetic data set with \underline{sequence length 60}: mean$\pm$standard deviation of NRMSEs. The best value for each metric is given in bold: smaller is better.}
\resizebox{\textwidth}{!}{
\begin{tabular}{|c|c|c|c|c|c|c|c|c|c|c|}
\hline  
\multicolumn{11}{|c|}{$ \gamma =1$} \\
\hline
Model & $\tau = 1$ & $\tau = 2$& $\tau = 3$& $\tau = 4$& $\tau = 5$& $\tau = 6$& $\tau = 7$& $\tau = 8$& $\tau = 9$ & $\tau = 10$\\ 
\hline  
\textbf{Causal CPC (ours)} & \textbf{0.83$\pm$ 0.06}& \textbf{0.86 $\pm$ 0.09}& \textbf{0.94$\pm$ 0.09}& \textbf{0.97$\pm$ 0.08}& \textbf{1.03$\pm$ 0.10}& \textbf{1.07 $\pm$ 0.10}& \textbf{1.12$\pm$ 0.10}& \textbf{1.17 $\pm$ 0.09}& \textbf{1.22$\pm$ 0.08}& \textbf{1.26$\pm$ 0.08}\\
\hline  
\textbf{CT} & 0.99$\pm$ 0.13& 0.92 $\pm$ 0.14& 0.98$\pm$ 0.14& 1.05$\pm$ 0.15& 1.11 $\pm$ 0.18 & 1.11 $\pm0.11$& 1.21 $\pm$ 17& 1.26$\pm$ 0.16& 1.31 $\pm$ 0.005& 1.35$\pm$ 0.16\\ 
\hline 
 \textbf{G-Net} & 0.91$\pm$0.15& 1.1$\pm$0.16& 1.24$\pm$0
16& 1.33$\pm$0.17& 1.40$\pm$0.18& 1.47$\pm$0.19& 1.52$\pm$0.18& 1.57$\pm$0.22& 1.63$\pm$0.22&1.7$\pm$0.25\\ \hline 
 \textbf{CRN}    & 0.84$\pm$0.10& 0.83$\pm$0.09& 0.92$\pm$0.10& 1.00$\pm$0.11& 1.09$\pm$0.12& 1.17$\pm$0.14& 1.25$\pm$0.16& 1.32$\pm$0.18& 1.37$\pm$0.23&1.43$\pm$0.26\\ \hline 
 \textbf{RMSN}   & 0.99$\pm$0.13& 0.91$\pm$0.04& 1.30$\pm$0.65& 1.43$\pm$0.76& 1.56$\pm$0.83& 1.66$\pm$0.88& 1.73$\pm$0.91& 1.77$\pm$0.89& 1.81$\pm$0.88&1.84$\pm$0.86\\ \hline 
 \textbf{MSM}   & 1.20$\pm$0.10& 1.83$\pm$0.26& 2.07$\pm$0.44& 2.38$\pm$0.44& 2.54$\pm$0.45& 2.90$\pm$0.37& 3.01$\pm$.38& 3.06$\pm$0.36& 3.08$\pm$0.36&3.08$\pm$0.36\\\hline 
\multicolumn{11}{|c|}{$ \gamma = 2$} \\
\hline
\textbf{Causal CPC (ours)} & 1.16$\pm$ 0.22& \textbf{0.91 $\pm$ 0.10}& \textbf{0.95 $\pm$ 0.13}& \textbf{1.00 $\pm$ 0.15}& \textbf{1.07$\pm$ 0.19}& \textbf{1.17 $\pm$ 0.24}& \textbf{1.27 $\pm$ 0.25}& \textbf{1.38 $\pm$ 0.28}& \textbf{1.49 $\pm$ 0.30}& \textbf{1.60 $\pm$ 0.34}\\ \hline  

\textbf{CT} & 1.24$\pm$0.20& 1.13$\pm$0.15& 1.27$\pm$021& 1.36$\pm$0.28& 1.44$\pm$0.29& 1.55$\pm$0.27& 1.64$\pm$0.28& 1.69$\pm$0.20& 1.74$\pm$0.28& 1.77 $\pm$ 0.29\\ 
\hline
 \textbf{G-Net}& \textbf{1.05$\pm$0.21}& 1.05$\pm$0.08& 1.26$\pm$0.16& 1.38$\pm$0.23& 1.48$\pm$0.27& 1.57$\pm$0.31& 1.64$\pm$0.33& 1.70$\pm$0.36& 1.75$\pm$0.39&1.8$\pm$0.42\\\hline
 \textbf{CRN}& 1.25$\pm$0.25& 1.08$\pm$0.06& 1.14$\pm$0.12& 1.21$\pm$0.17& 1.30$\pm$ 0.21& 1.41$\pm$0.25& 1.54$\pm$0.32& 1.67$\pm$0.41& 1.8$\pm$0.51&1.92$\pm$0.63\\\hline
 \textbf{RMSN}& 1.47$\pm$0.27& 1.33$\pm$0.25& 1.30$\pm$0.23& 1.33$\pm$0.24& 1.38$\pm$0.26& 1.45$\pm$0.28& 1.52$\pm$0.31& 1.60$\pm$0.25& 1.67$\pm$0.38&1.75$\pm$0.42\\\hline
 \textbf{MSM}& 1.43$\pm$0.27& 2.22$\pm$0.53& 2.67$\pm$0.63& 2.98$\pm$0.70& 3.19$\pm$0.74& 3.33$\pm$0.77& 3.41$\pm$0.79& 3.44$\pm$0.25& 3.45$\pm$0.78&3.34$\pm$0.77\\\hline
 \multicolumn{11}{|c|}{$ \gamma = 3$} \\
 \hline
  \textbf{Causal CPC (ours)} & 1.37$\pm$0.31& \textbf{1.16$\pm$0.27}& \textbf{1.26$\pm$0.30}& \textbf{1.38$\pm$0.35}& \textbf{1.53$\pm$0.40}& \textbf{1.69$\pm$0.47}&\textbf{ 1.84$\pm$0.52}& \textbf{2.00$\pm$0.51}& 2.14$\pm$0.61& 2.28$\pm$0.66\\ 
   \hline
 \textbf{CT} & 1.36$\pm$0.32& 1.42$\pm$0.36& 1.62$\pm$0.46& 1.78$\pm$0.53& 1.89$\pm$0.58& 2.01$\pm$0.63& 2.13$\pm$0.66& 2.22$\pm$0.69& 2.31$\pm$0.69& 2.37$\pm$0.73\\ 
\hline
\textbf{G-Net} &\textbf{ 1.14$\pm$0.24}& 1.22$\pm$0.15& 1.54$\pm$0.26& 1.77$\pm$0.33& 1.94$\pm$0.36& 2.09$\pm$0.40& 2.23$\pm$0.43& 2.34$\pm$0.47& 2.44$\pm$0.52& 2.52$\pm$0.56\\ 
\hline
 \textbf{CRN}& 1.46$\pm$0.29& 1.54$\pm$0.38& 1.70$\pm$0.48& 1.79$\pm$0.53& 1.86$\pm$0.92& 1.92$\pm$0.58& 1.98$\pm$0.59& 2.04$\pm$0.61& \textbf{2.10$\pm$0.63}&\textbf{2.16$\pm$0.64} \\\hline
 
\textbf{RMSN}& 1.22$\pm$0.26& 1.28$\pm$0.29& 1.43$\pm$0.40& 1.56$\pm$0.48& 1.70$\pm$0.53& 1.83$\pm$0.57& 1.95$\pm$0.59& 2.06$\pm$0.61& 2.14$\pm$0.61&2.21$\pm$0.61\\\hline
  
 \textbf{MSM} & 1.70$\pm$0.35& 2.73$\pm$0.88& 3.22$\pm$1.03& 3.25$\pm$1.12& 3.71$\pm$1.18& 3.85$\pm$1.22& 3.91$\pm$1.23& 3.95$\pm$1.24& 3.96$\pm$1.24&3.94$\pm$1.23\\\hline
 \end{tabular}
 }
%\label{tab: }
\end{table}

\begin{table}[!htbp]
\centering
\caption{Results on the synthetic data set with \underline{sequence length 40}: mean$\pm$standard deviation of NRMSEs. The best value for each metric is given in bold: smaller is better.}
\resizebox{\textwidth}{!}{
\begin{tabular}{|c|c|c|c|c|c|c|c|c|c|c|}
\hline  
\multicolumn{11}{|c|}{$ \gamma =1$} \\
\hline
Model & $\tau = 1$ & $\tau = 2$& $\tau = 3$& $\tau = 4$& $\tau = 5$& $\tau = 6$& $\tau = 7$& $\tau = 8$& $\tau = 9$ & $\tau = 10$\\ 
\hline  
\textbf{Causal CPC (ours)} & 1.21$\pm$0.07 & 1.13$\pm$0.12 & 1.22$\pm$0.12 & \textbf{1.27$\pm$0.17} & \textbf{1.35$\pm$0.19} & \textbf{1.43$\pm$0.21} & \textbf{1.48$\pm$0.22} & \textbf{1.54$\pm$0.22} & \textbf{1.58$\pm$0.23} & \textbf{1.62$\pm$0.23}\\
\hline  
\textbf{CT} & 1.21$\pm$0.09 & 1.34$\pm$0.10 & 1.43$\pm$0.14 & 1.52$\pm$0.19 & 1.59$\pm$0.22 & 1.65$\pm$0.23 & 1.71$\pm$0.23 & 1.76$\pm$0.21 & 1.78$\pm$0.21 & 1.80$\pm$0.18\\ 
\hline 
\textbf{G-Net} & 1.11$\pm$0.10 & 1.30$\pm$0.17 & 1.52$\pm$0.20 & 1.65$\pm$0.21 & 1.76$\pm$0.24 & 1.86$\pm$0.29 & 1.96$\pm$0.34 & 2.05$\pm$0.40 & 2.13$\pm$0.46 & 2.19$\pm$0.52\\
\hline 
\textbf{CRN} & \textbf{1.01$\pm$0.12} & \textbf{1.11$\pm$0.14} & \textbf{1.19$\pm$0.14} & 1.30$\pm$0.13 & 1.41$\pm$0.12 & 1.49$\pm$0.09 & 1.56$\pm$0.07 & 1.62$\pm$0.05 & 1.67$\pm$0.04 & 1.70$\pm$0.04\\
\hline 
\textbf{RMSN} & 1.14$\pm$0.03 & 1.20$\pm$0.08 & 1.24$\pm$0.07 & 1.32$\pm$0.08 & 1.40$\pm$0.08 & 1.49$\pm$0.08 & 1.57$\pm$0.08 & 1.64$\pm$0.06 & 1.71$\pm$0.05 & 1.76$\pm$0.04\\\hline 
\multicolumn{11}{|c|}{$ \gamma = 2$} \\
\hline
\textbf{Causal CPC (ours)} & 1.41$\pm$0.09 & 1.30$\pm$0.17 & 1.33$\pm$0.20 & \textbf{1.41$\pm$0.25} & \textbf{1.50$\pm$0.28} & \textbf{1.56$\pm$0.31} & \textbf{1.60$\pm$0.23} & \textbf{1.66$\pm$0.27} & \textbf{1.70$\pm$0.26} & \textbf{1.74$\pm$0.25}\\
\hline  
\textbf{CT} & 1.36$\pm$0.15 & 1.38$\pm$0.19 & 1.40$\pm$0.25 & 1.48$\pm$0.28 & 1.54$\pm$0.30 & 1.62$\pm$0.31 & 1.70$\pm$0.31 & 1.75$\pm$0.31 & 1.82$\pm$0.32 & 1.85$\pm$0.29\\ 
\hline 
\textbf{G-Net} & 1.26$\pm$0.09 & 1.47$\pm$0.20 & 1.74$\pm$0.28 & 1.90$\pm$0.34 & 2.02$\pm$0.40 & 2.12$\pm$0.44 & 2.22$\pm$0.50 & 2.31$\pm$0.55 & 2.39$\pm$0.61 & 2.47$\pm$0.68\\
\hline 
\textbf{CRN} & \textbf{1.24$\pm$0.22} & \textbf{1.21$\pm$0.18} & \textbf{1.31$\pm$0.22} & 1.42$\pm$0.25 & 1.55$\pm$0.29 & 1.67$\pm$0.34 & 1.79$\pm$0.41 & 1.91$\pm$0.50 & 2.03$\pm$0.59 & 2.13$\pm$0.70\\
\hline 
\textbf{RMSN} & 1.28$\pm$0.15 & 1.28$\pm$0.13 & 1.37$\pm$0.12 & 1.49$\pm$0.15 & 1.62$\pm$0.19 & 1.76$\pm$0.25 & 1.89$\pm$0.30 & 2.03$\pm$0.36 & 2.15$\pm$0.41 & 2.26$\pm$0.46\\
\hline
 \multicolumn{11}{|c|}{$ \gamma = 3$} \\
 \hline
\textbf{Causal CPC (ours)} & 1.52$\pm$0.19 & \textbf{1.19$\pm$0.16} & \textbf{1.26$\pm$0.23} & \textbf{1.38$\pm$0.25} & \textbf{1.52$\pm$0.24} & \textbf{1.66$\pm$0.25} & \textbf{1.79$\pm$0.27} & \textbf{1.91$\pm$0.32} & \textbf{2.02$\pm$0.38} & \textbf{2.11$\pm$0.46}\\
\hline  
\textbf{CT} & 1.56$\pm$0.18 & 1.68$\pm$0.33 & 1.77$\pm$0.54 & 1.95$\pm$0.75 & 2.05$\pm$0.83 & 2.19$\pm$0.92 & 2.29$\pm$1.00 & 2.35$\pm$1.04 & 2.43$\pm$1.09 & 2.43$\pm$1.07\\ 
\hline 
\textbf{G-Net} & 1.19$\pm$0.02 & 1.38$\pm$0.04 & 1.72$\pm$0.02 & 1.92$\pm$0.08 & 2.11$\pm$0.18 & 2.32$\pm$0.29 & 2.52$\pm$0.41 & 2.72$\pm$0.54 & 2.93$\pm$0.68 & 3.13$\pm$0.83\\
\hline 
\textbf{CRN} & 1.50$\pm$0.01 & 1.47$\pm$0.07 & 1.62$\pm$0.24 & 1.76$\pm$0.40 & 1.89$\pm$0.51 & 1.98$\pm$0.57 & 2.05$\pm$0.61 & 2.10$\pm$0.62 & 2.13$\pm$0.61 & 2.15$\pm$0.58\\
\hline 
\textbf{RMSN} & \textbf{1.48$\pm$0.19} & 1.40$\pm$0.13 & 1.66$\pm$0.35 & 1.86$\pm$0.52 & 2.01$\pm$0.61 & 2.14$\pm$0.65 & 2.24$\pm$0.67 & 2.32$\pm$0.67 & 2.37$\pm$0.66 & 2.39$\pm$0.65\\
\hline
 \end{tabular}
 }
%\label{tab: }
\end{table}
\subsubsection{Ablation study}
\label{subsect: ablation_results_cancer}
We detail here the results of the ablation study conducted on the cancer simulation dataset (Table \ref{tab: ablation_study}).  The (full) Causal CPC model, as presented in the core paper, gives, in most cases, better results than any ablation configuration. 
\begin{table}[!htbp]
 \caption{Results of the ablation study on the synthetic data set: mean$\pm$standard deviation of Normalized Rooted Mean Squared Errors (NRMSEs). The best value for each metric is given in bold: smaller is better.}
 \centering
\resizebox{\textwidth}{!}{%
\begin{tabular}{|c |c |c |c |c |c |c |c |c |c |c|} 
\hline
Model  & $\tau = 1$   &   $\tau = 2$&  $\tau = 3$&   $\tau = 4$&  $\tau = 5$&   $\tau = 6$& $\tau = 7$&  $\tau = 8$&   $\tau = 9$ & $\tau = 10$\\ \hline  
\textbf{CAUSAL CPC (FULL)}  & \textbf{0.83 $\pm$ 0.06}  &\textbf{0.86 $\pm$ 0.06} &0.94$\pm$ 0.09&   \textbf{0.97 $\pm$ 0.08}&  \textbf{1.03 $\pm$ 0.10 }&   \textbf{1.07$\pm$0.10}& \textbf{1.12$\pm$ 0.10}& \textbf{1.17$\pm$ 0.06}&   \textbf{ 1.22 $\pm$ 0.08 }&   \textbf{ 1.26$\pm$ 0.08 }\\ 
\hline  
\textbf{Causal CPC (w/o $\mathcal{L}^{(InfoNCE)}$)}&  0.84$\pm$0.04&   0.91$\pm$0.07& 0.95$\pm$ 0.07&   0.99$ \pm$ 0.09& 1.03$\pm$0.10 & 1.10$ \pm$ 0.07& 1.15$\pm 0.14$& 1.20 $\pm$ 0.14&  1.23$ \pm $0.14&   1.28$ \pm 0.15$\\ 
\hline  
\textbf{Causal CPC (w/o $\mathcal{L}^{(InfoMax)}$) } &  0.84$\pm$0.04&   0.86$\pm$0.09& \textbf{0.91$\pm$0.08}&  0.99$\pm$0.10& 1.07$\pm$0.08&   1.16$ \pm$ 0.08& 1.24$ \pm$ 0.10& 1.31 $\pm$ 0.12&  1.38$ \pm $0.08&   1.46$ \pm 0.10$\\ 
\hline
 \textbf{Causal CPC (w CDC loss)}& 0.83$\pm$0.02& 0.89$\pm$0.07& 0.96$\pm$ 0.07& 1.03$\pm$ 0.07& 1.07$\pm$0.08& 1.10$ \pm$ 0.07& 1.13$\pm 0.10$& 1.18 $\pm$ 0.09& 1.24$ \pm $0.11&1.28$ \pm 0.11$\\\hline
 \textbf{Causal CPC (w Balancing}& 0.84$\pm$0.04& 0.88$\pm$0.05& 0.97$\pm$ 0.05& 1.04$\pm$ 0.07& 1.08$\pm$0.10& 1.13$ \pm$ 0.08& 1.15$\pm 0.14$& 1.20$\pm$ 0.10& 1.25$ \pm $0.08&1.29$ \pm 0.12$\\\hline 
 \end{tabular}
 }
\label{tab: cancer_ablation_details}
\end{table}

% \subsubsection{Comparison to benchmark models: standard train/test split}
% \label{subsect: results_orig_cancer}

\section{Experiments on semi-synthetic data: Details} 

\subsection{Description of the simulation model}
\label{subsect: details_sim_mimic}

In this section, we provide a concise overview of the simulation model built upon the MIMIC III dataset, as introduced by \cite{Melnychuk2022CausalTF}. Initially, a cohort of 1,000 patients is extracted from the MIMIC III data, and the simulation proposed by \cite{Melnychuk2022CausalTF} extends the model of \cite{schulam2017reliable}.

Let \( d_y \) be the dimension of the outcome variable. In the case of multiple outcomes, untreated outcomes, denoted as \( \mathbf{Z}_t^{j,(i)} \) for \( j=1, \ldots, d_y \), are generated for each patient \( i \) within the cohort. The generation process is defined as follows:

\begin{equation}
\mathbf{Z}_t^{j,(i)} = \underbrace{\alpha_S^j \mathbf{B}\text{-spline}(t) + \alpha_g^j g^{j,(i)}(t)}_{\text{endogenous}} + \underbrace{\alpha_f^j f_Z^j\left(\mathbf{X}_t^{(i)}\right)}_{\text{exogenous}} + \underbrace{\varepsilon_t}_{\text{noise}}
\end{equation}

where:
the B-spline \(\mathbf{B}\text{-spline}(t)\) is an endogenous component,
\( g^{j,(i)}(\cdot) \) is sampled independently for each patient from a Gaussian process with a Matérn kernel 
and \( f_Z^j(\cdot) \) is sampled from a Random Fourier Features (RFF) approximation of a Gaussian process. %\cite{Hensman2017}

To introduce confounding in the assignment mechanism, current time-varying covariates are incorporated via a random function \( f_Y^l\left(\mathbf{X}_t\right) \) and the average of the subset of the previous \( T_l \) treated outcomes, \( \bar{A}_{T_l}\left(\overline{\mathbf{Y}}_{t-1}\right) \). For \( d_a \) binary treatments \( \mathbf{A}_t^l \), where \( l=1, \ldots, d_a \), the assignment mechanism is modeled as:

\begin{equation*}
\begin{aligned}
    p_{\mathbf{A}_t^l} &= \sigma\left(\gamma_A^l \bar{A}_{T_l}\left(\overline{\mathbf{Y}}_{t-1}\right) + \gamma_X^l f_Y^l\left(\mathbf{X}_t\right) + b_l\right), \\
    \mathbf{A}_t^l &\sim \operatorname{Bernoulli}\left(p_{\mathbf{A}_t^l}\right).
\end{aligned}
\end{equation*}

Subsequently, treatments are applied to the untreated outcomes using the following expression:

\begin{equation}
E^j(t) = \sum_{i=t-w^l}^t \frac{\min_{l=1, \ldots, d_a} \mathbb{1}_{\left[\mathbf{A}_i^l=1\right]} p_{\mathbf{A}_i^l} \beta_{l j}}{\left(w^l-i\right)^2}
\end{equation}

The final outcome combines the treatment effect and the untreated simulated outcome:

\begin{equation}
Y^j_t = Z^j_t + E^j(t).
\end{equation}

\subsection{Additional results}
\label{subsect: detailed_results_mimic}
%\subsubsection{Comparison to benchmark models}
%\label{subsect: benchmark_few_data_mimic}
\subsubsection{Ablation study}
\label{subsect: detailed_results_mimic_ablation}
We detail here the results of the ablation study conducted on the MIMIC III semi-synthetic dataset (Table \ref{tab: ablation_study}).  The (full) Causal CPC model, as presented in the core paper, gives consistently better results than any ablation configuration.  
\label{subsect: ablation_results_mimic} 
\begin{table}[!htbp]
 \caption{Results on  MIMIC III semi-synthetic data set: mean$\pm$standard deviation of Normalized Rooted Mean Squared Errors (NRMSEs). The best value for each metric is given in bold: smaller is better.}
 \centering
\resizebox{\textwidth}{!}{%
\begin{tabular}{|c |c |c |c |c |c |c |c |c |c |c|} 
\hline
Model  & $\tau = 1$   &   $\tau = 2$&  $\tau = 3$&   $\tau = 4$&  $\tau = 5$&   $\tau = 6$& $\tau = 7$&  $\tau = 8$&   $\tau = 9$ & $\tau = 10$\\ 
\hline  
\textbf{Causal CPC (ful)}  &  \textbf{0.32$\pm $ 0.04}&   \textbf{0.45$ \pm$ 0.08}& \textbf{0.54$\pm 0.06$}& \textbf{0.61 $\pm$ 0.10}&   \textbf{0.66$\pm$ 0.10}&  \textbf{0.69$\pm$0.11}&   \textbf{0.71$ \pm$ 0.11}& \textbf{0.73$\pm $ 0.06}& \textbf{0.75 $\pm$ 0.05}& \textbf{0.77$ \pm$ 0.10} \\ 

\hline
 \textbf{Causal CPC (w/o $\mathcal{L}^{(InfoNCE)}$)}& 0.35$\pm $ 0.04& 0.50$ \pm$ 0.05& 0.59$ \pm$ 0.06& 0.66$ \pm$ 0.06& 0.71$ \pm$ 0.08& 0.75$ \pm$ 0.06& 0.77$ \pm$ 0.07& 0.79$ \pm$ 0.08& 0.81$ \pm$ 0.07&0.83$ \pm$ 0.07\\
 
 \hline
 \textbf{Causal CPC (w/o $\mathcal{L}^{(InfoMax)}$) } & 0.36$ \pm$ 0.02& 0.53$ \pm$ 0.03& 0.64$ \pm$ 0.04& 0.71$ \pm$ 0.05& 0.77$ \pm$ 0.05& 0.77$ \pm$ 0.05& 0.83$ \pm$ 0.05& 0.86$ \pm$ 0.05& 0.88$ \pm$ 0.08&0.90$ \pm$ 0.05\\
 
 \hline
 \textbf{Causal CPC (CDC loss) }& 0.36$ \pm$ 0.02& 0.54$ \pm$ 0.03& 0.65$ \pm$ 0.05& 0.72$ \pm$ 0.05& 0.77$ \pm$ 0.05& 0.70$ \pm$ 0.04& 0.83$ \pm$ 0.04& 0.85$ \pm$ 0.03& 0.86$ \pm$ 0.03&0.88$ \pm$ 0.08\\\hline 
 
\textbf{Causal CPC (w/o balancing)}& 0.35$ \pm$ 0.03& 0.50$ \pm$ 0.05& 0.60$ \pm$ 0.06& 0.67$ \pm$ 0.06& 0.72$ \pm$ 0.06& 0.76$ \pm$ 0.06& 0.78$ \pm$ 0.06& 0.80$ \pm$ 0.06& 0.83$ \pm$ 0.06&0.85$ \pm$ 0.06\\\hline 
 \end{tabular}
 }
\label{tab: mimic_ablation_details}
\end{table}

Furthermore, We replace the InfoNCE objective used to compute the CPC term and InfoMax terms with that of NWJ and MINE (Section \ref{subsect: extended_work_mi_ss}). We repeat the same MIMIC III experimentation while varying the objective used for CPC and InfoMax. Table \ref{tab: mimic_mi_estimators} shows the counterfactual errors for each configuration compared to the original formulation of Causal CPC. In all cases, The InfoNCE objective performs better with notable error reduction at large horizons.

\begin{table}[!htbp]
 \caption{Results of NWJ and MINE MI lower bounds when used for CPC and InfoMax for MIMIC III semi-synthetic data set: mean$\pm$standard deviation of Normalized Rooted Mean Squared Errors (NRMSEs). The best value for each metric is given in bold: smaller is better.}
 \centering
\resizebox{\textwidth}{!}{%
\begin{tabular}{|c |c |c |c |c |c |c |c |c |c |c|} 
\hline
Model  & $\tau = 1$   &   $\tau = 2$&  $\tau = 3$&   $\tau = 4$&  $\tau = 5$&   $\tau = 6$& $\tau = 7$&  $\tau = 8$&   $\tau = 9$ & $\tau = 10$\\ 
\hline  
\textbf{Original Model}  &  \textbf{0.34$\pm $ 0.04}&   \textbf{0.45$ \pm$ 0.08}& \textbf{0.54$\pm 0.06$}& \textbf{0.61 $\pm$ 0.10}&   \textbf{0.66$\pm$ 0.10}&  \textbf{0.69$\pm$0.11}&   \textbf{0.71$ \pm$ 0.11}& \textbf{0.73$\pm $ 0.06}& \textbf{0.75 $\pm$ 0.05}& \textbf{0.77$ \pm$ 0.10} \\ 
\hline
\textbf{CPC with NWJ}& 0.34$\pm $ 0.04& 0.48$ \pm$ 0.05& 0.58$ \pm$ 0.06& 0.66$ \pm$ 0.07& 0.71$ \pm$ 0.08& 0.75$ \pm$ 0.07& 0.78$ \pm$ 0.07& 0.81$ \pm$ 0.06& 0.84$ \pm$ 0.06&0.87$ \pm$ 0.06\\
 
\hline
\textbf{CPC with MINE } & 0.35$ \pm$ 0.03& 0.50$ \pm$ 0.05& 0.61$ \pm$ 0.04& 0.69$ \pm$ 0.04& 0.75$ \pm$ 0.04& 0.79$ \pm$ 0.03& 0.82$ \pm$ 0.03& 0.85$ \pm$ 0.02& 0.88$ \pm$ 0.02&0.91$ \pm$ 0.02\\
 \hline
\textbf{InfoMax with NWJ}& 0.42$ \pm$ 0.08& 0.56$ \pm$ 0.04& 0.69$ \pm$ 0.07& 0.77$ \pm$ 0.08& 0.83$ \pm$ 0.09& 0.87$ \pm$ 0.09& 0.90$ \pm$ 0.09& 0.92$ \pm$ 0.09& 0.94$ \pm$ 0.08&0.96$ \pm$ 0.08\\\hline 
 
\textbf{InfoMax with MINE}& 0.37$ \pm$ 0.05& 0.52$ \pm$ 0.03& 0.65$ \pm$ 0.06& 0.73$ \pm$ 0.8& 0.80$ \pm$ 0.10& 0.84$ \pm$ 0.11& 0.87$ \pm$ 0.11& 0.89$ \pm$ 0.10& 0.91$ \pm$ 0.10&0.93$ \pm$ 0.09\\\hline 
 \end{tabular}
 }
\label{tab: mimic_mi_estimators}
\end{table}

\subsubsection{Comparison to benchmark models: standard train/test split}
\label{subsect: results_orig_mimic}
As mentioned in Section \ref{subsect: core_exp_mimic}, We also tested Causal CPC on MIMIC III semi-synthetic data using the same experimental protocol as \cite{Melnychuk2022CausalTF}, namely by using the split of patients into train/validation/test as 800/200/200. As a result, baseline performances in Table \ref{tab: perf_mimic_orig_data} are exactly the same as in \cite{Melnychuk2022CausalTF}. 
\begin{table}[!htbp]
 \caption{Results over the MIMIC III semi-synthetic data set (same experimental protocol as in \cite{Melnychuk2022CausalTF}): mean$\pm$standard deviation of Rooted Mean Squared Errors (RMSEs). The best value for each metric is given in bold: smaller is better.}
 \centering
\resizebox{\textwidth}{!}{%
\begin{tabular}{|c |c |c |c |c |c  |c |c |c |c |c|} 
\hline
Model       & $\tau = 1$   &   $\tau = 2$& $\tau = 3$&   $\tau = 4$&  $\tau = 5$&   $\tau = 6$& $\tau = 7$&  $\tau = 8$&   $\tau = 9$ & $\tau = 10$\\ \hline  

\textbf{Causal CPC (ours)}  &   0.25 $\pm$ 0.03  & \textbf{0.37 $\pm$ 0.02}  &\textbf{0.40 $\pm$ 0.01} & \textbf{0.45 $\pm$ 0.01} & \textbf{0.49 $\pm$ 0.02} & \textbf{0.52 $\pm$ 0.02} & \textbf{0.55 $\pm$ 0.03}  &\textbf{0.56 $\pm$ 0.03 } &\textbf{0.58 $\pm$ 0.04} & \textbf{0.60 $\pm$ 0.03} \\ 
\hline  
\textbf{CT} & \textbf{ 0.20 $\pm$ 0.01 } & 0.38 $\pm$ 0.01  &0.45 $\pm$ 0.01 & 0.49 $\pm$ 0.01 & 0.52 $\pm$ 0.02 & 0.53 $\pm$ 0.02 & 0.55 $\pm$ 0.02  &\textbf{0.56 $\pm$ 0.02}  &\textbf{0.58 $\pm$ 0.02} & \textbf{0.59 $\pm$ 0.02}  \\ 
\hline  
\textbf{G-Net} & 0.34 $\pm$ 0.01& 0.67 $\pm$ 0.03& 0.83 $\pm$ 0.04& 0.94 $\pm$ 0.04& 1.03 $\pm$ 0.05 &1.10 $\pm$ 0.05 &1.16 $\pm$ 0.05 &1.21 $\pm$ 0.06& 1.25 $\pm$ 0.06 & 1.29 $\pm$ 0.06\\ 
\hline  
\textbf{CRN} &  0.30 $\pm$ 0.01& 0.48 $\pm$ 0.02 &0.59 $\pm$ 0.02& 0.65 $\pm$ 0.02 &0.68 $\pm$ 0.02& 0.71 $\pm$ 0.01 &0.72 $\pm$ 0.01 &0.74 $\pm$ 0.01 &0.76 $\pm$ 0.01& 0.78 $\pm$ 0.02\\ 
\hline  
\textbf{RMSN} &   0.24 $\pm$ 0.01& 0.47 $\pm$ 0.01& 0.60 $\pm$ 0.01& 0.70 $\pm$ 0.02& 0.78 $\pm$ 0.04 &0.84 $\pm$ 0.05 &0.89 $\pm$ 0.06& 0.94 $\pm$ 0.08 & 0.97 $\pm$ 0.09& 1.00 $\pm$ 0.11 \\ 
\hline  
\textbf{MSM} & 0.37 $\pm$ 0.01& 0.57 $\pm$ 0.03& 0.74 $\pm$ 0.06 &0.88 $\pm$ 0.03& 1.14 $\pm$ 0.10 &1.95 $\pm$ 1.48 &3.44 $\pm$ 4.57 &$>$ 10.0 & $>$ 10.0 & $>$ 10.0\\ 
\hline  
 \end{tabular}
 }
\label{tab: perf_mimic_orig_data}
\end{table}

\subsubsection{Running time and model complexity}
In this section, we complement the table about complexity and running time given for cancer simulation in the core paper by providing the exact same table but for MIMIC III semi-synthetic data. 
\begin{table}[!htbp]
\caption{The number of parameters to train for each model after hyper-parameters fine-tuning and the corresponding running time averaged over five seeds. Results are reported for semi-synthetic MIMIC III data; the processing unit is GPU - 1 x NVIDIA Tesla M60 .}
\label{tab: complexity_running_time_mimic}
\vskip 0.15in
\begin{center}
\begin{small}
\begin{sc}
\resizebox{0.7\textwidth}{!}{%
\begin{tabular}{lccccr}
\toprule
Model & trainable parameters (k) & Training time (min) & Prediction time (min) \\
\midrule
\textbf{Causal CPC (ours)} & \textbf{9.8}  & 12$\pm$2  & \textbf{4$\pm$1} \\
\textbf{CT } &12  & 14$\pm$1  & 38$\pm$2  \\
\textbf{G-Net} & 14.7 &  \textbf{7$\pm$1} &40$\pm$3 \\
\textbf{CRN} &  15.1 & 21$\pm$2  & 5$\pm$1  \\
\textbf{RMSN} & 20 & 48$\pm$4  & 5$\pm$1  \\
\bottomrule
\end{tabular}
}
\end{sc}
\end{small} 
\end{center}
\vskip -0.1in
\end{table}

%\newpage
\section{Proofs of theoretical results}
\label{sect: proofs}
\subsection{Relation between InfoNCE loss and mutual information}

\begin{proposition}
    $$I(\mathbf{U}_{t+j}, \mathbf{C}_{t}) \geq \log(|\mathcal{B}|) - \mathcal{L}^{(InfoNCE)}_j$$
\end{proposition}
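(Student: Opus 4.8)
The plan is to follow the density-ratio argument of \citep{oord2018representation}, specialized to the pair $(\mathbf{U}_{t+j}, \mathbf{C}_t)$. Write $N := |\mathcal{B}|$ and abbreviate the density ratio $r(\mathbf{u}) := p(\mathbf{u} \mid \mathbf{C}_t)/p(\mathbf{u})$. First I would read $\mathcal{L}^{(InfoNCE)}_j$ as the cross-entropy of a categorical classifier tasked with identifying, among the $N$ candidates in the batch, the single positive $\mathbf{U}_{t+j}$ drawn from $p(\cdot \mid \mathbf{C}_t)$, while the other $N-1$ negatives $\mathbf{U}_{l,t+j}$ are drawn independently from the marginal $p(\cdot)$. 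Since cross-entropy is minimized by the true posterior over the index of the positive, and that posterior is proportional to the density ratios, the family $\exp(T_j(\mathbf{u}, \mathbf{c}_t)) \propto r(\mathbf{u})$ attains the minimal loss $\mathcal{L}^{\star}$. Hence $\mathcal{L}^{(InfoNCE)}_j \ge \mathcal{L}^{\star}$ for \emph{any} critic $T_j$, which is what lets the bound hold for an arbitrary learned discriminator rather than only the optimal one.

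Next I would substitute $\exp(T_j) = r$ and simplify. Pulling the positive term out of the softmax gives
\begin{equation*}
\mathcal{L}^{\star} = \mathbb{E}\!\left[\log\!\left(1 + \frac{1}{r(\mathbf{U}_{t+j})}\sum_{l:\, \mathrm{neg}} r(\mathbf{U}_{l,t+j})\right)\right].
\end{equation*}
The key observation is that each negative is drawn from the marginal, so $\mathbb{E}_{\mathbf{U}_l \sim p(\cdot)}[r(\mathbf{U}_l)] = \int p(\mathbf{u}) \, \big(p(\mathbf{u}\mid \mathbf{C}_t)/p(\mathbf{u})\big) \, d\mathbf{u} = 1$. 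Replacing the sum of $N-1$ negative ratios by its expectation $N-1$ and then applying $1 + (N-1)/r \ge N/r$ yields
\begin{equation*}
\mathcal{L}^{\star} \approx \mathbb{E}\!\left[\log\!\left(1 + \tfrac{N-1}{r(\mathbf{U}_{t+j})}\right)\right] \ge \mathbb{E}\!\left[\log \frac{N}{r(\mathbf{U}_{t+j})}\right] = \log N + \mathbb{E}\!\left[\log \frac{p(\mathbf{U}_{t+j})}{p(\mathbf{U}_{t+j}\mid \mathbf{C}_t)}\right] = \log N - I(\mathbf{U}_{t+j}, \mathbf{C}_t).
\end{equation*}
Chaining $\mathcal{L}^{(InfoNCE)}_j \ge \mathcal{L}^{\star} \ge \log N - I(\mathbf{U}_{t+j}, \mathbf{C}_t)$ and rearranging gives the claim $I(\mathbf{U}_{t+j}, \mathbf{C}_t) \ge \log(|\mathcal{B}|) - \mathcal{L}^{(InfoNCE)}_j$.

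The main obstacle is the first step of the second display: for finite $N$ the replacement of $\sum_{l}\, r(\mathbf{U}_{l,t+j})$ by its population value $N-1$, together with the pointwise inequality $1 + (N-1)/r \ge N/r$ (which strictly requires $r \ge 1$), is not exact, so the chain is really a concentration/Jensen estimate that becomes a genuine inequality only in the large-batch limit. I would therefore either present the bound in the asymptotic form of \citep{oord2018representation}, or, for full rigor, recast the loss with the $1/N$-normalized denominator and invoke the variational lower bound of \citep{poole2019VBMI}, so that $I \ge \log N - \mathcal{L}^{(InfoNCE)}_j$ holds exactly for every $N$; the remaining ingredients (the optimal-critic reduction and the $\mathbb{E}[r]=1$ identity) are then routine.
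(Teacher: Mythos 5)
Your proposal follows essentially the same route as the paper: both identify the optimal critic with the density ratio $p(\mathbf{u}\mid\mathbf{c}_t)/p(\mathbf{u})$, substitute it into the loss, replace the sum over negatives by its expectation $|\mathcal{B}|-1$, and conclude via the $\log|\mathcal{B}| - I$ rearrangement, exactly as in the adaptation of \cite{oord2018representation} given in the appendix. Your closing remark correctly pinpoints the informal step (the finite-batch approximation and the pointwise inequality requiring $r \geq 1$) that the paper only acknowledges as "more precise as the batch size increases," so if anything your treatment is slightly more careful about where the argument is heuristic.
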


\begin{proof}

In the following, we draw inspiration from the proof of \cite{oord2018representation}. The InfoNCE loss corresponds to the categorical cross-entropy of classifying the positive sample \( \mathbf{U}_{t+j} \) correctly, given the context \( \mathbf{C}_t \), with a probability:

\[
\frac{\exp(T_j(\mathbf{U}_{t+j},\mathbf{C}_{t}))}{ \sum_{l = 1}^{|\mathcal{B}|} \exp(T_j(\mathbf{U}_{l,t+j},\mathbf{C}_{t}))}.
\]

The positive sample \( \mathbf{U}_{t+j} \) is one element in the batch \( \mathcal{B} \), where the remaining elements serve as negative samples. Let \texttt{pos} \( \in \{1, \dots, |\mathcal{B}|\} \) be the indicator of the positive sample \( \mathbf{U}_{t+j} \). The optimal probability is given by:

\[
p(\text{Index} = \texttt{pos} \mid \mathcal{B}, \mathbf{C}_t) 
= \frac{p(\mathbf{u}_{\texttt{pos}, t+j} \mid \mathbf{C}_t) \prod_{l=1,\dots,|\mathcal{B}|; l \neq \texttt{pos}} p(\mathbf{u}_{l, t+j})}{\sum_{j=1}^{|\mathcal{B}|} \left[ p(\mathbf{u}_{j, t+j} \mid \mathbf{C}_t) \prod_{l=1,\dots,|\mathcal{B}|; l \neq j} p(\mathbf{u}_{l, t+j}) \right]}
= \frac{ \frac{p(\mathbf{u}_{\texttt{pos}, t+j} \mid \mathbf{C}_t)}{p(\mathbf{u}_{\texttt{pos}, t+j})} }{ \sum_{j=1}^{|\mathcal{B}|} \frac{p(\mathbf{u}_{j, t+j} \mid \mathbf{C}_t)}{p(\mathbf{u}_{j, t+j})} }.
\]

For the score \( \exp(T_j(\mathbf{U}_{t+j},\mathbf{C}_{t})) \) to be optimal, it should be proportional to \( \frac{p(\mathbf{u}_{\texttt{pos}, t+j} \mid \mathbf{C}_t)}{p(\mathbf{u}_{\texttt{pos}, t+j})} \). The mutual information (MI) lower bound arises from the fact that \( \exp(T_j(\mathbf{U}_{t+j},\mathbf{C}_{t})) \) estimates the density ratio \( \frac{p(\mathbf{u}_{\texttt{pos}, t+j} \mid \mathbf{C}_t)}{p(\mathbf{u}_{\texttt{pos}, t+j})} \).

\begin{equation}
\begin{aligned}
\mathcal{L}^{(InfoNCE)}_j & =-\mathbb{E}_{\mathcal{B}} \log \left[\frac{\frac{p\left(\mathbf{u}_{t+j} \mid \mathbf{c}_t\right)}{p\left(\mathbf{u}_{t+j}\right)}}{\frac{p\left(\mathbf{u}_{t+j} \mid \mathbf{c}_t\right)}{p\left(\mathbf{u}_{t+j}\right)}+\sum_{\mathbf{u}_{l, t+j} \in \mathcal{B}_{\text {neg }}} \frac{p\left(\mathbf{u}_{l, t+j} \mid \mathbf{c}_t\right)}{p\left(\mathbf{u}_{l, t+j}\right)}}\right] \\
& =\mathbb{E}_{\mathcal{B}} \log \left[1+\frac{p\left(\mathbf{u}_{t+j}\right)}{p\left(\mathbf{u}_{t+j} \mid \mathbf{c}_t\right)} \sum_{\mathbf{u}_{l, t+j} \in \mathcal{B}_{\text {neg }}} \frac{p\left(\mathbf{u}_{l, t+j} \mid \mathbf{c}_t\right)}{p\left(\mathbf{u}_{l, t+j}\right)}\right] \\
& \approx \mathbb{E}_{\mathcal{B} }\log \left[1+\frac{p\left(\mathbf{u}_{t+j}\right)}{p\left(\mathbf{u}_{l, t+j} \mid \mathbf{c}_t\right)}(|\mathcal{B}|-1) \mathbb{E}_{\mathbf{U}_{t+j}} \frac{p\left(\mathbf{u}_{l, t+j}\mid \mathbf{c}_t\right)}{p\left(\mathbf{u}_{l, t+j}\right)}\right] \\
& =\mathbb{E}_{\mathcal{B}} \log \left[1+\frac{p\left(\mathbf{u}_{t+j}\right)}{p\left(\mathbf{u}_{t+j} \mid \mathbf{c}_t\right)}(|\mathcal{B}|-1)\right] \\
& \geq \mathbb{E}_{\mathcal{B}} \log \left[\frac{p\left(\mathbf{u}_{t+j}\right)}{p\left(\mathbf{u}_{t+j}\mid \mathbf{c}_t\right)} |\mathcal{B}|\right] \\
& =-I\left(\mathbf{u}_{t+j}, \mathbf{c}_t\right)+\log (|\mathcal{B}|),
\end{aligned}
\label{eq: infonce_proof}
\end{equation}

The approximation in the third equation, Eq. \eqref{eq: infonce_proof}, becomes more precise as the batch size increases.
% revise equation in paper !!!!!!!!!!!!!!!!!
\end{proof}

\subsection{Relation between InfoMax and Input Reconstruction}

We now prove Proposition \ref{prop: lb_infomax}, which states that:
\[
I(\mathbf{C}_t^h,\mathbf{C}_t^f) \leq I(\mathbf{H}_{t}, (\mathbf{C}_t^h, \mathbf{C}_t^f)).
\]

\begin{proof}
This follows from two applications of the data processing inequality \cite{cover1999elementsInfo}, which states that for random variables \( A \), \( B \), and \( C \) satisfying the Markov relation \( A \rightarrow B \rightarrow C \), the inequality \( I(A;C) \leq I(A;B) \) holds.

First, since \( \mathbf{C}_t^h = \Phi_{\theta_{1},\theta_{2}}(\mathbf{H}_t^h) \) and \( \mathbf{C}_t^f = \Phi_{\theta_{1},\theta_{2}}(\mathbf{H}_t^f) \), we can write \( \mathbf{H}_t^h = \mathrm{trunc}_f(\mathbf{H}_t) \) and \( \mathbf{H}_t^f = \mathrm{trunc}_h(\mathbf{H}_t) \), where \( \mathrm{trunc}_f \) and \( \mathrm{trunc}_h \) truncate the future and history processes, respectively, given a splitting time \( t_0 \).

Thus, we have the Markov relation:
\[
\mathbf{C}_t^h \xleftarrow{\Phi_{\theta_{1},\theta_{2}} \circ \mathrm{trunc}_f} \mathbf{H}_t \xrightarrow[]{\Phi_{\theta_{1},\theta_{2}} \circ \mathrm{trunc}_h} \mathbf{C}_t^f,
\]
which is Markov equivalent to:
\[
\mathbf{C}_t^h \xrightarrow{\Phi_{\theta_{1},\theta_{2}} \circ \mathrm{trunc}_f} \mathbf{H}_t \xrightarrow[]{\Phi_{\theta_{1},\theta_{2}} \circ \mathrm{trunc}_h} \mathbf{C}_t^f.
\]
By the data processing inequality, this results in \( I(\mathbf{C}_t^h, \mathbf{C}_t^f) \leq I(\mathbf{H}_t, \mathbf{C}_t^h) \). On the other hand, we have the trivial Markov relation \( \mathbf{H}_t \rightarrow (\mathbf{C}_t^h, \mathbf{C}_t^f) \rightarrow \mathbf{C}_t^h \), which implies \( I(\mathbf{H}_t, \mathbf{C}_t^h) \leq I(\mathbf{H}_t, (\mathbf{C}_t^h, \mathbf{C}_t^f)) \). Combining these two inequalities proves the proposition.
\end{proof}

\subsection{Proof of Theorem \ref{thm: tightness_lb_info}}

To begin, we split the process history into two non-overlapping views (Figure \ref{fig: ccpc_archi}): $\mathbf{H}_t^h := \mathbf{U}_{1:t_0}$ and $\mathbf{H}_t^f := \mathbf{U}_{t_0+1:t}$, representing a historical subsequence and a future subsequence within the process history $\mathbf{H}_{t}$, respectively. We then computed representations of these two views denoted $\mathbf{C}_t^h$ and $\mathbf{C}_t^f$, respectively. This naturally gives rise to the Markov chain, as in showed in the proof of proposition \ref{prop: lb_infomax}:

\[
\mathbf{C}_t^h \xleftarrow[]{} \mathbf{H}_t \xrightarrow[]{} \mathbf{C}_t^f
\]

which is Markov equivalent to:

\[
\mathbf{C}_t^h \xrightarrow{} \mathbf{H}_t \xrightarrow[]{} \mathbf{C}_t^f
\]

Following this Markov chain, we can show that \cite{shwartz2024compress}:

\[
I(\mathbf{C}_t^f, \mathbf{C}_t^h) = I(\mathbf{H}_t, \mathbf{C}_t^h) - \mathbb{E}_{\mathbf{h}_t \sim \mathbb{P}_{\mathbf{H}_t}} \mathbb{E}_{\mathbf{c}_{t}^f \sim \mathbb{P}_{\mathbf{C}_{t}^f \mid \mathbf{h}_t}} \left[ D_{KL}[\mathbb{P}_{\mathbf{C}_{t}^h \mid \mathbf{h}_t} || \mathbb{P}_{\mathbf{C}_{t}^h \mid \mathbf{c}_{t}^f }] \right]
\]

On the other hand, by applying the chain rule of the mutual information to $I(\mathbf{H}_{t}; (\mathbf{C}_t^h, \mathbf{C}_t^f))$ we get:

\[
I(\mathbf{H}_{t}; (\mathbf{C}_t^f, \mathbf{C}_t^h)) = I(\mathbf{H}_t, \mathbf{C}_t^h) + I(\mathbf{H}_t; \mathbf{C}_t^f \mid \mathbf{C}_t^h) 
\]

Combining these equations, the tightness of our bounds can be written as:

\[
I(\mathbf{H}_{t}; (\mathbf{C}_t^f, \mathbf{C}_t^h)) - I(\mathbf{C}_t^f, \mathbf{C}_t^h) = I(\mathbf{H}_t; \mathbf{C}_t^f \mid \mathbf{C}_t^h) + \mathbb{E}_{\mathbf{h}_t \sim \mathbb{P}_{\mathbf{H}_t}} \mathbb{E}_{\mathbf{c}_{t}^f \sim \mathbb{P}_{\mathbf{C}_{t}^f \mid \mathbf{h}_t}} \left[ D_{KL}[\mathbb{P}_{\mathbf{C}_{t}^h \mid \mathbf{h}_t} || \mathbb{P}_{\mathbf{C}_{t}^h \mid \mathbf{c}_{t}^f }] \right]
\]

\subsection{On the Relation Between Conditional Entropy and Reconstruction}
\label{appendix: zero_cond_H}

We now prove the statement in the core paper, which asserts that the conditional entropy \( H(\mathbf{H}_{t} \mid (\mathbf{C}_t^h, \mathbf{C}_t^f)) \geq 0 \) is minimized if \( \mathbf{H}_{t} \) is almost surely a function of \( (\mathbf{C}_t^h, \mathbf{C}_t^f) \). The proof is adapted from \cite{cover1999elementsInfo}.

\begin{proposition}
\label{prop: zero_cond_H}
If \( H(\mathbf{A} \mid \mathbf{B}) = 0 \), then \( \mathbf{A} = f(\mathbf{B}) \) almost surely.
\end{proposition}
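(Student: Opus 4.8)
The plan is to work in the discrete setting, where Shannon conditional entropy decomposes as a nonnegative weighted average, and to exploit the strict nonnegativity of the map $x \mapsto -x\log x$ on $[0,1]$. First I would expand the conditional entropy over the realizations of $\mathbf{B}$:
\[
H(\mathbf{A}\mid\mathbf{B}) = \sum_{b} p(b)\, H(\mathbf{A}\mid \mathbf{B}=b), \qquad H(\mathbf{A}\mid \mathbf{B}=b) = -\sum_{a} p(a\mid b)\log p(a\mid b).
\]
Since each inner entropy $H(\mathbf{A}\mid \mathbf{B}=b)$ is nonnegative and each weight $p(b)$ is nonnegative, the hypothesis $H(\mathbf{A}\mid\mathbf{B})=0$ forces every summand with $p(b)>0$ to vanish; that is, $H(\mathbf{A}\mid \mathbf{B}=b)=0$ for $\mathbb{P}_{\mathbf{B}}$-almost every $b$.

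The key pointwise step is then to show that $H(\mathbf{A}\mid \mathbf{B}=b)=0$ forces the conditional law $p(\cdot\mid b)$ to be a point mass. Here I would use that each term $-p(a\mid b)\log p(a\mid b)\geq 0$, with equality if and only if $p(a\mid b)\in\{0,1\}$; a sum of nonnegative terms is zero only when each term is zero, so every $p(a\mid b)\in\{0,1\}$. Because $\sum_{a} p(a\mid b)=1$, exactly one value $a=a(b)$ satisfies $p(a(b)\mid b)=1$ while all others have $p(a\mid b)=0$. Defining $f(b):=a(b)$ on the set of $b$ with $p(b)>0$ (and arbitrarily elsewhere) yields a measurable map with $\mathbb{P}(\mathbf{A}=f(\mathbf{B})\mid \mathbf{B}=b)=1$ for almost every such $b$; integrating over $\mathbf{B}$ then gives $\mathbf{A}=f(\mathbf{B})$ almost surely.

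For this proposition the computations are elementary, so the main obstacle is not analytic but the careful bookkeeping of null sets: one must confirm that the collection of exceptional $b$ with $H(\mathbf{A}\mid \mathbf{B}=b)>0$ carries total probability zero, which follows because their weighted contribution to $H(\mathbf{A}\mid\mathbf{B})$ is a sum of nonnegative terms equal to zero. The crux of the argument is genuinely the equality case of $-x\log x\geq 0$, which pins down degeneracy of the conditional law. Applying the result with $\mathbf{A}=\mathbf{H}_t$ and $\mathbf{B}=(\mathbf{C}_t^h,\mathbf{C}_t^f)$ then shows that driving the conditional entropy to its minimum value $0$ (equivalently, maximizing the mutual information $I(\mathbf{H}_t,(\mathbf{C}_t^h,\mathbf{C}_t^f))$) forces $\mathbf{H}_t$ to be an almost-sure function of the pair of representations, which is exactly the reconstructability claim invoked in the main text.
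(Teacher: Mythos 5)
Your proof is correct and follows essentially the same route as the paper's: both work in the discrete setting and reduce everything to the equality case of $-x\log x \geq 0$ on $[0,1]$. The only difference is presentational — you argue directly and explicitly construct $f$ and track the null sets, whereas the paper argues by contradiction and leaves the construction of $f$ implicit — so if anything your write-up is slightly more complete.
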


\begin{proof}
For simplicity, suppose \( \mathbf{A} \) and \( \mathbf{B} \) are discrete random variables. Assume, by contradiction, that there exists \( \mathbf{b}_0 \) and two distinct values \( \mathbf{a}_1 \) and \( \mathbf{a}_2 \) such that \( p(\mathbf{a}_1 \mid \mathbf{b}_0) > 0 \) and \( p(\mathbf{a}_2 \mid \mathbf{b}_0) > 0 \). Then, the conditional entropy is given by:

\begin{equation*}
    H(\mathbf{A} \mid \mathbf{B}) = - \sum_{\mathbf{b}} p(\mathbf{b}) \sum_{\mathbf{a}} p(\mathbf{a} \mid \mathbf{b}) \log p(\mathbf{a} \mid \mathbf{b}).
\end{equation*}

In particular, we have:
\[
H(\mathbf{A} \mid \mathbf{B}) \geq p(\mathbf{b}_0) \left(-p(\mathbf{a}_1 \mid \mathbf{b}_0) \log p(\mathbf{a}_1 \mid \mathbf{b}_0) - p(\mathbf{a}_2 \mid \mathbf{b}_0) \log p(\mathbf{a}_2 \mid \mathbf{b}_0) \right) > 0.
\]

Since \( -t \log t \geq 0 \) for \( 0 \leq t \leq 1 \) and is strictly positive for \( t \) not equal to 0 or 1, the conditional entropy \( H(\mathbf{A} \mid \mathbf{B}) = 0 \) if and only if \( \mathbf{A} \) is almost surely a function of \( \mathbf{B} \).
\end{proof}

\subsection{On the benefit of the InfoMax loss on inverting the data generation process}
\label{appendix: infomax_inverting_dgp}
To ensure identifiability in the latent space, we leverage recent advances in causal and disentangled representation learning. Suppose the true data-generating process is given by $\mathbf{H}_{t} = g(\mathbf{z}_t)$, where $\mathbf{z}_t$ represents the true latent factors. In the sequential context, we assume that the same function $g$ generates two historical subsequences:
\[
\mathbf{H}_t^f = g(\mathbf{z}_t^f), \quad \mathbf{H}_t^h = g(\mathbf{z}_t^h).
\]
We assume a general dependency of the form:
\[
p(\mathbf{z}_t^f \mid \mathbf{z}_t^h ) = \frac{Q(\mathbf{z}_t^f)}{Z(\mathbf{z}_t^h)} \exp(-d(\mathbf{z}_t^f, \mathbf{z}_t^h)).
\]
Here, $\mathbf{\Phi}$ is an encoder, and we use the InfoMax regularization term as follows:
\[
\mathcal{L}^{(InfoMax)}(\mathbf{\Phi}, d, \mathcal{B}) := -\mathbb{E}_{\mathcal{B}} \left[ \log \frac{\exp(-d(\mathbf{\Phi}(\mathbf{H}_t^f), \mathbf{\Phi}(\mathbf{H}_t^h)))}{\sum_{l = 1}^{|\mathcal{B}|} \exp(-d(\mathbf{\Phi}(\mathbf{H}_{l,t}^f), \mathbf{\Phi}(\mathbf{H}_t^h)))}. \right]
\]
According to \cite{matthes2023towards}, under certain conditions, if the encoder $f$ minimizes $\mathcal{L}^{(InfoMax)}$, then $h = g \circ f$ is a scaled permutation matrix. This result suggests that when the encoder achieves a minimizer for $\mathcal{L}^{(InfoMax)}$, the encoder function $f$ closely approximates an invertible transformation of $g$.

From a causal inference perspective, if $Y_{it}(\omega_{it}) \indep W_{it} \mid \mathbf{H}_{it}$ and $\mathbf{H}_{it} = g(\mathbf{Z}_{it})$, then an invertible function $g \circ f$ ensures that:
\[
Y_{it}(\omega_{it}) \indep W_{it} \mid g \circ f (\mathbf{H}_{it}).
\]
Thus, $Y_{it}(\omega_{it}) \indep W_{it} \mid g(\mathbf{C}_{it})$ and since $g$ is invertible, we have:
\[
Y_{it}(\omega_{it}) \indep W_{it} \mid \mathbf{C}_{it}.
\]
This demonstrates that the representation $\mathbf{C}_{it}$ retains the essential independence structure, facilitating accurate counterfactual inference.

\subsection{Proof of theorem \ref{thm: blancing_iclub}} 

To prove the Theorem \ref{thm: blancing_iclub}, we first prove the following lemma and proposition.
\begin{lemma}
    Let $\Phi$ be a fixed representation function. Given that $q(W_{t+1} \mid \Phi(\mathbf{H}_t)) $ is the conditional likelihood of observing the treatment $W_{t+1}$, denote the probability of observing each treatment value as $q^j=q(\Phi(\mathbf{H}_t)):= q(W_{t+1} = j \mid \Phi(\mathbf{H}_t))$ for $j \in \{0,1,\dots, K-1\}$. Then, the optimal treatment prediction function is such that 
    \begin{equation}
        q^{j, *}(\Phi(\mathbf{H}_t) ) = \frac{p(\Phi(\mathbf{H}_t)\mid W_{t+1} = j)}{\sum_{l=0}^{K-1}p(\Phi(\mathbf{H}_t)\mid W_{t+1} = l)p(W_{t+1} = l)}
    \end{equation}
    \label{lem: optimal_q}
\end{lemma}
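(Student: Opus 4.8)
The plan is to recognize the optimization defining $q^*$ as a conditional cross-entropy minimization and to solve it pointwise over the representation space. First I would make the objective explicit: with $\Phi$ held fixed, $q^*$ maximizes $\mathbb{E}_{(\Phi(\mathbf{H}_t), W_{t+1})}\!\left[\log q(W_{t+1} \mid \Phi(\mathbf{H}_t))\right]$ over all conditional distributions $q(\cdot \mid \phi)$ on $\{0,1,\dots,K-1\}$. Conditioning on the value $\phi = \Phi(\mathbf{h}_t)$, this expectation factorizes as an outer average over $\mathbb{P}_{\Phi(\mathbf{H}_t)}$ of the inner quantity $\sum_{j=0}^{K-1} p(W_{t+1}=j \mid \phi)\,\log q^j(\phi)$. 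Since the admissibility constraint ``$q(\cdot\mid\phi)$ is a probability vector'' couples only the coordinates attached to a single $\phi$, the maximization decouples into an independent problem for each $\phi$.

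Next I would solve the inner problem: for fixed $\phi$, maximize $\sum_{j} p(W_{t+1}=j \mid \phi)\,\log q^j$ subject to $q^j \geq 0$ and $\sum_{j} q^j = 1$. This I would handle either by introducing a Lagrange multiplier for the normalization constraint and setting the derivatives in $q^j$ to zero, or more cleanly by invoking Gibbs' inequality, which states that the cross-entropy $-\sum_j p_j \log q_j$ is minimized over distributions $q$ exactly at $q = p$. Either route gives $q^{j,*}(\phi) = p(W_{t+1}=j \mid \phi)$, i.e. the true posterior over treatments given the representation.

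Finally, I would apply Bayes' rule to re-express this posterior through the class-conditional laws of the representation, writing $p(W_{t+1}=j \mid \phi) = p(\phi \mid W_{t+1}=j)\,p(W_{t+1}=j) \big/ \sum_{l=0}^{K-1} p(\phi \mid W_{t+1}=l)\,p(W_{t+1}=l)$, where the denominator is the marginal density of $\Phi(\mathbf{H}_t)$ at $\phi$. This recovers the expression in the statement.

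The main obstacle I anticipate is the measure-theoretic justification of the pointwise decoupling when $\Phi(\mathbf{H}_t)$ is continuous: one must argue that selecting $q(\cdot\mid\phi)$ to equal the posterior $\mathbb{P}_{\Phi(\mathbf{H}_t)}$-almost everywhere is both admissible and globally optimal, rather than treating the problem as a single finite-dimensional simplex optimization. I would address this by noting that the objective depends on $q$ only through its values $\mathbb{P}_{\Phi(\mathbf{H}_t)}$-a.e., that the inner optimum is attained separately at each $\phi$, and that the class-conditional densities $p(\phi \mid W_{t+1}=j)$ are to be understood relative to a common dominating measure so that the Bayes step is well defined.
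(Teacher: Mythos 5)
Your proposal is correct and follows essentially the same route as the paper's proof: pointwise maximization of the conditional log-likelihood under the normalization constraint via a Lagrange multiplier (the paper does not invoke Gibbs' inequality by name, but the computation is identical), followed by Bayes' rule. One remark: your derived optimum correctly carries the prior factor $p(W_{t+1}=j)$ in the numerator --- i.e., $q^{j,*}$ is the true posterior $p(W_{t+1}=j\mid \Phi(\mathbf{H}_t))$ --- which is what the paper's own derivation produces and what Proposition~\ref{prop: iclub_form} subsequently uses, so the omission of that factor in the lemma statement as printed is a typo rather than a discrepancy in your argument.
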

\begin{proof}
    For a fixed representation $\Phi$, finding the optimal treatment probabilities amounts to solving the following optimization problem:
    \begin{equation}
        \max_{q} \mathbb{E}_{\mathbb{P}(\Phi(\mathbf{H}_t), W_{t+1})} \left[ \log q(W_{t+1} \mid \Phi(\mathbf{H}_t)) \right]  \quad \textrm{subject to}  \quad \sum_{l=0}^{K-1}q^{l}(\Phi(\mathbf{H}_t)) = 1
        \label{eq: max_treat_constrained}
    \end{equation}

First, we write the likelihood $q(W_{t+1} \mid \Phi(\mathbf{H}_t))$ using the conditional probabilities $q^j(\Phi(\mathbf{H}_t))$.
    \begin{equation*}
        q(W_{t+1} \mid \Phi(\mathbf{H}_t)) = \prod_{j=0}^{K-1}q^j(\Phi(\mathbf{H}_t))^{\mathds{1}_{\{W_{t+1}=j\}}}
    \end{equation*}
    
Then, the treatment likelihood can be written as   
\begin{equation*}
        \begin{aligned}
        \mathbb{E}_{\mathbb{P}(\Phi(\mathbf{H}_t), W_{t+1})} \left[ \log q(W_{t+1} \mid \Phi(\mathbf{H}_t)) \right] &=  \mathbb{E}_{\mathbb{P}(\Phi(\mathbf{H}_t), W_{t+1})}  \left[ \sum_{l=0}^{K-1} \log(q^{l}(\Phi(\mathbf{H}_t)))\mathds{1}_{\{W_{t+1}=j\}} \right]   \\ 
        &=  \sum_{l=0}^{K-1} \int\log(q^{l}(\Phi(\mathbf{H}_t))\mathds{1}_{\{W_{t+1}=j\}} p(W_{t+1}\mid \Phi(\mathbf{H}_t))p(\Phi(\mathbf{H}_t))dW_{t+1}d\Phi(\mathbf{H}_t) \\
         &= \sum_{l=0}^{K-1} \int \log(q^{l}(\Phi(\mathbf{H}_t)) p(W_{t+1} = l\mid \Phi(\mathbf{H}_t))p(\Phi(\mathbf{H}_t))d\Phi(\mathbf{H}_t) \\ 
         &= \sum_{l=0}^{K-1} \int\log(q^{l}(\Phi(\mathbf{H}_t)) p(\Phi(\mathbf{H}_t)\mid W_{t+1} = l ) p(W_{t+1} = l) d\Phi(\mathbf{H}_t) 
        \end{aligned}
    \end{equation*}

Let's denote $\alpha_l = p(W_{t+1} = l)$,  the marginal probability of observing the $l$-th treatment regime, and $p_{l}^{\Phi}(\mathbf{H}_t) = p(\Phi(\mathbf{H}_t)\mid W_{t+1} = l )$ with a corresponding probability distribution $\mathbb{P}_{l}^{\Phi}$.  We intend to maximize point-wise the objective in Eq. \eqref{eq: max_treat_constrained}. Plugging the latter formulation of the conditional likelihood in Eq. \eqref{eq: max_treat_constrained}  and writing the Lagrangian function, we get

\begin{equation}
    \max_{q} \sum_{l=0}^{K-1} \log(q^{l}(\Phi(\mathbf{H}_t))p_{j}^{\Phi}(\mathbf{H}_t) \alpha_l + \lambda (\sum_{l=0}^{K-1}q^{l}(\Phi(\mathbf{H}_t)) - 1)  
\end{equation}

Computing the gradient w.r.t $q^{l}(\Phi(\mathbf{H}_t))$ for  $l \in \{0,1,\dots, K-1\}$ and setting to zero, we have

\begin{equation}
    q^{l, *}(\Phi(\mathbf{H}_t)) = -\frac{\alpha_l p_{j}^{\Phi}(\mathbf{H}_t)}{\lambda}
\end{equation}
Then, by the equality constraint, we find that $\lambda = -\sum_{l=0}^{K-1}\alpha_l p_{j}^{\Phi}(\mathbf{H}_t)$.
\end{proof}
\begin{proposition}
 \label{prop: iclub_form}
    Let $\Phi$ be a fixed representation function.  The $I_{CLUB}$ objective when the treatment prediction function is optimal (i.e. $ q=q^{*})$ has the following form:  
     \begin{equation}
     \label{eq: i_club_optimal_q}
I_{CLUB} = \sum_{j=0}^{K-1}\alpha_l D_{KL}(\mathbb{P}_j^{\Phi}|| \sum_{l=0}^{K-1}\alpha_l \mathbb{P}_l^{\Phi}) 
+ \mathbb{E}_{\mathbb{P}_{\Phi(\mathbf{H}_t)}}\left[ D_{KL}(\mathbb{P}_{W_{t+1}}||\mathbb{P}_{W_{t+1}|\Phi(\mathbf{H}_t)} )  \right].
     \end{equation}
\end{proposition}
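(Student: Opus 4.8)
The plan is to take the optimal treatment predictor $q^{*}$ characterized in Lemma \ref{lem: optimal_q}, substitute it into the definition of $I_{\text{CLUB}}$ in Eq. (\ref{eq: ICLUB}), and regroup the two resulting expectations into the mutual-information term and the averaged reverse-KL term appearing in Eq. (\ref{eq: i_club_optimal_q}). The decisive first step I would carry out is to recognize that $q^{*}$ is in fact the \emph{true posterior}. Using the shorthand $\alpha_l = p(W_{t+1}=l)$ and $p_l^{\Phi}(\mathbf{H}_t) = p(\Phi(\mathbf{H}_t)\mid W_{t+1}=l)$, Lemma \ref{lem: optimal_q} gives $q^{j,*}(\Phi(\mathbf{H}_t)) = \alpha_j p_j^{\Phi}(\mathbf{H}_t)/\sum_{l=0}^{K-1}\alpha_l p_l^{\Phi}(\mathbf{H}_t)$; since $\sum_{l}\alpha_l p_l^{\Phi}$ is, by the law of total probability, precisely the marginal density of $\Phi(\mathbf{H}_t)$, Bayes' rule identifies $q^{j,*}(\Phi(\mathbf{H}_t)) = p(W_{t+1}=j\mid \Phi(\mathbf{H}_t))$. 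This lets me replace $q^{*}(\cdot\mid\Phi(\mathbf{H}_t))$ by the genuine conditional law $\mathbb{P}_{W_{t+1}\mid\Phi(\mathbf{H}_t)}$ everywhere in Eq. (\ref{eq: ICLUB}).

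Substituting this reduces $I_{\text{CLUB}}$ to $\mathbb{E}_{\mathbb{P}_{(\Phi(\mathbf{H}_t),W_{t+1})}}[\log p(W_{t+1}\mid\Phi(\mathbf{H}_t))] - \mathbb{E}_{\mathbb{P}_{\Phi(\mathbf{H}_t)}}\mathbb{E}_{\mathbb{P}_{W_{t+1}}}[\log p(W_{t+1}\mid\Phi(\mathbf{H}_t))]$. The key manipulation is to add and subtract the marginal cross term $\mathbb{E}_{\mathbb{P}_{W_{t+1}}}[\log p(W_{t+1})] = \sum_j\alpha_j\log\alpha_j$. Pairing it with the first expectation yields $\mathbb{E}_{\mathbb{P}_{(\Phi,W)}}[\log\{p(W\mid\Phi)/p(W)\}] = I(\Phi(\mathbf{H}_t);W_{t+1})$, which I then rewrite in the claimed mixture form $\sum_{j}\alpha_j D_{KL}(\mathbb{P}_j^{\Phi}\,\|\,\sum_{l}\alpha_l \mathbb{P}_l^{\Phi})$ using $p(\Phi\mid W=j)=p_j^{\Phi}$ and $p(\Phi)=\sum_l\alpha_l p_l^{\Phi}$. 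Pairing the same term with the second (product-of-marginals) expectation yields $\mathbb{E}_{\mathbb{P}_{\Phi}}\sum_j\alpha_j\log\{\alpha_j/p(W=j\mid\Phi)\} = \mathbb{E}_{\mathbb{P}_{\Phi(\mathbf{H}_t)}}[D_{KL}(\mathbb{P}_{W_{t+1}}\,\|\,\mathbb{P}_{W_{t+1}\mid\Phi(\mathbf{H}_t)})]$, the second term of Eq. (\ref{eq: i_club_optimal_q}). Because the added and subtracted copies of $\sum_j\alpha_j\log\alpha_j$ cancel, the sum of these two divergence terms equals exactly $I_{\text{CLUB}}$.

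I expect the only real subtlety—rather than a genuine obstacle—to be the careful bookkeeping of conditional versus marginal densities: verifying that $\sum_l\alpha_l p_l^{\Phi}$ is the representation marginal so that $q^{*}$ is genuinely the posterior, and checking that the constant $\sum_j\alpha_j\log\alpha_j$ introduced on one side is exactly the one removed on the other. One should also track the asymmetry of the two terms: the mutual-information term is a \emph{forward} KL of each class-conditional against the mixture, whereas the second is the \emph{reverse} KL of the treatment marginal against the posterior, averaged over the representation distribution; keeping these orientations straight is where a sign error would most easily slip in. Modulo these routine checks, the identity in Eq. (\ref{eq: i_club_optimal_q}) follows directly.
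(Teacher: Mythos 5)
Your proposal is correct and follows essentially the same route as the paper: substitute the optimal $q^{*}$ (which, as you note and the paper uses implicitly, is the Bayes posterior $p(W_{t+1}\mid\Phi(\mathbf{H}_t))$ since $\sum_l\alpha_l p_l^{\Phi}$ is the representation marginal) into the two expectations defining $I_{\text{CLUB}}$ and split each against the constant $\sum_j\alpha_j\log\alpha_j=-H(W_{t+1})$, yielding the forward-KL mixture term and the averaged reverse-KL term respectively. The only difference is organizational: the paper evaluates the two expectations $A$ and $B$ separately, each producing a $-H(W_{t+1})$ that cancels in $A-B$, whereas you add and subtract that entropy explicitly and identify the first piece as the mutual information $I(\Phi(\mathbf{H}_t);W_{t+1})$ — the same algebra, slightly more transparently packaged.
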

\begin{proof}
    First, recall that 
    \begin{equation*}
        I_{\text{CLUB}}(\Phi(\mathbf{H}_t), W_{t+1}; q^{*}) =  \mathbb{E}_{\mathbb{P}_{(\Phi(\mathbf{H}_{t+1}), W_{t+1})}}\left[ \log q^{*}(W_{t+1} \mid \Phi(\mathbf{H}_{t+1})) \right] - \left.\mathbb{E}_{\mathbb{P}_{\Phi(\mathbf{H}_{t+1})}}\mathbb{E}_{\mathbb{P}_{W_{t+1}}}\left( \log q^{*}(W_{t+1} \mid \Phi(\mathbf{H}_{t+1})) \right) \right]
    \end{equation*}
     \begin{equation*}
        I_{\text{CLUB}}(\Phi(\mathbf{H}_t), W_{t+1}; q^{*}) =  A - B
    \end{equation*}
    Let's detail $A$ and $B$ separately, 

    \begin{equation*}
        \begin{aligned}
        A &= \sum_{j=0}^{K-1} \int \alpha_j \log(q^{l, *}(\Phi(\mathbf{H}_t)) p_{j}^{\Phi}(\mathbf{H}_t) d\Phi(\mathbf{H}_t) \\ 
        &= \sum_{j=0}^{K-1} \int\alpha_j \log(\frac{\alpha_j p_{j}^{\Phi}(\mathbf{H}_t)}{\sum_{l=0}^{K-1}p_{l}^{\Phi}(\mathbf{H}_t)\alpha_l}) p_{j}^{\Phi}(\mathbf{H}_t) d\Phi(\mathbf{H}_t) \\ 
        &= \sum_{j=0}^{K-1} \int \alpha_j \log(\frac{p_{j}^{\Phi}(\mathbf{H}_t)}{\sum_{l=0}^{K-1}p_{l}^{\Phi}(\mathbf{H}_t)\alpha_l}) p_{j}^{\Phi}(\mathbf{H}_t) d\Phi(\mathbf{H}_t) +\log(\alpha_j)\alpha_j \\
         &= \sum_{j=0}^{K-1} \alpha_j D_{KL}(\mathbb{P}_j^{\Phi}|| \sum_{l=0}^{K-1}\alpha_l \mathbb{P}_l^{\Phi}) +  \sum_{j=0}^{K-1} \log(\alpha_j)\alpha_j \\ 
        \end{aligned}
    \end{equation*}

    Finally, we can write 
    \begin{equation}
    A = \sum_{j=0}^{K-1} \alpha_j D_{KL}(\mathbb{P}_j^{\Phi}|| \sum_{l=0}^{K-1}\alpha_l \mathbb{P}_l^{\Phi})  - H(W_{t+1}) 
        \label{eq: A_final_form}
    \end{equation}

    For the remaining term $B$, we have
     \begin{equation*}
        \begin{aligned}
        B &= \mathbb{E}_{\mathbb{P}_{\Phi(\mathbf{H}_t)}}\mathbb{E}_{\mathbb{P}_{W_{t+1}}}\left( \log q^{*}(W_{t+1} \mid \Phi(\mathbf{H}_{t+1})) \right] \\ 
        &= \sum_{j=0}^{K-1} \mathbb{E}_{\mathbb{P}_{\Phi(\mathbf{H}_t)}} \mathbb{E}_{\mathbb{P}_{W_{t+1}}} \left[ \log(q^{j}(\Phi(\mathbf{H}_t)))\mathds{1}_{\{W_{t+1}=j\}} \right] \\ 
        &= \sum_{j=0}^{K-1} \mathbb{E}_{\mathbb{P}_{\Phi(\mathbf{H}_t)}}\left[ \alpha_j \log(q^{j}(\Phi(\mathbf{H}_t)))\right] \\ 
        &= \sum_{j=0}^{K-1}\alpha_j \int \log\left[ \frac{\alpha_j p_{j}^{\Phi}(\mathbf{H}_t)}{\sum_{l=0}^{K-1}p_{l}^{\Phi}(\mathbf{H}_t)\alpha_l}\right]p(\Phi(\mathbf{H}_t))d\Phi(\mathbf{H}_t) \\ 
       &= \sum_{j=0}^{K-1}\alpha_j \int \log\left[ \frac{p(\Phi(\mathbf{H}_t))}{\sum_{l=0}^{K-1}p_{l}^{\Phi}(\mathbf{H}_t)\alpha_l} \frac{p(W_{t+1} =j \mid \Phi(\mathbf{H}_t))}{p(W_{t+1} =j)}\right]p(\Phi(\mathbf{H}_t))d\Phi(\mathbf{H}_t)\\ 
       &- H(W_{t+1})  \\ 
       &= \sum_{j=0}^{K-1}\alpha_j \int \underbrace{\log\left[ \frac{p(\Phi(\mathbf{H}_t))}{\sum_{l=0}^{K-1}p_{l}^{\Phi}(\mathbf{H}_t)\alpha_l}\right]}_{=0}p(\Phi(\mathbf{H}_t))d\Phi(\mathbf{H}_t) \\ 
        &+ \sum_{j=0}^{K-1}\alpha_j \int \log\left[\frac{p(W_{t+1} =j \mid \Phi(\mathbf{H}_t))}{p(W_{t+1} =j)}\right] p(\Phi(\mathbf{H}_t))d\Phi(\mathbf{H}_t) - H(W_{t+1}) \\
        \end{aligned}
    \end{equation*}
    The final form of $B$ is therefore 
    \begin{equation}
    B = - \int D_{KL}(\mathbb{P}_{W_{t+1}}| \mathbb{P}_{W_{t+1}| \Phi(\mathbf{H}_t)})p(\Phi(\mathbf{H}_t)) d\Phi(\mathbf{H}_t)
       - H(W_{t+1}) 
        \label{eq: B_final_form}
    \end{equation}

    The proposition follows immediately from Equations \eqref{eq: A_final_form} and \eqref{eq: B_final_form}.
\end{proof}

\begin{proof}(Theorem \ref{thm: blancing_iclub})
Since by lemma \ref{lem: optimal_q}, the $I_{CLUB}$ formulation in  proposition \ref{prop: iclub_form} holds, then to prove that the representation is balanced, it is enough to see that by the positivity of $D_{KL}$

\begin{equation}
    I_{CLUB} \geq  \mathbb{E}_{\mathbb{P}_{\Phi(\mathbf{H}_t)}}\left[ D_{KL}(\mathbb{P}_{W_{t+1}}||\mathbb{P}_{W_{t+1}|\Phi(\mathbf{H}_t)} )\right] \geq 0  
    \label{eq: iclub_ineq_at_optimal}
\end{equation}
$I_{CLUB}$ is minimal when $I_{CLUB} =0$, which happens if and only if for $j \in \{0,1,\dots, K-1\}$  $p(W_{t+1} =j ) = p(W_{t+1} =j \mid \Phi(\mathbf{H}_t))$ almost surely which, by Bayes rule is equivalent to say $p(\Phi(\mathbf{H}_t)) = p(\Phi(\mathbf{H}_t) \mid W_{t+1} =j) $.
% To prove inequality  in \eqref{eq: iclub_ineq_at_optimal}, we use that fact the Keullbeick Leibler divergence $(p, q) \rightarrow D(p||q)$ is convex over the pair of the probability distributions so that we have the property 

% \begin{equation*}
%     D(\sum_{j=0}^{K-1}\lambda_j p_j||\sum_{j=0}^{K-1}\lambda_jq_j) \leq \sum_{j=0}^{K-1}\lambda_j D(p_j||q_j)
% \end{equation*}
%     such that $ 0 \leq \lambda_j \leq 1 $ and $\sum_{j=0}^{K-1}\lambda_j=1$. 

%     Going back to our $I_{CLUB}$ formulation in Eq. \eqref{eq: i_club_optimal_q} we have 
%     \begin{equation}
%     \begin{aligned}
%         D_{KL}(p^{\Phi}||  \sum_{l=0}^{K-1}\alpha_l p_l^{\Phi}) &= D_{KL}(p^{\Phi}||  \sum_{l=0}^{K-1}\alpha_l p_l^{\Phi}) \\ 
%         & = D_{KL}(p^{\Phi}||  \sum_{l=0}^{K-1}\alpha_l p_l^{\Phi}) &= D_{KL}(p^{\Phi}||  \sum_{l=0}^{K-1}\alpha_l p_l^{\Phi})
%         \end{aligned}
        
%     \end{equation}
    
\end{proof}
\section{Causal CPC Pseudo algorithm}
\label{sect: alg_ccpc}
In this section, we present a detailed overview of the training procedure for Causal CPC. Initially, we train the Encoder using only the contrastive terms, as outlined in Algorithm \ref{alg: enc_pretraining}. Our primary objective is to ensure that, for each time step $t$, the process history $\mathbf{H}_{t}$ is predictive of future local features $\mathbf{Z}_t$. However, calculating the InfoNCE loss for a batch across all possible time steps $t=0, \dots, t_{\text{max}}$ can be computationally demanding.

To address this, we adopt a more efficient approach by uniformly sampling a single time step $t$ per batch. Subsequently, the corresponding process history $\mathbf{H}_{t}$ is contrasted. The sampled $\mathbf{H}_{t}$ is then employed as input for the InfoMax objective and randomly partitioned into future $\mathbf{H}_t^f$ and past $\mathbf{H}_t^h$ sub-processes.
\begin{algorithm}%[tb]
   \caption{Pretraining of the encoder}
   \label{alg: enc_pretraining}
\begin{algorithmic}
   \REQUIRE Encoder parameters $\theta_{1,2,3}$, learning rate $\mu$
   \STATE {\bfseries Input:} data $\{\mathbf{H}_{i, t_{max}}, i=1,\dots, N \}$ 
   %\REPEAT
    \FOR{$p \in \{1, \dots,\mathrm{epoch_{max}}\}$} 
            \FOR{$\mathcal{B}=\{ \mathbf{H}_{i, t_{max}}, i=1,\dots, |\mathcal{B}| \}$}
                    \STATE $\mathbf{Z}_t = \Phi_{\theta_1}([\mathbf{X}_{t}, W_{t-1}, Y_{t-1}])$ for $t=0, \dots, t_{max}$.
                    \STATE Choose $t \sim \mathcal{U}([1, t_{max}-1])$.
                    \STATE Compute $\mathbf{C}_t = \Phi_{\theta_{1},\theta_{2}}(\mathbf{H}_{t})$.
                    \STATE  Compute $\mathcal{L}^{CPC}(\theta_{1},\theta_{2}, \{\Gamma_j\}_{j=1}^{\tau})$.
                    \STATE  Choose $t_0 \sim \mathcal{U}([1, t])$.
                    \STATE  Compute $\mathbf{C}_t^h=\Phi_{\theta_{1},\theta_{2}}(\mathbf{H}_t^h), \mathbf{C}_t^f=\Phi_{\theta_{1},\theta_{2}}(\mathbf{H}_t^f)$,
                    \STATE Compute $\mathcal{L}^{(InfoMax)}(\theta_{1},\theta_{2}, \gamma)$.
                    \STATE Update parameters
                    $$\theta_{1,2,3} \leftarrow  \theta_{1,2,3} - \mu  \left( \frac{\partial \mathcal{L}^{CPC}(\theta_{1},\theta_{2}, \{\Gamma_j\}_{j=1}^{\tau})}{\partial \theta_{1,2,3} } + \frac{\partial \mathcal{L}^{(InfoMax)}(\theta_{1},\theta_{2}, \gamma)}{\partial \theta_{1,2,3} } \right  ) $$
            \ENDFOR
    \ENDFOR
   \STATE \textbf{Return}: Trained encoder.
\end{algorithmic}
\end{algorithm}

The decoder is trained while taking the encoder as input (Algorithm \ref{alg: dec_training}), utilizing a lower learning rate compared to the untrained part of the decoder. It is trained autoregressively and without teacher forcing. This implies that for each time step $t$, our GRU-based decoder should predict the future sequence of treatments $\hat{Y}_{t+1:t+\tau}$ with its hidden state initialized to the representation $\mathbf{\Phi}_t$ of the historical process up to time $t$.

To enhance training efficiency, instead of predicting $\hat{Y}_{i,t+1:t+\tau}$ for all individuals $i$ in a batch and for all possible time steps $t$, we randomly select $m$ time indices $t_{i,1}, \dots, t_{i,m}$ for each individual $i$. From these indices, we compute future treatment response sequences $\hat{Y}_{i,t_{i,1}+1:t_{i,1}+\tau}, \dots, \hat{Y}_{i,t_{i,m}+1:t_{i,m}+\tau}$.  We found that is enough to train while selecting randomly 10\% of the time steps.

\begin{algorithm}%[tb]
   \caption{Training of the decoder}
   \label{alg: dec_training}
\begin{algorithmic}
   \REQUIRE Encoder parameters $\theta_{1,2,3}$, Decoder parameters $\theta_{4}, \theta_{Y}, \theta_{W}$.
    \REQUIRE Encoder learning rate $\mu_{enc}$, Treatment learning rate $\mu_{W}$ ,  Outcome learning rate $\mu_{Y}$.
    \REQUIRE Number of random time indices $m$.
   \STATE {\bfseries Input:} data $\{\mathbf{H}_{i, t_{max}}, i=1,\dots, N \}$ 
    \FOR{$p \in \{1, \dots,\mathrm{epoch_{max}}\}$} 
            \FOR{$\mathcal{B}=\{ \mathbf{H}_{i, t_{max}}, i=1,\dots, |\mathcal{B}| \}$}
                    \STATE Compute $\mathbf{C}_{i, t} = \mathrm{encoder}(\mathbf{H}_{i, t})$ for $t=0, \dots, t_{max}$, $i=1,\dots, |\mathcal{B}| $.
                    \STATE Compute $\mathbf{\Phi}_t = \mathbf{\Phi}_{\theta_R}(\mathbf{H}_t)$.
                    \FOR{$i=1,\dots, |\mathcal{B}| $}
                            \STATE Choose $t_{i,1}, \dots, t_{i,m} \sim \mathcal{U}([1, t_{max}-\tau]) $.
                            \FOR{$t  \in  \{ t_{i,1}, \dots, t_{i,m}\}$}
                                    \STATE Compute $\hat{Y}_{i,t+1:t+\tau}, \hat{W}_{i,t+1:t+\tau}, \mathbf{\Phi}_{i,t+1:t+\tau-1} = \mathrm{decoder}(\mathbf{\Phi}_t, \mathbf{V}_i, W_{i,t}, Y_{i,t}, W_{i,t+1:t+\tau}) $
                            \ENDFOR
                    \ENDFOR
                    \STATE Compute  $\mathcal{L}_{dec}(\theta_{R}, \theta_{Y}, \theta_{W})$ and $\mathcal{L}_{W}(\theta_{W}, \theta_{R})$.
                    \STATE Update parameters in the order.
                    $$\theta_{1,2,3} \leftarrow  \theta_{1,2,3} - \mu_{enc}  \left( \frac{\partial \mathcal{L}_{dec}(\theta_{R}, \theta_{Y}, \theta_{W})}{\partial \theta_{1,2,3} }\right  ) $$
                    $$\theta_{4, Y} \leftarrow  \theta_{4, Y}  - \mu_{Y}  \left( \frac{\partial \mathcal{L}_{dec}(\theta_{R}, \theta_{Y}, \theta_{W})}{\partial \theta_{4, Y}  }\right  ) $$                     $$\theta_{W} \leftarrow  \theta_{W}  - \mu_{W}  \left( \frac{\partial  \mathcal{L}_{W}(\theta_{W}, \theta_{R}) }{\partial \theta_{W} }\right  ) $$
            \ENDFOR
    \ENDFOR
   \STATE \textbf{Return}: Trained decoder.
\end{algorithmic}
\end{algorithm}

\section{Causal CPC: Architecture details}
\label{sect: archi_ccpc}
\begin{table}[!htbp]
\centering
\begin{tabular}{c}
\hline \textbf{Inputs}: $[\mathbf{X}_{t}, W_{t-1}, Y_{t-1}]$\\
\hline Linear Layer\\
\hline WeightNorm\\
\hline SELU\\
\hline Linear Layer  \\
\hline WeightNorm\\
\hline \textbf{Outputs}: $\mathbf{Z}_t = \Phi_{\theta_1}([\mathbf{X}_{t}, W_{t-1}, Y_{t-1}])$\\
\end{tabular}
\caption{Architecture for learning local features $\mathbf{Z}_t$}
\label{tab: archi_loc_feat}
\end{table}

\begin{table}[!htbp]
\centering
\begin{tabular}{c}
\hline \textbf{Inputs}: $\mathbf{Z}_{\leq t}$\\
\hline GRU (1 layer)\\
\hline \textbf{Outputs}: Hidden state $\mathbf{C}_t = \Phi_{\theta_2}^{ar}(\mathbf{Z}_{\leq t})$\\
\end{tabular}
\caption{Architecture for learning context representation $\mathbf{C}_t$ }
\label{tab: archi_context }
\end{table}

\begin{table}[!htbp]
\centering
\begin{tabular}{c}
\hline \textbf{Inputs}: $[\mathbf{\Phi}_t, W_t]$\\
\hline Linear Layer\\
\hline WeightNorm\\
\hline SELU\\
\hline Linear Layer  \\
\hline WeightNorm\\
\hline \textbf{Outputs}: $\hat{Y}_t$\\
\end{tabular}
\caption{Architecture for outcome prediction}
\label{tab: archi_outc_pred}
\end{table}

\begin{table}[!htbp]
\centering
\begin{tabular}{c}
\hline \textbf{Inputs}: $\mathbf{\Phi}_t$\\
\hline Linear Layer\\
\hline SpectralNorm\\
\hline SELU\\
\hline Linear Layer  \\
\hline SpectralNorm\\
\hline \textbf{Outputs}: $\hat{W}_t$\\
\end{tabular}
\caption{Architecture for treatment prediction}
\label{tab: archi_treat_pred}
\end{table}
%\newpage
\section{Models hyperparameters}
\label{sect: huperparams_details}

In this section, we report the range of all hyperparameters to be fine-tuned, as well as fixed hyperparameters for all models and across the different datasets used in experiments. Best hyperparameter values are reported in the configuration files in the code repository.

\begin{table}[!htbp]
\centering
\caption{Hyper-parameters search range for RMSN}
\resizebox{0.9\textwidth}{!}{%
\begin{tabular}{|c|c|c|c|c|}
\hline
\textbf{Model} & \textbf{Sub-model} & \textbf{Hyperparameter} & \textbf{Cancer simulation} & \textbf{MIMIC III (SS)} \\
\hline
\multirow{7}{*}{RMSNs} 
& \multirow{7}{*}{Propensity Treatment Network} 
& LSTM layers & 1 & 1 \\
\cline{3-5}
& & Learning rate & $0.01, 0.005, 0.001, 0.0001$ & $0.01, 0.005, 0.001, 0.0001$ \\
\cline{3-5}
& & Batch size & $32, 64, 128$ & $32, 64, 128$ \\
\cline{3-5}
& & LSTM hidden units & $4, 6, \dots, 12$ & $4, 6, \dots, 30$ \\
\cline{3-5}
& & LSTM dropout rate & - & - \\
\cline{3-5}
& & Max gradient norm & $0.5, 1, 2$ & $0.5, 1, 2$ \\
\cline{3-5}
& & Early Stopping (min delta) & 0.0001 & 0.0001 \\
\cline{3-5}
& & Early Stopping (patience) & 30 & 30 \\
\hline
\multirow{6}{*}{Propensity History Network} 
& & LSTM layers & 1 & 1 \\
\cline{3-5}
& & Learning rate & $0.01, 0.005, 0.001, 0.0001$ & $0.01, 0.005, 0.001, 0.0001$ \\
\cline{3-5}
& & Batch size & $32, 64, 128$ & $64, 128, 256$ \\
\cline{3-5}
& & LSTM hidden units & $4, 6, \dots, 20$ & $4, 6, \dots, 30$ \\
\cline{3-5}
& & LSTM dropout rate & - & - \\
\cline{3-5}
& & Early Stopping (min delta) & 0.0001 & 0.0001 \\
\cline{3-5}
& & Early Stopping (patience) & 30 & 30 \\
\hline
\multirow{6}{*}{Encoder} 
& & LSTM layers & 1 & 1 \\
\cline{3-5}
& & Learning rate & $0.01, 0.005, 0.001, 0.0001$ & $0.01, 0.005, 0.001, 0.0001$ \\
\cline{3-5}
& & Batch size & $32, 64, 128$ & $32, 64, 128$ \\
\cline{3-5}
& & LSTM hidden units & $4, 6, \dots, 20$ & $4, 6, \dots, 30$ \\
\cline{3-5}
& & LSTM dropout rate & - & - \\
\cline{3-5}
& & Early Stopping (min delta) & 0.0001 & 0.0001 \\
\cline{3-5}
& & Early Stopping (patience) & 30 & 30 \\
\hline
\multirow{6}{*}{Decoder} 
& & LSTM layers & 1 & 1 \\
\cline{3-5}
& & Learning rate & $0.01, 0.005, 0.001, 0.0001$ & $0.01, 0.005, 0.001, 0.0001$ \\
\cline{3-5}
& & Batch size & $32, 64, 128$ & $128, 512, 1024$ \\
\cline{3-5}
& & LSTM hidden units & $4, 6, \dots, 20$ & $4, 6, \dots, 30$ \\
\cline{3-5}
& & LSTM dropout rate & - & - \\
\cline{3-5}
& & Max gradient norm & $0.5, 1, 2$ & $0.5, 1, 2$ \\
\cline{3-5}
& & Early Stopping (min delta) & 0.0001 & 0.0001 \\
\cline{3-5}
& & Early Stopping (patience) & 30 & 30 \\
\hline
\end{tabular}%
}
\end{table}

\begin{table}[!htbp]
\centering
\caption{Hyper-parameters search range for CRN}
\resizebox{0.7\textwidth}{!}{%
\begin{tabular}{|c|c|c|c|c|}
\hline
\textbf{Model} & \textbf{Sub-model} & \textbf{Hyperparameter} & \textbf{Cancer simulation} & \textbf{MIMIC III (SS)} \\
\hline
\multirow{8}{*}{CRN} 
& \multirow{8}{*}{Encoder} 
& LSTM layers & 1 & 1 \\
\cline{3-5}
& & Learning rate & $0.01, 0.005, 0.001, 0.0001$ & $0.01, 0.005, 0.001, 0.0001$ \\
\cline{3-5}
& & Batch size & $32, 64, 128$ & $32, 64, 128$ \\
\cline{3-5}
& & LSTM hidden units & $4, 6, \dots, 30$ & $4, 6, \dots, 30$ \\
\cline{3-5}
& & LSTM dropout rate & - & - \\
\cline{3-5}
& & BR size & $4, 6, \dots, 20$ & $4, 6, \dots, 30$ \\
\cline{3-5}
& & Early Stopping (min delta) & 0.0001 & 0.0001 \\
\cline{3-5}
& & Early Stopping (patience) & 30 & 30 \\
\hline
\multirow{7}{*}{Decoder} 
& & LSTM layers & 1 & 1 \\
\cline{3-5}
& & Learning rate & $0.01, 0.005, 0.001, 0.0001$ & $0.01, 0.005, 0.001, 0.0001$ \\
\cline{3-5}
& & Batch size & $128, 256, 512$ & $256, 512, 1024$ \\
\cline{3-5}
& & LSTM hidden units & $4, 6, \dots, 30$ & $4, 6, \dots, 30$ \\
\cline{3-5}
& & LSTM dropout rate & - & - \\
\cline{3-5}
& & BR size & $4, 6, \dots, 20$ & $4, 6, \dots, 30$ \\
\cline{3-5}
& & Early Stopping (min delta) & 0.0001 & 0.0001 \\
\cline{3-5}
& & Early Stopping (patience) & 30 & 30 \\
\hline
\end{tabular}%
}
\end{table}

\begin{table}[!htbp]
\centering
\caption{Hyper-parameters search range for G-Net}
\resizebox{0.7\textwidth}{!}{%
\begin{tabular}{|c|c|c|}
\hline
\textbf{Hyperparameter} & \textbf{Cancer simulation} & \textbf{MIMIC III (SS)} \\
\hline
LSTM layers & 1 & 1  \\
 %\cline{1-4}
Learning rate & $0.01,0.005, 0.001, 0.0001$ &  $0.01,0.005, 0.001, 0.0001$ \\
 % %\cline{1-4}
Batch size & $32, 64, 128$ &  $32, 64, 128$ \\
 %\cline{1-4}
LSTM hidden units & $4,6, \dots, 30$ & $4,6, \dots, 30$ \\
%\cline{1-4}
FC hidden units & $4,6, \dots, 30$ &  $4,6, \dots, 30$ \\
 %\cline{1-4}
LSTM dropout rate & - & -  \\
 %\cline{1-4}
R size& $4,6, \dots, 20$& $4,6, \dots, 30$ \\
 %\cline{1-4}
MC samples & 10 & 10 \\
Early Stopping (min delta)& 0.0001&  0.0001\\
Early Stopping (patience)& 30& 30 \\
\cline{1-3}
\end{tabular}%
}
\end{table}

\begin{table}[!htbp]
\centering
\caption{Hyper-parameters search range for Causal Transfomer}
\resizebox{0.7\textwidth}{!}{%
\begin{tabular}{|c|c|c|}
\hline
\textbf{Hyperparameter} & \textbf{Cancer simulation} & \textbf{MIMIC III (SS)} \\
\hline
Transformer blocks & 1 & 1 \\
 %\cline{1-4}
Learning rate & $0.01,0.005, 0.001, 0.0001$ & $0.01,0.005, 0.001, 0.0001$  \\
 %\cline{1-4}
Batch size & $32, 64, 128$ & $32, 64, 128$ \\
 %\cline{1-4}
Attention heads & $2$ & $2$ \\
 %\cline{1-4}
Transformer units & $4,6, \dots, 20$& $4,6, \dots, 20$ \\
 %\cline{1-4}
LSTM dropout rate & -  & - \\
%\cline{1-4}
BR size& $4,6, \dots, 20$&  $4,6, \dots, 20$ \\
%\cline{1-4}
FC hidden units & $4,6, \dots, 20$&   $4,6, \dots, 20$ \\
Sequential dropout rate & $0.1,0.2, 0.3$& $0.1,0.2, 0.3$ \\
Max positional encoding & $15$& $15$  \\
Early Stopping (min delta)& 0.0001& 0.0001 \\
Early Stopping (patience)& 30& 30 \\
\cline{1-3}
\end{tabular}%
}
\end{table}

\begin{table}[!htbp]
\centering
\caption{Hyper-parameters search range for Causal CPC}
\resizebox{0.9\textwidth}{!}{%
\begin{tabular}{|c|c|c|c|c|}
\hline
\textbf{Model} & \textbf{Sub-model} & \textbf{Hyperparameter} & \textbf{Cancer simulation} & \textbf{MIMIC III (SS)} \\
\hline
\multirow{9}{*}{Causal CPC} & \multirow{9}{*}{Encoder} 
& GRU layers & 1 & 1 \\
\cline{3-5}
& & Learning rate & $0.01, 0.005, 0.001, 0.0001$ & $0.01, 0.005, 0.001, 0.0001$ \\
\cline{3-5}
& & Batch size & $32, 64, 128$ & $64, 128, 256$ \\
\cline{3-5}
& & GRU hidden units & $4, 6, \dots, 30$ & $4, 6, \dots, 30$ \\
\cline{3-5}
& & GRU dropout rate & - & - \\
\cline{3-5}
& & Local features (LF) size & $4, 6, \dots, 20$ & $4, 6, \dots, 20$ \\
\cline{3-5}
& & Context Representation (CR) size & $4, 6, \dots, 20$ & $4, 6, \dots, 20$ \\
\cline{3-5}
& & Early Stopping (min delta) & 0.001 & 0.001 \\
\cline{3-5}
& & Early Stopping (patience) & 100 & 100 \\
\hline
\multirow{13}{*}{Decoder} & & GRU layers & 1 & 1 \\
\cline{3-5}
& & Learning rate (decoder w/o treatment sub-network) & $0.01, 0.005, 0.001, 0.0001$ & $0.01, 0.005, 0.001, 0.0001$ \\
\cline{3-5}
& & Learning rate (encoder fine-tuning) & $0.001, 0.0005, 0.0001, 0.00005$ & $0.001, 0.0005, 0.0001, 0.00005$ \\
\cline{3-5}
& & Learning rate (treatment sub-network) & $0.05, 0.01, 0.005, 0.0001$ & $0.05, 0.01, 0.005, 0.0001$ \\
\cline{3-5}
& & Batch size & $32, 64, 128$ & $32, 64, 128$ \\
\cline{3-5}
& & GRU hidden units & CR size & CR size \\
\cline{3-5}
& & GRU dropout rate & - & - \\
\cline{3-5}
& & BR size & CR size & CR size \\
\cline{3-5}
& & GRU layers (Treat Encoder) & 1 & 1 \\
\cline{3-5}
& & GRU hidden units (Treat Encoder) & 6 & 6 \\
\cline{3-5}
& & FC hidden units & $4, 6, \dots, 20$ & $4, 6, \dots, 20$ \\
\cline{3-5}
& & Random time indices (m) & 10\% & 10\% \\
\cline{3-5}
& & Early Stopping (min delta) & 0.001 & 0.001 \\
\cline{3-5}
& & Early Stopping (patience) & 50 & 50 \\
\hline
\end{tabular}%
}
\end{table}

\begin{table}[!htbp]
\centering
\caption{Hyper-parameters search range for Causal CPC}
\resizebox{0.9\textwidth}{!}{%
\begin{tabular}{|c|c|c|c|c|}
\hline
\textbf{Model} & \textbf{Sub-model} & \textbf{Hyperparameter} & \textbf{Cancer simulation} & \textbf{MIMIC III (SS)} \\
\hline
\multirow{9}{*}{Causal CPC} & \multirow{9}{*}{Encoder} 
& GRU layers & 1 & 1 \\
\cline{3-5}
& & Learning rate & $0.01, 0.005, 0.001, 0.0001$ & $0.01, 0.005, 0.001, 0.0001$ \\
\cline{3-5}
& & Batch size & $32, 64, 128$ & $64, 128, 256$ \\
\cline{3-5}
& & GRU hidden units & $4, 6, \dots, 30$ & $4, 6, \dots, 30$ \\
\cline{3-5}
& & GRU dropout rate & - & - \\
\cline{3-5}
& & Local features (LF) size & $4, 6, \dots, 20$ & $4, 6, \dots, 20$ \\
\cline{3-5}
& & Context Representation (CR) size & $4, 6, \dots, 20$ & $4, 6, \dots, 20$ \\
\cline{3-5}
& & Early Stopping (min delta) & 0.001 & 0.001 \\
\cline{3-5}
& & Early Stopping (patience) & 100 & 100 \\
\hline
\multirow{13}{*}{Decoder} & & GRU layers & 1 & 1 \\
\cline{3-5}
& & Learning rate (decoder w/o treatment sub-network) & $0.01, 0.005, 0.001, 0.0001$ & $0.01, 0.005, 0.001, 0.0001$ \\
\cline{3-5}
& & Learning rate (encoder fine-tuning) & $0.001, 0.0005, 0.0001, 0.00005$ & $0.001, 0.0005, 0.0001, 0.00005$ \\
\cline{3-5}
& & Learning rate (treatment sub-network) & $0.05, 0.01, 0.005, 0.0001$ & $0.05, 0.01, 0.005, 0.0001$ \\
\cline{3-5}
& & Batch size & $32, 64, 128$ & $32, 64, 128$ \\
\cline{3-5}
& & GRU hidden units & CR size & CR size \\
\cline{3-5}
& & GRU dropout rate & - & - \\
\cline{3-5}
& & BR size & CR size & CR size \\
\cline{3-5}
& & GRU layers (Treat Encoder) & 1 & 1 \\
\cline{3-5}
& & GRU hidden units (Treat Encoder) & 6 & 6 \\
\cline{3-5}
& & FC hidden units & $4, 6, \dots, 20$ & $4, 6, \dots, 20$ \\
\cline{3-5}
& & Random time indices (m) & 10\% & 10\% \\
\cline{3-5}
& & Early Stopping (min delta) & 0.001 & 0.001 \\
\cline{3-5}
& & Early Stopping (patience) & 50 & 50 \\
\hline
\end{tabular}%
}
\end{table}
%%%%%%%%%%%%%%%%%%%%%%%%%%%%%%%%%%%%%%%%%%%%%%%%%%%%%%%%%%%%

\newpage

\section*{NeurIPS Paper Checklist}

\begin{enumerate}

\item {\bf Claims}
    \item[] Question: Do the main claims made in the abstract and introduction accurately reflect the paper's contributions and scope?
    \item[] Answer: \answerYes{} 
    \item[] Justification: The paper introduces a novel method combining RNNs with CPC for long-term counterfactual regression, leveraging MI objectives for efficient representation learning, and demonstrates state-of-the-art results on both synthetic and real-world data. These claims are substantiated by the detailed theoretical \ref{sect: causal_cpc_def} and empirical analyses \ref{sect: experiments} provided in the paper.
    \item[] Guidelines:
    \begin{itemize}
        \item The answer NA means that the abstract and introduction do not include the claims made in the paper.
        \item The abstract and/or introduction should clearly state the claims made, including the contributions made in the paper and important assumptions and limitations. A No or NA answer to this question will not be perceived well by the reviewers. 
        \item The claims made should match theoretical and experimental results, and reflect how much the results can be expected to generalize to other settings. 
        \item It is fine to include aspirational goals as motivation as long as it is clear that these goals are not attained by the paper. 
    \end{itemize}

\item {\bf Limitations}
    \item[] Question: Does the paper discuss the limitations of the work performed by the authors?
    \item[] Answer: \answerYes{}{} % Replace by \answerYes{}, \answerNo{}, or \answerNA{}.
    \item[] Justification: While Causal CPC excels at large horizon predictions, it does not outperform SOTA models on short-term predictions (Table \ref{tab: perf_mimic_few_data}). %Secondly, it would be of theoretical interest to derive generalization bounds for counterfactual outcomes to clearly highlight the benefits of contrastive learning and the balancing of representations in the downstream task of counterfactual regression over time.
    \item[] Guidelines:
    \begin{itemize}
        \item The answer NA means that the paper has no limitation while the answer No means that the paper has limitations, but those are not discussed in the paper. 
        \item The authors are encouraged to create a separate "Limitations" section in their paper.
        \item The paper should point out any strong assumptions and how robust the results are to violations of these assumptions (e.g., independence assumptions, noiseless settings, model well-specification, asymptotic approximations only holding locally). The authors should reflect on how these assumptions might be violated in practice and what the implications would be.
        \item The authors should reflect on the scope of the claims made, e.g., if the approach was only tested on a few datasets or with a few runs. In general, empirical results often depend on implicit assumptions, which should be articulated.
        \item The authors should reflect on the factors that influence the performance of the approach. For example, a facial recognition algorithm may perform poorly when image resolution is low or images are taken in low lighting. Or a speech-to-text system might not be used reliably to provide closed captions for online lectures because it fails to handle technical jargon.
        \item The authors should discuss the computational efficiency of the proposed algorithms and how they scale with dataset size.
        \item If applicable, the authors should discuss possible limitations of their approach to address problems of privacy and fairness.
        \item While the authors might fear that complete honesty about limitations might be used by reviewers as grounds for rejection, a worse outcome might be that reviewers discover limitations that aren't acknowledged in the paper. The authors should use their best judgment and recognize that individual actions in favor of transparency play an important role in developing norms that preserve the integrity of the community. Reviewers will be specifically instructed to not penalize honesty concerning limitations.
    \end{itemize}

\item {\bf Theory Assumptions and Proofs}
    \item[] Question: For each theoretical result, does the paper provide the full set of assumptions and a complete (and correct) proof?
    \item[] Answer: \answerYes{}{} % Replace by \answerYes{}, \answerNo{}, or \answerNA{}.
    \item[] Justification: Detailed proofs are provided in Appendix \ref{sect: proofs}.
    \item[] Guidelines:
    \begin{itemize}
        \item The answer NA means that the paper does not include theoretical results. 
        \item All the theorems, formulas, and proofs in the paper should be numbered and cross-referenced.
        \item All assumptions should be clearly stated or referenced in the statement of any theorems.
        \item The proofs can either appear in the main paper or the supplemental material, but if they appear in the supplemental material, the authors are encouraged to provide a short proof sketch to provide intuition. 
        \item Inversely, any informal proof provided in the core of the paper should be complemented by formal proofs provided in appendix or supplemental material.
        \item Theorems and Lemmas that the proof relies upon should be properly referenced. 
    \end{itemize}

    \item {\bf Experimental Result Reproducibility}
    \item[] Question: Does the paper fully disclose all the information needed to reproduce the main experimental results of the paper to the extent that it affects the main claims and/or conclusions of the paper (regardless of whether the code and data are provided or not)?
    \item[] Answer: \answerYes{}{} % Replace by \answerYes{}, \answerNo{}, or \answerNA{}.
    \item[] Justification:  All the experimental details related to the training and evaluation protocol (\ref{sect: exp_protocol}), datasets descriptions (\ref{subsect: cancer_sim_descrip} and \ref{subsect: details_sim_mimic}) are provided. A detailed description of the Causal CPC architecture is provided in \ref{sect: archi_ccpc}. Pseudo-algorithms for both the encoder and the decoder are also provided in \ref{sect: alg_ccpc}. Code is provided in the supplementary material.

    \item[] Guidelines:
    \begin{itemize}
        \item The answer NA means that the paper does not include experiments.
        \item If the paper includes experiments, a No answer to this question will not be perceived well by the reviewers: Making the paper reproducible is important, regardless of whether the code and data are provided or not.
        \item If the contribution is a dataset and/or model, the authors should describe the steps taken to make their results reproducible or verifiable. 
        \item Depending on the contribution, reproducibility can be accomplished in various ways. For example, if the contribution is a novel architecture, describing the architecture fully might suffice, or if the contribution is a specific model and empirical evaluation, it may be necessary to either make it possible for others to replicate the model with the same dataset, or provide access to the model. In general. releasing code and data is often one good way to accomplish this, but reproducibility can also be provided via detailed instructions for how to replicate the results, access to a hosted model (e.g., in the case of a large language model), releasing of a model checkpoint, or other means that are appropriate to the research performed.
        \item While NeurIPS does not require releasing code, the conference does require all submissions to provide some reasonable avenue for reproducibility, which may depend on the nature of the contribution. For example
        \begin{enumerate}
            \item If the contribution is primarily a new algorithm, the paper should make it clear how to reproduce that algorithm.
            \item If the contribution is primarily a new model architecture, the paper should describe the architecture clearly and fully.
            \item If the contribution is a new model (e.g., a large language model), then there should either be a way to access this model for reproducing the results or a way to reproduce the model (e.g., with an open-source dataset or instructions for how to construct the dataset).
            \item We recognize that reproducibility may be tricky in some cases, in which case authors are welcome to describe the particular way they provide for reproducibility. In the case of closed-source models, it may be that access to the model is limited in some way (e.g., to registered users), but it should be possible for other researchers to have some path to reproducing or verifying the results.
        \end{enumerate}
    \end{itemize}

\item {\bf Open access to data and code}
    \item[] Question: Does the paper provide open access to the data and code, with sufficient instructions to faithfully reproduce the main experimental results, as described in supplemental material?
    \item[] Answer: \answerYes{} % Replace by \answerYes{}, \answerNo{}, or \answerNA{}.
    \item[] Justification: Code is provided in the supplementary material at submission.
    \item[] Guidelines:
    \begin{itemize}
        \item The answer NA means that paper does not include experiments requiring code.
        \item Please see the NeurIPS code and data submission guidelines (\url{https://nips.cc/public/guides/CodeSubmissionPolicy}) for more details.
        \item While we encourage the release of code and data, we understand that this might not be possible, so “No” is an acceptable answer. Papers cannot be rejected simply for not including code, unless this is central to the contribution (e.g., for a new open-source benchmark).
        \item The instructions should contain the exact command and environment needed to run to reproduce the results. See the NeurIPS code and data submission guidelines (\url{https://nips.cc/public/guides/CodeSubmissionPolicy}) for more details.
        \item The authors should provide instructions on data access and preparation, including how to access the raw data, preprocessed data, intermediate data, and generated data, etc.
        \item The authors should provide scripts to reproduce all experimental results for the new proposed method and baselines. If only a subset of experiments are reproducible, they should state which ones are omitted from the script and why.
        \item At submission time, to preserve anonymity, the authors should release anonymized versions (if applicable).
        \item Providing as much information as possible in supplemental material (appended to the paper) is recommended, but including URLs to data and code is permitted.
    \end{itemize}

\item {\bf Experimental Setting/Details}
    \item[] Question: Does the paper specify all the training and test details (e.g., data splits, hyperparameters, how they were chosen, type of optimizer, etc.) necessary to understand the results?
    \item[] Answer: \answerYes{}{} % Replace by \answerYes{}, \answerNo{}, or \answerNA{}.
    \item[] Justification: Models' hyperparameter search range is provided in Appendix \ref{sect: huperparams_details}, the selection method is provided at the beginning of Section \ref{sect: experiments}, and remaining details about training and testing are provided in the experimental protocol (Appendix \ref{sect: exp_protocol}).

    \item[] Guidelines:
    \begin{itemize}
        \item The answer NA means that the paper does not include experiments.
        \item The experimental setting should be presented in the core of the paper to a level of detail that is necessary to appreciate the results and make sense of them.
        \item The full details can be provided either with the code, in appendix, or as supplemental material.
    \end{itemize}

\item {\bf Experiment Statistical Significance}
    \item[] Question: Does the paper report error bars suitably and correctly defined or other appropriate information about the statistical significance of the experiments?
    \item[] Answer: \answerNo{} % Replace by \answerYes{}, \answerNo{}, or \answerNA{}.
    \item[] Justification: It is computationally demanding to compute errors bars for all neural network models in our benchmark. However, we reported the mean and standard deviation of metrics for each experiment, computed from multiple runs.
    \item[] Guidelines:
    \begin{itemize}
        \item The answer NA means that the paper does not include experiments.
        \item The authors should answer "Yes" if the results are accompanied by error bars, confidence intervals, or statistical significance tests, at least for the experiments that support the main claims of the paper.
        \item The factors of variability that the error bars are capturing should be clearly stated (for example, train/test split, initialization, random drawing of some parameter, or overall run with given experimental conditions).
        \item The method for calculating the error bars should be explained (closed form formula, call to a library function, bootstrap, etc.)
        \item The assumptions made should be given (e.g., Normally distributed errors).
        \item It should be clear whether the error bar is the standard deviation or the standard error of the mean.
        \item It is OK to report 1-sigma error bars, but one should state it. The authors should preferably report a 2-sigma error bar than state that they have a 96\% CI, if the hypothesis of Normality of errors is not verified.
        \item For asymmetric distributions, the authors should be careful not to show in tables or figures symmetric error bars that would yield results that are out of range (e.g. negative error rates).
        \item If error bars are reported in tables or plots, The authors should explain in the text how they were calculated and reference the corresponding figures or tables in the text.
    \end{itemize}

\item {\bf Experiments Compute Resources}
    \item[] Question: For each experiment, does the paper provide sufficient information on the computer resources (type of compute workers, memory, time of execution) needed to reproduce the experiments?
    \item[] Answer: \answerYes{}{} % Replace by \answerYes{}, \answerNo{}, or \answerNA{}.
    \item[] Justification: We provided the computation resources used in the title of Table \ref{tab: complexity_running_time_cancer} as well as the time of execution in the same table for cancer simulation data. A similar table is provided in Appendix \ref{subsect: detailed_results_mimic} for MIMIC III data (Table \ref{tab: complexity_running_time_mimic}).
    \item[] Guidelines:
    \begin{itemize}
        \item The answer NA means that the paper does not include experiments.
        \item The paper should indicate the type of compute workers CPU or GPU, internal cluster, or cloud provider, including relevant memory and storage.
        \item The paper should provide the amount of compute required for each of the individual experimental runs as well as estimate the total compute. 
        \item The paper should disclose whether the full research project required more compute than the experiments reported in the paper (e.g., preliminary or failed experiments that didn't make it into the paper). 
    \end{itemize}
    
\item {\bf Code Of Ethics}
    \item[] Question: Does the research conducted in the paper conform, in every respect, with the NeurIPS Code of Ethics \url{https://neurips.cc/public/EthicsGuidelines}?
    \item[] Answer: \answerYes{} % Replace by \answerYes{}, \answerNo{}, or \answerNA{}.
    \item[] Justification:  Authors acknowledge conducting research in conformity with the NeurIPS Code of Ethics.
    \item[] Guidelines:
    \begin{itemize}
        \item The answer NA means that the authors have not reviewed the NeurIPS Code of Ethics.
        \item If the authors answer No, they should explain the special circumstances that require a deviation from the Code of Ethics.
        \item The authors should make sure to preserve anonymity (e.g., if there is a special consideration due to laws or regulations in their jurisdiction).
    \end{itemize}

\item {\bf Broader Impacts}
    \item[] Question: Does the paper discuss both potential positive societal impacts and negative societal impacts of the work performed?
    \item[] Answer: \answerYes{} % Replace by \answerYes{}, \answerNo{}, or \answerNA{}.
    \item[] Justification: An impact statement is included in Appendix \ref{sect: impact}.
    \item[] Guidelines:
    \begin{itemize}
        \item The answer NA means that there is no societal impact of the work performed.
        \item If the authors answer NA or No, they should explain why their work has no societal impact or why the paper does not address societal impact.
        \item Examples of negative societal impacts include potential malicious or unintended uses (e.g., disinformation, generating fake profiles, surveillance), fairness considerations (e.g., deployment of technologies that could make decisions that unfairly impact specific groups), privacy considerations, and security considerations.
        \item The conference expects that many papers will be foundational research and not tied to particular applications, let alone deployments. However, if there is a direct path to any negative applications, the authors should point it out. For example, it is legitimate to point out that an improvement in the quality of generative models could be used to generate deepfakes for disinformation. On the other hand, it is not needed to point out that a generic algorithm for optimizing neural networks could enable people to train models that generate Deepfakes faster.
        \item The authors should consider possible harms that could arise when the technology is being used as intended and functioning correctly, harms that could arise when the technology is being used as intended but gives incorrect results, and harms following from (intentional or unintentional) misuse of the technology.
        \item If there are negative societal impacts, the authors could also discuss possible mitigation strategies (e.g., gated release of models, providing defenses in addition to attacks, mechanisms for monitoring misuse, mechanisms to monitor how a system learns from feedback over time, improving the efficiency and accessibility of ML).
    \end{itemize}
    
\item {\bf Safeguards}
    \item[] Question: Does the paper describe safeguards that have been put in place for responsible release of data or models that have a high risk for misuse (e.g., pretrained language models, image generators, or scraped datasets)?
    \item[] Answer: \answerNA{}{} % Replace by \answerYes{}, \answerNo{}, or \answerNA{}.
    \item[] Justification:
    \item[] Guidelines:
    \begin{itemize}
        \item The answer NA means that the paper poses no such risks.
        \item Released models that have a high risk for misuse or dual-use should be released with necessary safeguards to allow for controlled use of the model, for example by requiring that users adhere to usage guidelines or restrictions to access the model or implementing safety filters. 
        \item Datasets that have been scraped from the Internet could pose safety risks. The authors should describe how they avoided releasing unsafe images.
        \item We recognize that providing effective safeguards is challenging, and many papers do not require this, but we encourage authors to take this into account and make a best faith effort.
    \end{itemize}

\item {\bf Licenses for existing assets}
    \item[] Question: Are the creators or original owners of assets (e.g., code, data, models), used in the paper, properly credited and are the license and terms of use explicitly mentioned and properly respected?
    \item[] Answer: \answerYes{} % Replace by \answerYes{}, \answerNo{}, or \answerNA{}.
    \item[] Justification: Authors of models and datasets are appropriately cited in the paper in the introduction (\ref{sect: intro}) and experiment (\ref{sect: experiments}) sections. Original owners of some model implementations are properly credited in our code. 
    \item[] Guidelines:
    \begin{itemize}
        \item The answer NA means that the paper does not use existing assets.
        \item The authors should cite the original paper that produced the code package or dataset.
        \item The authors should state which version of the asset is used and, if possible, include a URL.
        \item The name of the license (e.g., CC-BY 4.0) should be included for each asset.
        \item For scraped data from a particular source (e.g., website), the copyright and terms of service of that source should be provided.
        \item If assets are released, the license, copyright information, and terms of use in the package should be provided. For popular datasets, \url{paperswithcode.com/datasets} has curated licenses for some datasets. Their licensing guide can help determine the license of a dataset.
        \item For existing datasets that are re-packaged, both the original license and the license of the derived asset (if it has changed) should be provided.
        \item If this information is not available online, the authors are encouraged to reach out to the asset's creators.
    \end{itemize}

\item {\bf New Assets}
    \item[] Question: Are new assets introduced in the paper well documented and is the documentation provided alongside the assets?
    \item[] Answer: \answerYes{} % Replace by \answerYes{}, \answerNo{}, or \answerNA{}.
    \item[] Justification: Code for Causal CPC is provided in the supplementary material.
    \item[] Guidelines:
    \begin{itemize}
        \item The answer NA means that the paper does not release new assets.
        \item Researchers should communicate the details of the dataset/code/model as part of their submissions via structured templates. This includes details about training, license, limitations, etc. 
        \item The paper should discuss whether and how consent was obtained from people whose asset is used.
        \item At submission time, remember to anonymize your assets (if applicable). You can either create an anonymized URL or include an anonymized zip file.
    \end{itemize}

\item {\bf Crowdsourcing and Research with Human Subjects}
    \item[] Question: For crowdsourcing experiments and research with human subjects, does the paper include the full text of instructions given to participants and screenshots, if applicable, as well as details about compensation (if any)? 
    \item[] Answer: \answerNA{}{} % Replace by \answerYes{}, \answerNo{}, or \answerNA{}.
    \item[] Justification:
    \item[] Guidelines:
    \begin{itemize}
        \item The answer NA means that the paper does not involve crowdsourcing nor research with human subjects.
        \item Including this information in the supplemental material is fine, but if the main contribution of the paper involves human subjects, then as much detail as possible should be included in the main paper. 
        \item According to the NeurIPS Code of Ethics, workers involved in data collection, curation, or other labor should be paid at least the minimum wage in the country of the data collector. 
    \end{itemize}

\item {\bf Institutional Review Board (IRB) Approvals or Equivalent for Research with Human Subjects}
    \item[] Question: Does the paper describe potential risks incurred by study participants, whether such risks were disclosed to the subjects, and whether Institutional Review Board (IRB) approvals (or an equivalent approval/review based on the requirements of your country or institution) were obtained?
    \item[] Answer: \answerNA{}{} % Replace by \answerYes{}, \answerNo{}, or \answerNA{}.
    \item[] Justification:
    \item[] Guidelines:
    \begin{itemize}
        \item The answer NA means that the paper does not involve crowdsourcing nor research with human subjects.
        \item Depending on the country in which research is conducted, IRB approval (or equivalent) may be required for any human subjects research. If you obtained IRB approval, you should clearly state this in the paper. 
        \item We recognize that the procedures for this may vary significantly between institutions and locations, and we expect authors to adhere to the NeurIPS Code of Ethics and the guidelines for their institution. 
        \item For initial submissions, do not include any information that would break anonymity (if applicable), such as the institution conducting the review.
    \end{itemize}

\end{enumerate}

\end{document}